\begin{document} 
\title{Out-of-sample Extension for Latent Position Graphs} 

\author{\name Minh Tang \email{mtang10@jhu.edu} \\ \name Youngser Park
  \email{youngser@jhu.edu} \\ \name Carey~E.~Priebe \email{cep@jhu.edu} \\ \addr Department of Applied Mathematics and
  Statistics \\ Johns Hopkins University \\ 3400 N. Charles St. \\ Baltimore, MD 21218,
  USA.}

\editor{}

\maketitle
\thispagestyle{empty}
\begin{abstract}%
We consider the problem of vertex classification for
graphs constructed from the latent position model. It was shown
previously that the approach of 
embedding the graphs into some Euclidean space followed
by classification in that space can yields a universally consistent
vertex classifier. However, a major technical difficulty
of the approach arises when classifying unlabeled out-of-sample
vertices without including them in the embedding stage. In this paper,
we studied the out-of-sample extension for the graph embedding step
and its impact on the subsequent inference tasks. We show that, under
the latent position graph model and for sufficiently large $n$, the
mapping of the out-of-sample vertices is close to its true latent
position. We then demonstrate that successful inference for the
out-of-sample vertices is possible.
\end{abstract}

\begin{keywords}
  out-of-sample extension, inhomogeneous random graphs, latent position model, convergence of
  eigenvectors 
\end{keywords}

\section{Introduction}
The classical statistical pattern recognition setting involves
\begin{equation*}
  (X,Y), (X_1, Y_1), (X_2, Y_2), \dots, (X_n, Y_n)
  \overset{\mathrm{i.i.d}}{\sim} F_{X, Y}
\end{equation*} 
where the $X_i \in \mathcal{X} \subset \mathbb{R}^{d}$
are observed feature vectors and the $Y_{i} \in \mathcal{Y} =
\{-1,1\}$ are observed class labels, for some probability distribution
$F_{X,Y}$ on $\mathcal{X} \times
\mathcal{Y}$. This setting has been extensively investigated and many
important and interesting theoretical concepts and results, e.g.,
universal consistency, structural complexities, and arbitrary slow
convergence are available. See, e.g., \cite{devroye1996probabilistic}
for a comprehensive overview.

Now, suppose that the feature vectors are unobserved, and we
observe instead a graph $G$ on $n+1$ vertices. Suppose also that $G$
is constructed in a manner such that there is a one-to-one
relationship between the vertices of $G$ and the feature vectors $X,
X_1, \dots, X_{n}$. The question of classifying the vertices based
on $G$ and the observed labels $Y_i$ then arises
naturally.

\begin{algorithm*}
  \begin{algorithmic} 
    \STATE \textbf{Input}: $\mathbf{A}\in
    \{0,1\}^{n\times n}$, training set $\mathcal{T} \subset [n] = 
    \{1,2,\dots,n\}$ and labels $\mathbf{Y}_{\mathcal{T}} = \{Y_i \colon i \in \mathcal{T}\}$. 
    \STATE \textbf{Output}: Class labels $\{ \hat{Y}_{j} \colon j \in [n] \setminus
    \mathcal{T}\}$. 
    \STATE \textit{Step 1}: Compute the
    eigen-decomposition of $\mathbf{A} =
    \mathbf{U}\mathbf{S}\mathbf{U}^T$. 
    \STATE \textit{Step 2}: Let $d$ be the ``elbow'' in
    the scree plot of $\mathbf{A}$, $\mathbf{S}_{\mathbf{A}}$ the
    diagonal matrix of the top $d$ eigenvalues of $\mathbf{A}$ and 
    $\mathbf{U}_{\mathbf{A}}$ the corresponding columns of
    $\mathbf{U}$.  
    \STATE \textit{Step 3}: Define
    $\mathbf{Z}$ to be 
    $\mathbf{U}_{\mathbf{A}}
    \mathbf{S}_{\mathbf{A}}^{1/2}$. Denote by $Z_i$ the $i$-th row of
    $\mathbf{Z}$. Define $\mathbf{Z}_{\mathcal{T}}$
    to be the rows of $\mathbf{Z}$ corresponding to the index set
    $\mathcal{T}$. $\mathbf{Z}$ is called the adjacency spectral
    embedding of $\mathbf{A}$. 
    \STATE
    \textit{Step 4}: Find a \emph{linear} classifier
    $\tilde{g}_n$ that minimizes the empirical
    $\varphi$-loss when trained on $(\mathbf{Z}_{\mathcal{T}},
    \mathbf{Y}_{\mathcal{T}})$ where $\varphi$ is a \emph{convex} loss
    function that is a surrogate for $0-1$ loss. 
    \STATE \textit{Step 5}: Apply $\tilde{g}_n$ on the $\{ Z_j \colon j \in [n]
    \setminus \mathcal{T}\}$ to obtain the $\{\hat{Y}_j \colon j \in
    [n] \setminus \mathcal{T}\}$. 
\end{algorithmic}
\caption{Vertex classifier on graphs}
\label{alg:main}
\end{algorithm*} 

A general approach to this classification problem is illustrated by
Algorithm~\ref{alg:main} wherein inference, e.g., classification or
clustering, proceeds by first embedding the graph $G$ into some
Euclidean space $\mathbb{R}^{d}$ followed by inference in that
space. This approach is well-represented in the literature of
multidimensional scaling, spectral clustering, and manifold
learning. The approach's popularity is due partly to its simplicity,
as after the embedding step, the vertices of $G$ are now points in
$\mathbb{R}^{d}$ and classification or clustering can proceed in an
almost identical manner to that of the classical setting, with a
plethora of well-established and robust inference procedures
available. In addition, theoretical justifications for the embedding
step are also available. For example in the spectral clustering and
manifold learning literature, the embedding step is often based on the
spectral decomposition of the (combinatorial or normalized) Laplacians
matrices of the graph. It can then be shown that, under mild
conditions on the construction of $G$, the Laplacian matrices converge
in some sense to the corresponding Laplace-Beltrami operators on the
domain. Thus, the eigenvalues and eigenvectors of the graph Laplacians
converge to the eigenvalues and eigenfunctions of the corresponding
operator. See for example
\cite{luxburg08:_consis,hein07:_conver_laplac,luxburg07,coifman06:_diffus_maps,belkin05:_towar_laplac,hein5:_from_laplac,singer06:_from_laplac,rosasco10:_integ_operat}
and the references therein for a survey of the results. 

The above cited results suggest that the embedding is conducive to the
subsequent inference task, but as they are general convergence results
and do not explicitly consider the subsequent inference problem, they
do not directly demonstrate that inference using the embeddings are
meaningful. Recently, there has been investigations that coupled the
embedding step with the subsequent inference step for several
widely-used random models for constructing $G$. For example,
\cite{rohe2011spectral,sussman12,fishkind2012consistent,chaudhuri12:_spect}
showed that clustering using the embeddings can be consistent for
graphs constructed based on the stochastic block model
\citep{Holland1983}, the random dot product model
\citep{young2007random}, and the extended partition model
\citep{karrer11:_stoch}. In related works,
\citet{sussman12:_univer,tangs.:_univer} showed that one can obtain
universally consistent vertex classification for graphs constructed based
on the random dot product model or its generalization, the latent
position model \citep{Hoff2002}.
However, a major technical difficulty of the approach arises when one
tries to use it to classify unlabeled \emph{out-of-sample}
vertices without including them in the embedding stage. A possible
solution is to recompute the embedding for each new vertex. However,
as many of the popular embedding methods are spectral in
nature, e.g., classical multidimensional scaling
\citep{torgersen52:_multid}, Isomap
\citep{tenebaum00:_global_geomet_framew_nonlin_dimen_reduc}, Laplacian
eigenmaps \citep{belkin03:_laplac} and diffusion maps
\citep{coifman06:_diffus_maps}, the computational costs for each new embedding
is of order $O(n^3)$, making this solution computationally
expensive. To circumvent this technical difficulty, 
out-of-sample extensions for many of the popular embedding methods
such as those listed above have been devised, see e.g.
\citet{faloutsos95,platt05:_fastm_metric_mds_nyst,bengio04:_out_lle_isomap_mds_eigen,williams01:_using_nystr,silva03:_global,wang99:_evaluat,trosset08}. In
these out-of-sample extensions, the embedding for the in-sample
points is kept fixed and the out-of-sample vertices are
inserted into the configuration of the in-sample points. The
computational costs are thus much less, e.g., linear in the number of
in-sample vertices for each insertion of an out-of-sample vertex into
the existing configuration. 

In this paper, we study the out-of-sample extension for the
embedding step in Algorithm~\ref{alg:main} and its impact on the
subsequent inference tasks. In particular we show
that, under the latent position graph model and for sufficiently
large $n$, the mapping of the out-of-sample vertices is close to its
true latent position. This suggests that inference for the
out-of-sample vertices is possible.

The structure of our paper is as follows. We introduce the framework
of latent position graphs in \S~\ref{sec:framework}. We describe the
out-of-sample extension for the adjacency spectral embedding and
analyze its properties in \S~\ref{sec:out-sample-estim}. In
\S~\ref{sec:experimental-results}, we investigate via simulation the
implications of performing inference using these out-of-sample
embeddings. We conclude the paper with discussion of related work, how
the results presented herein can be extended, and other implications.
\section{Framework}
\label{sec:framework} Let $\mathcal{X}$ be a compact metric space and
let $\kappa \colon \mathcal{X} \times \mathcal{X} \mapsto [0,1]$ be a
continuous positive definite kernel on $\mathcal{X}$. Let $F$ be a
probability measure on the Borel $\sigma$-field of $\mathcal{X}$. Now,
for a given $n$, let $X_1, X_2, \dots, X_n
\overset{\mathrm{i.i.d}}{\sim} F$. Let $\rho_n \in (0,1)$ be arbitrary
($\rho_n$ can depends on $n$). Define $\mathbf{K} = (\rho_n
\kappa(X_i,X_j))_{i,j=1}^{n}$.  Let $\mathbf{A}$ be a symmetric,
hollow, random binary matrix where the entries $\{\mathbf{A}_{ij}\}_{i
< j}$ of $\mathbf{A}$ are \emph{conditionally independent} Bernoulli
random variables given the $\{X_i\}_{i=1}^{n}$, with
$\Pr[\mathbf{A}_{ij} = 1] = \mathbf{K}_{ij} = \rho_n \kappa(X_i, X_j)$
for all $i,j \in \{1,2,\dots,n\}$, $i < j$. A graph $G$ whose
adjacency matrix $\mathbf{A}$ is constructed as above is an instance
of a latent position graph. The factor $\rho_n$ controls the sparsity
of the resulting graph. For example, if $\kappa > 0$ on $\mathcal{X}
\times \mathcal{X}$, then $\rho_n = (\log{n})/n$ leads to sparse,
connected graphs almost surely, $\rho_n = 1/n$ leads to graphs with a
single giant connected component, and $\rho_n = C > 0$ for some fixed
$C$ leads to dense graphs. We will denote by $G \sim
\mathrm{LPM}(\mathcal{X}, F, \kappa, \rho_n)$ an instance of a latent
position graph on $\mathcal{X}$ with distribution $F$, link function
$\kappa$, and sparsity factor $\rho_n$. We shall assume throughout
this paper that $n \rho_{n} = \omega(\log{n})$ for some $k \geq  
1$. That is, the expected average degree of $\mathbf{A}$ grows at least as fast
as $\omega(\log{n})$.     

An example of a latent position graph model is the random dot product
graph (RDPG) model of \cite{young2007random}. In the RDPG model,
$\mathcal{X}$ is taken to be the unit simplex in $\mathbb{R}^{d}$ 
and the link function $\kappa$ is the Euclidean inner
product. One can then take $F$ to be a Dirichlet distribution on the
unit simplex. Another example of a latent position graph model takes
$\mathcal{X}$ as a compact subset of $\mathbb{R}^{d}$ and the link
function $\kappa$ is a radial basis function, e.g., a Gaussian kernel
$\exp(-\|X_i - X_j\|^{2})$. This model is similar to the method of
constructing graphs based on point clouds in $\mathbb{R}^{d}$ in the
manifold learning literature. The small difference is that in the case
presented here, the Gaussian kernel is used for generating the edges
probabilities in the Bernoulli trials, i.e., the edges are unweighted
but random, whereas in the manifold learning literature, the Gaussian
kernel is used to assign weights to the edges i.e., the edges are
weighted but deterministic.

The latent position graph model and the related latent space
approach \citep{Hoff2002} is widely used in network
analysis. It is a generalization of the stochastic block model (SBM)
\citep{Holland1983} and variants such as the degree-corrected SBM
\citep{karrer11:_stoch}, the mixed-membership SBM \citep{Airoldi2008}
and the random dot product graph model \citep{young2007random}. It is
also closely related to the inhomogeneous random graph model
\citep{bollobas07} and the exchangeable graph model
\citep{diaconis08:_graph_limit_exchan_random_graph}. 

We now define a feature map $\Phi \colon \mathcal{X} \mapsto l_2$ for
$\kappa$. $\Phi$ will serve as our \emph{canonical} feature map, i.e.,
our subsequent results for the out-of-sample extension are based on
bounds for the deviation of the out-of-sample embedding from the
canonical feature map representation, e.g.,
Theorem~\ref{thm:2}. The kernel $\kappa$ defines an integral
operator $\mathcal{K}$ on $L^{2}(\mathcal{X}, F)$, the space of
$F$-square-integrable functions on $\mathcal{X}$, via
\begin{equation}
  \label{eq:7}
  (\mathcal{K}g)(x) = \int_{\mathcal{X}} \kappa(x,x') g(x') F(dx'). 
\end{equation}
$\mathcal{K}$ is then a compact operator and is of trace class (see
e.g., Theorem 4.1 in \cite{blanchard07:_statis}). Let
$\{\lambda_{j}(\mathcal{K})\}$ be the set of eigenvalues of
$\mathcal{K}$ in non-increasing order. The $\{\lambda_j\}$ are non-negative
and discrete, and their only accumulation point is at $0$. Let
$\{\psi_j\}$ be a set of orthonormal eigenfunctions of $\mathcal{K}$
corresponding to the $\{\lambda_j(\mathcal{K})\}$. Then by Mercer's
representation theorem \citep{cucker12}, one can write
\begin{equation*}
  \kappa(x,x') = \sum_{j=1}^{\infty} \lambda_{j} \psi_{j}(x) \psi_{j}(x')
\end{equation*}
with the above sum converging absolutely and uniformly for each $x$
and $x'$ in $\mathrm{supp}(F) \times \mathrm{supp}(F)$. We define the
feature map $\Phi \colon \mathcal{X} \mapsto l_2$ via
\begin{equation}
  \label{eq:13}
  \Phi(x) = (\sqrt{\lambda}_{j} \psi_{j}(x) \colon j = 1,2,\dots).
\end{equation}
We define a related feature map $\Phi_{d} \colon \mathcal{X} \mapsto
\mathbb{R}^{d}$ for $d \geq 1$ by
\begin{equation}
  \label{eq:15}
   \Phi_d(x) = (\sqrt{\lambda}_{j} \psi_{j}(x) \colon j = 1,2,\dots,d).
\end{equation}
We will refer to $\Phi_{d}$ as the \emph{truncated} feature map or as
the truncation of $\Phi$. We note that the feature map $\Phi$ and
$\Phi_d$ are defined in terms of the spectrum and eigenfunctions of
$\mathcal{K}$ and thus do not depend on the scaling parameter
$\rho_n$. 

We conclude this section with some notations that will be used in the
remainder of the paper. Let us denote by $\mathcal{M}_{d}$ and
$\mathcal{M}_{d,n}$ the set of $d \times d$ matrices and $d \times n$
matrices on $\mathbb{R}$, respectively. For a given adjacency matrix
$\mathbf{A} \in \mathcal{M}_{n}$, let $\mathbf{U}
\mathbf{S} \mathbf{U}^{T} $ be
the eigen-decomposition of $\mathbf{A}$. For a given $d \geq 1$, let
$\mathbf{S}_\mathbf{A}\in \mathcal{M}_{d}$ be the diagonal
matrix comprising of the $d$ largest eigenvalues of $\mathbf{A}$ and
let $\mathbf{U}_\mathbf{A}\in\mathcal{M}_{n,d}$ be the matrix
comprising of the corresponding eigenvectors. The matrices
$\mathbf{S}_\mathbf{K}$ are $\mathbf{U}_\mathbf{K}$ are defined
similarly. For a matrix $\mathbf{M}$, $\| \mathbf{M} \|$ denotes the
spectral norm of $\mathbf{M}$ and $\| \mathbf{M} \|_{F}$ denotes the
the Frobenius norm of $\mathbf{M}$. For a vector $\bm{v} \in
\mathbb{R}^{n}$, $v_i$ denote the $i$-th component of $v$ and
$\|\bm{v}\|$ denotes the Euclidean norm of $v$.
\section{Out-of-sample extension}
We now introduce the out-of-sample extension for the adjacency
spectral embedding of Algorithm~\ref{alg:main}.
\label{sec:out-sample-estim} 
\begin{definition}
  \label{def:1} Suppose $\mathbf{A}$ is an instance of
$\mathrm{LPM}(\mathcal{X}, F, \kappa, \rho_n)$ on $n$ vertices. Let
$\mathbf{Z} = \mathbf{U_{\mathbf{A}}} \mathbf{S}_{\mathbf{A}}^{1/2}
\in \mathcal{M}_{n,d}$ and denote by $\mathbf{Z}^{\dagger} \in
\mathcal{M}_{d,n}$ the matrix $(\mathbf{Z}^{T} \mathbf{Z})^{-1}
\mathbf{Z}^{T}$; $\mathbf{Z}^{\dagger}$ is the Moore-Penrose
pseudo-inverse of $\mathbf{Z}$. Let $Z^{\dagger}_{i}$ be the $i$-th
column of $\mathbf{Z}^{\dagger}$. For a given $X \in \mathcal{X}$, let
$T_{n}(X;\{X_i\}_{i=1}^{n})$ be the (random) mapping defined by
\begin{equation}
  \label{eq:2}
  T_n(X) := T_n(X;\{X_i\}_{i=1}^{n}) := \sum_{i=1}^{n} \xi_i
  Z^{\dagger}_{i} = \mathbf{Z}^{\dagger}\bm{\xi} 
\end{equation} where $\bm{\xi}$ is a vector of independent Bernoulli
random variables with $\Pr[\xi_i = 1] = \rho_n \kappa(X,X_i)$. The map $T_n$
is the out-of-sample extension of $X$; that is, $T_n(X)$ extends the
embedding $\hat{X}_i$ for the sampled $\{X_i\}_{i=1}^{n}$ to any $X
\in \mathcal{X}$.
\end{definition}
We make some quick remarks regarding Definition~\ref{def:1}. First, we
note that the out-of-sample extension give rise to i.i.d. random
variables, i.e., if $X'_1, X'_2, \dots, X'_m$ are
i.i.d from $F$, then the $T_n(X'_1;\{X_i\}_{i=1}^{n}),
T_n(X'_2;\{X_i\}_{i=1}^{n}), \dots, T_n(X'_m;\{X_i\}_{i=1}^{n})$ are
i.i.d. random variables in $\mathbb{R}^{d}$. Secondly, 
$T_n(X;\{X_i\}_{i=1}^{n})$ is a random mapping for any given $X$, even
when conditioned on the $\{X_i\}$. The randomness of $T_n$ arises 
from the randomness in the adjacency matrix $\mathbf{A}$ induced by
the in-sample points $\{X_i\}_{i=1}^{n}$ as well as the randomness in
the Bernoulli random variables $\bm{\xi}$ used in Eq.~\eqref{eq:2}. 
Thirdly, Eq.~\eqref{eq:2} states that
the out-of-sample extension $T_n(X)$ of $X$ is the least square
solution to $\| \mathbf{Z} \zeta - \bm{\xi} \|$, i.e., $\mathbf{Z}
T_n(X)$ is the least square projection of the (random) vector
$\bm{\xi}$ onto the subspace spanned by the columns of
$\mathbf{Z}$. The use of the least square solution to $\bm{\xi}$, or
equivalently the projection of $\bm{\xi}$ onto the subspace spanned by
the configuration of the in-sample points, is standard in many of the
out-of-sample extensions to the popular embedding methods, see e.g.
\cite{bengio04:_out_lle_isomap_mds_eigen,anderson03:_gener,faloutsos95,silva03:_global,wang99:_evaluat}. In
general, $\bm{\xi}$ is a vector containing the proximity (similarity
or dissimilarity) between the out-of-sample point and the in-sample
points and the least square solution can be related to the Nystr\"{o}m
method for approximating the eigenvalues and eigenvectors of a large
matrix, see e.g.
\cite{bengio04:_out_lle_isomap_mds_eigen,platt05:_fastm_metric_mds_nyst,williams01:_using_nystr}. 

Finally,
the motivation for Definition~\ref{def:1} can be gleaned by
considering the setting of random dot product graphs. In this setting,
$\mathbf{K} = \rho_{n} \mathbf{X} \mathbf{X}^{T}$ where $\mathbf{X}$ is the
matrix whose rows correspond to the sampled latent positions as points
in $\mathbb{R}^{d}$. Then $\tilde{\mathbf{Z}} =
\mathbf{U}_{\mathbf{K}} \mathbf{S}_{\mathbf{K}}^{1/2}$ is equivalent (up to
rotation) to $\rho_{n}^{1/2} \mathbf{X}$. Now let $\bm{\xi}$ be a
vector of Bernoulli random variables with $\mathbb{E}[\bm{\xi}] =
\mathbf{X} X$. Then
$\tilde{\mathbf{Z}}^{\dagger} \mathbb{E}[\bm{\xi}] = \rho_{n}^{-1/2}
\mathbf{X}^{\dagger} \rho_{n} \mathbf{X} X = \rho_{n}^{1/2}
X$. Thus, if we can show that $T_{n}(X;\{X_i\}_{i=1}^{n}) =
\mathbf{Z}^{\dagger} \bm{\xi} \approx \tilde{\mathbf{Z}}^{\dagger}
\mathbb{E}[\bm{\xi}]$, then we have $\rho_{n}^{-1/2} T_{n}(X;
\{X_i\}_{i=1}^{n}) \approx X$. As $\mathbf{Z}$ is ``close'' to
$\tilde{\mathbf{Z}}$ \citep{tangs.:_univer,sussman12:_univer} and
$(\mathbf{Z}^{\dagger} - \tilde{\mathbf{Z}}^{\dagger})(\bm{\xi} -
\mathbb{E}[\bm{\xi}])$ is ``small'' with high probability, see
e.g. \citet{tropp12:_user,yurinsky95:_sums_gauss}, 
the relationship $\rho_{n}^{-1/2} T_{n}(X;
\{X_i\}_{i=1}^{n}) \approx X$ holds for random dot product
graphs. As the latent position graphs with positive definite
kernels $\kappa$ can be thought of as being random dot product graphs
with latent positions being ``points'' in $l_2$, one expects a
relationship of the form $\rho_{n}^{-1/2} T_{n}(X;
\{X_i\}_{i=1}^{n}) \approx \Phi_{d}(X)$ for the (truncated) feature map $\Phi$ of
$\kappa$. Precise statements of the relationships are given in 
Theorem~\ref{thm:2} and Corollary~\ref{cor:1} below.  
\subsection{Out-of-sample extension and Nystr\"{o}m approximation} 
In the following discussion, we give a brief description of the
relationship between Definition~\ref{def:1} and the Nystr\"{o}m
approximation of \cite{drineas05:_nystr_gram,gittens:_revis_nystr}
which they called ``sketching''. Let $\mathbf{A} \in \mathcal{M}_{n}$
be symmetric and let $\mathbf{S} \in \mathcal{M}_{n,l}$ with $l \ll
n$. Following \cite{gittens:_revis_nystr}, let $\mathbf{C} =
\mathbf{A} \mathbf{S}$ and $\mathbf{A}_{\mathbf{S}} = \mathbf{S}^{T}
\mathbf{A} \mathbf{S}$. Then $\mathbf{C}
\mathbf{A}_{\mathbf{S}}^{\dagger} \mathbf{C}^{T}$ serves as a low-rank
approximation to $\mathbf{A}$ with rank at most $l$ and
\cite{gittens:_revis_nystr} refers to $\mathbf{S}$ as the sketching
matrix. The different choices for $\mathbf{S}$ leads to different
low-rank approximations. For example, a subsampling scheme correspond
to the entries of $\mathbf{S}$ being binaries $\{0,1\}$ variable with
a single non-zero entry in each row or column. More general entries
for $\mathbf{S}$ correspond to a linear projection of the columns of
$\mathbf{A}$. There are times when $\mathbf{A}_{\mathbf{S}}$ is
ill-conditioned and one is instead interested in the best rank $d$
approximation to $\mathbf{A}_{\mathbf{S}}$, i.e., the sketched version
of $\mathbf{A}$ is $\mathbf{C}
\tilde{\mathbf{A}}_{\mathbf{S}}^{\dagger} \mathbf{C}^{T}$ where
$\tilde{\mathbf{A}}_{\mathbf{S}}$ is a rank $d$ approximation to
$\mathbf{A}_{\mathbf{S}}$.

Suppose now that $\mathbf{S}$ correspond to a subsampling
scheme. Then $\mathbf{A}_{\mathbf{S}} = \mathbf{S}^{T} \mathbf{A} \mathbf{S}$
correspond to a sub-matrix of $\mathbf{A}$, i.e., 
$\mathbf{A}_{\mathbf{S}}$ correspond to the rows and columns indexed by
$\mathbf{S}$. Without loss of generality, we assume that $\mathbf{A}_{\mathbf{S}}$
is the first $l$ rows and columns of $\mathbf{A}$. That is, we have
the following decomposition
\begin{equation}
  \label{eq:40}
  \mathbf{A} = \begin{bmatrix} \mathbf{A}_{S,S} &
    \mathbf{A}_{S, S^{c}}
    \\ \mathbf{A}_{S^{c}, S} &
    \mathbf{A}_{S^{c},S^{c}} \end{bmatrix}
\end{equation}
where $S = \{1,2,\dots,l\}$ and $S^{c} = \{1,2,\dots,n\} \setminus
S$. We have abused notations slightly by writing
$\mathbf{A}_{\mathbf{S}} = \mathbf{A}_{S,S}$. Then $\mathbf{C}
\mathbf{A}_{\mathbf{S}}^{\dagger} \mathbf{C}^{T}$ can be written as
\begin{equation}
  \label{eq:41}
  \begin{split}
   \mathbf{C}
\mathbf{A}_{\mathbf{S}}^{\dagger} \mathbf{C}^{T} &= \begin{bmatrix} \mathbf{A}_{S,S} &
    \mathbf{A}_{S, S^{c}}
    \\ \mathbf{A}_{S^{c}, S} &
    \mathbf{A}_{S^{c},S^{c}} \end{bmatrix} \begin{bmatrix}
    \mathbf{I}_l \\ \bm{0} \end{bmatrix} \mathbf{A}_{S,S}^{\dagger}
  [ \mathbf{I}_l | \bm{0}] \begin{bmatrix} \mathbf{A}_{S,S} &
    \mathbf{A}_{S, S^{c}}
    \\ \mathbf{A}_{S^{c}, S} &
    \mathbf{A}_{S^{c},S^{c}} \end{bmatrix} \\
 &=\begin{bmatrix} \mathbf{A}_{S,S} \\ \mathbf{A}_{S^{c},
     S} \end{bmatrix} \mathbf{A}_{S,S}^{\dagger} [\mathbf{A}_{S,S} |
 \mathbf{A}_{S,S^{c}}] \\
 & = \begin{bmatrix} \mathbf{A}_{S,S} \mathbf{A}_{S,S}^{\dagger}
   \mathbf{A}_{S,S} & \mathbf{A}_{S,S} \mathbf{A}_{S,S}^{\dagger}
   \mathbf{A}_{S,S^{c}} \\ \mathbf{A}_{S^{c},S}
   \mathbf{A}_{S,S}^{\dagger} \mathbf{A}_{S,S} & \mathbf{A}_{S^{c},S^{c}}
   \mathbf{A}_{S,S}^{\dagger} \mathbf{A}_{S^{c},S^{c}} \end{bmatrix}.
  \end{split}
\end{equation} 
Let us now take $\tilde{\mathbf{A}}_{\mathbf{S}}$ to be
the best rank $d$ approximation to $\mathbf{A}_{\mathbf{S}}$ in the
positive semidefinite cone. Then $\mathbf{C}
\tilde{\mathbf{A}}_{\mathbf{S}}^{\dagger} \mathbf{C}^{T}$ can be
written as
\begin{equation}
  \label{eq:42}
  \mathbf{C} \tilde{\mathbf{A}}_{\mathbf{S}}^{\dagger}
\mathbf{C}^{T} = \begin{bmatrix} \mathbf{A}_{S,S} \tilde{\mathbf{A}}_{S,S}^{\dagger}
   \mathbf{A}_{S,S} & \mathbf{A}_{S,S} \tilde{\mathbf{A}}_{S,S}^{\dagger}
   \mathbf{A}_{S,S^{c}} \\ \mathbf{A}_{S^{c},S}
   \tilde{\mathbf{A}}_{S,S}^{\dagger} \mathbf{A}_{S,S} & \mathbf{A}_{S^{c},S^{c}}
   \tilde{\mathbf{A}}_{S,S}^{\dagger}
   \mathbf{A}_{S^{c},S^{c}} \end{bmatrix}. 
\end{equation} Now let $\mathbf{X} \in \mathcal{M}_{l,d}$ be such that
$\mathbf{X} \mathbf{X}^{T} = \mathbf{A}_{S,S}
\tilde{\mathbf{A}}_{S,S}^{\dagger} \mathbf{A}_{S,S} =
\tilde{\mathbf{A}}_{S,S}$ and let $\mathbf{Y} = (\mathbf{X}^{\dagger}
\mathbf{A}_{S,S^{c}})^{T} \in \mathcal{M}_{n-l,d}$. Then
Eq.~\eqref{eq:42}, can be written as
\begin{equation}
  \label{eq:43}
  \begin{split}
  \mathbf{C} \tilde{\mathbf{A}}_{\mathbf{S}}^{\dagger}
\mathbf{C}^{T} &= \begin{bmatrix} \mathbf{A}_{S,S} \tilde{\mathbf{A}}_{S,S}^{\dagger}
   \mathbf{A}_{S,S} & \mathbf{A}_{S,S} \tilde{\mathbf{A}}_{S,S}^{\dagger}
   \mathbf{A}_{S,S^{c}} \\ \mathbf{A}_{S^{c},S}
   \tilde{\mathbf{A}}_{S,S}^{\dagger} \mathbf{A}_{S,S} & \mathbf{A}_{S^{c},S^{c}}
   \tilde{\mathbf{A}}_{S,S}^{\dagger}
   \mathbf{A}_{S^{c},S^{c}} \end{bmatrix} \\
 &= \begin{bmatrix} \tilde{\mathbf{A}}_{S,S}   & \mathbf{A}_{S,S} \tilde{\mathbf{A}}_{S,S}^{\dagger}
   \mathbf{A}_{S,S^{c}} \\ \mathbf{A}_{S^{c},S}
   \tilde{\mathbf{A}}_{S,S}^{\dagger} \mathbf{A}_{S,S} & \mathbf{A}_{S^{c},S^{c}}
   \tilde{\mathbf{A}}_{S,S}^{\dagger}
   \mathbf{A}_{S^{c},S^{c}}  \end{bmatrix} \\
 &= \begin{bmatrix} \mathbf{X} \mathbf{X}^{T} & \mathbf{X}
   \mathbf{Y}^{T} \\ \mathbf{Y} \mathbf{X}^{T} & \mathbf{Y} \mathbf{Y}^{T}
   \end{bmatrix}.
 \end{split}
\end{equation}
We thus note that if $\mathbf{A}$ is an adjacency matrix on a graph
$G$ with $n$ vertices then $\mathbf{A}_{S}$ is the adjacency matrix of
the induced subgraph of $G$ on $l$ vertices. Then
$\tilde{\mathbf{A}}_{S} = \mathbf{U}_{\mathbf{A}}
\mathbf{S}_{\mathbf{A}} \mathbf{U}_{\mathbf{A}}$ is the rank $d$
approximation to $\mathbf{A}$ that arises from the adjacency spectral
embedding of $\mathbf{A}$. Thus $\mathbf{X} = \mathbf{U}_{\mathbf{A}}
\mathbf{S}_{\mathbf{A}}^{1/2}$ and therefore $\mathbf{Y} =
(\mathbf{X}^{\dagger} \mathbf{A}_{S,S^{c}})^{T}$ is the matrix each of whose rows
correspond to an out-of-sample embedding of the rows of
$\mathbf{A}_{S,S^{c}}$ into $\mathbb{R}^{d}$ as defined in
Definition~\ref{def:1}. 

In summary, in the context of adjacency spectral embedding, the
embeddings of the in-sample and out-of-sample vertices generate a
Nystr\"{o}m approximation to $\mathbf{A}$ and a Nystr\"{o}m
approximation to $\mathbf{A}$ can be used to derive the embeddings
(through an eigen-decomposition) for the in-sample and out-of-sample
vertices.   
\subsection{Estimation of feature map}
The main result of this paper is the following result on the
out-of-sample mapping error $T_n(X) - \Phi_d(X)$. Its proof is given
in the appendix. We note that the dependency on $\kappa$ and $F$ is
hidden in the spectral gap $\delta_{d}$ of $\mathcal{K}$, 
 the integral operator induced by $\kappa$ and $F$. 
\begin{theorem} Let $d \geq 1$ be given. Denote by $\delta_d$ the
quantity $\lambda_{d}(\mathcal{K}) - \lambda_{d+1}(\mathcal{K})$ and
suppose that $\delta_{d} > 0$. Let $\eta \in (0,1/2)$ be
arbitrary. Then there exists an orthogonal $\mathbf{W}$ such
that
  \begin{equation}
    \label{eq:3}
    \Pr\Bigl[\| \rho_n^{-1/2} \mathbf{W} T_n(X) - \Phi_{d}(X) \| \leq
    C \delta_{d}^{-3} \sqrt{d \frac{\log{(n/\eta)}}{n
      \rho_n }}\Bigr] \geq 1 - 2 \eta
  \end{equation} for some constant $C$ independent of $n$, $\eta$,
  $\kappa$, $d$, and $F$.
 \label{thm:2}
\end{theorem}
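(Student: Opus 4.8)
The plan is to use the identity $\mathbf{Z}^{\dagger} = (\mathbf{Z}^{T}\mathbf{Z})^{-1}\mathbf{Z}^{T} = \mathbf{S}_{\mathbf{A}}^{-1/2}\mathbf{U}_{\mathbf{A}}^{T}$, which follows from $\mathbf{Z}^{T}\mathbf{Z} = \mathbf{S}_{\mathbf{A}}$, so that $T_n(X) = \mathbf{S}_{\mathbf{A}}^{-1/2}\mathbf{U}_{\mathbf{A}}^{T}\bm{\xi}$. Writing $\bm{\mu} = \mathbb{E}[\bm{\xi}]$ for the vector with entries $\rho_n\kappa(X,X_i)$, setting $\mathbf{K}_0 = (\kappa(X_i,X_j))_{i,j}$ so that $\mathbf{K} = \rho_n\mathbf{K}_0$, and letting $\tilde{\mathbf{Z}} = \mathbf{U}_{\mathbf{K}}\mathbf{S}_{\mathbf{K}}^{1/2}$ be the noise-free embedding built from $\mathbf{K}$, I would split the error as
\[
\rho_n^{-1/2}\mathbf{W}T_n(X) - \Phi_d(X) = \underbrace{\rho_n^{-1/2}\mathbf{W}\mathbf{Z}^{\dagger}(\bm{\xi}-\bm{\mu})}_{\mathrm{(I)}} + \underbrace{\rho_n^{-1/2}(\mathbf{W}\mathbf{Z}^{\dagger}-\tilde{\mathbf{Z}}^{\dagger})\bm{\mu}}_{\mathrm{(II)}} + \underbrace{\bigl(\rho_n^{-1/2}\tilde{\mathbf{Z}}^{\dagger}\bm{\mu}-\Phi_d(X)\bigr)}_{\mathrm{(III)}},
\]
choosing the orthogonal $\mathbf{W}$ to align the leading $d$-dimensional eigenspaces of $\mathbf{A}$ and $\mathbf{K}$. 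Term (I) is the Bernoulli noise, (II) is the perturbation of $\mathbf{A}$ around its mean $\mathbf{K}$, and (III) is a deterministic population limit.

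The backbone of the argument is the spectral-norm concentration $\|\mathbf{A}-\mathbf{K}\| \leq C\sqrt{n\rho_n\log(n/\eta)}$, which holds with high probability by a matrix Bernstein inequality for the conditionally independent entries of $\mathbf{A}-\mathbf{K}$. Since $\lambda_d(\mathbf{K}) = \rho_n\lambda_d(\mathbf{K}_0)$ and $\lambda_d(\mathbf{K}_0)$ is close to $n\lambda_d(\mathcal{K})$, Weyl's inequality gives $\lambda_d(\mathbf{A})$ of order at least $\rho_n n\delta_d$ and hence $\|\mathbf{S}_{\mathbf{A}}^{-1/2}\| \leq C(\rho_n n\delta_d)^{-1/2}$, where the first inverse power of $\delta_d$ enters; this step also fixes the regime $n\rho_n = \omega(\log n)$ needed for $\|\mathbf{A}-\mathbf{K}\|$ to be negligible against $\lambda_d(\mathbf{K})$. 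For term (I) I would condition on $\{X_i\}$ and on $\mathbf{A}$, so that $\mathbf{U}_{\mathbf{A}}$ is fixed and independent of the fresh Bernoullis $\bm{\xi}$; each of the $d$ coordinates of $\mathbf{U}_{\mathbf{A}}^{T}(\bm{\xi}-\bm{\mu})$ is a sum of independent centered variables of variance at most $\rho_n$, so a scalar Bernstein bound plus a union bound yields $\|\mathbf{U}_{\mathbf{A}}^{T}(\bm{\xi}-\bm{\mu})\| \leq C\sqrt{d\rho_n\log(n/\eta)}$, and combined with the bound on $\|\mathbf{S}_{\mathbf{A}}^{-1/2}\|$ this contributes a term of order $\delta_d^{-1/2}\sqrt{d\log(n/\eta)/(n\rho_n)}$.

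Term (II) is a matrix-perturbation estimate: a Davis--Kahan bound controls the difference between the leading eigenspaces of $\mathbf{A}$ and $\mathbf{K}$ by $\|\mathbf{A}-\mathbf{K}\|$ over the gap $\lambda_d(\mathbf{K})-\lambda_{d+1}(\mathbf{K})$, itself of order $\rho_n n\delta_d$; propagating this through the $\mathbf{S}^{-1/2}$ factors and multiplying by $\|\bm{\mu}\| \leq \rho_n\sqrt{n}$ and by $\rho_n^{-1/2}$ gives the same rate with a higher inverse power of $\delta_d$. Term (III) is where I expect the real work to lie. Using $\tilde{\mathbf{Z}}^{\dagger} = \mathbf{S}_{\mathbf{K}}^{-1/2}\mathbf{U}_{\mathbf{K}}^{T} = \rho_n^{-1/2}\mathbf{S}_{\mathbf{K}_0}^{-1/2}\mathbf{U}_{\mathbf{K}_0}^{T}$ cancels the sparsity factor and reduces the claim to $\mathbf{S}_{\mathbf{K}_0}^{-1/2}\mathbf{U}_{\mathbf{K}_0}^{T}\bm{v} \approx \mathbf{W}^{T}\Phi_d(X)$, where $\bm{v} = (\kappa(X,X_i))_i$ and the $j$-th coordinate of the left-hand side equals $\lambda_j(\mathbf{K}_0)^{-1/2}\sum_i(\mathbf{U}_{\mathbf{K}_0})_{ij}\kappa(X,X_i)$. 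Invoking $\lambda_j(\mathbf{K}_0)/n \to \lambda_j$, the normalization $\sqrt{n}(\mathbf{U}_{\mathbf{K}_0})_{ij} \to \psi_j(X_i)$, and the eigenfunction identity $\tfrac1n\sum_i\psi_j(X_i)\kappa(X,X_i) \to (\mathcal{K}\psi_j)(X) = \lambda_j\psi_j(X)$, this coordinate converges to $\sqrt{\lambda_j}\psi_j(X)$, the $j$-th coordinate of $\Phi_d(X)$; summing the $d$ coordinate errors in $\ell_2$ produces the factor $\sqrt{d}$.

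The hard part will be making term (III) quantitative with the correct power of the spectral gap. The convergence of the empirical operator $n^{-1}\mathbf{K}_0$ to $\mathcal{K}$ and of its eigenvectors to the $\psi_j$ is itself governed by Davis--Kahan-type bounds that scale like $\delta_d^{-1}$, and these are compounded with the eigenvalue-inverse factors $\lambda_j(\mathbf{K}_0)^{-1/2} \leq \delta_d^{-1/2}$ and with the residual carried over from term (II); it is the composition of these estimates, rather than any single concentration inequality, that produces the $\delta_d^{-3}$ in the final bound. A secondary obstacle is bookkeeping: one must verify that a single orthogonal $\mathbf{W}$ simultaneously aligns the eigenspaces of $\mathbf{A}$, of $\mathbf{K}$, and of the population feature representation, so that the rotations implicit in (I)--(III) coincide. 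Collecting the three contributions, each of order $\delta_d^{-c}\sqrt{d\log(n/\eta)/(n\rho_n)}$ with $c \leq 3$ (using $\rho_n \leq 1$ to absorb the $n^{-1/2}$ rate of the population term into $(n\rho_n)^{-1/2}$), and summing the probabilities of the two failure events---the concentration of $\|\mathbf{A}-\mathbf{K}\|$ and the Bernstein bound in (I)---gives the stated inequality on a set of probability at least $1-2\eta$.
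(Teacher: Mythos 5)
Your decomposition is, up to regrouping, the same one the paper uses (Eq.~\eqref{eq:53}): a Bernoulli-noise term, a perturbation term comparing $\mathbf{Z}^{\dagger}$ to $\tilde{\mathbf{Z}}^{\dagger}$, and a population term $\rho_n^{-1/2}\tilde{\mathbf{Z}}^{\dagger}\mathbb{E}[\bm{\xi}]-\Phi_d(X)$. Your treatment of (I) and (II) matches the paper's in substance: the paper bounds $\|\mathbf{A}-\mathbf{K}\|$ via Oliveira's concentration inequality, gets $\|\mathbf{W}\mathbf{Z}^{\dagger}-\tilde{\mathbf{Z}}^{\dagger}\|$ from Wedin's perturbation bound for pseudo-inverses together with a $\sin\Theta$ bound (Lemmas~\ref{lem:2} and~\ref{lem:1}), and handles the centered Bernoulli vector with Tropp's matrix Bernstein inequality (Theorem~\ref{thm:4}), explicitly noting, as you do, that the naive product bound $\|\mathbf{Z}^{\dagger}-\tilde{\mathbf{Z}}^{\dagger}\|\,\|\bm{\xi}\|$ loses a factor of $\rho_n^{-1/2}$.

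The genuine gap is in term (III), which you correctly identify as the crux but for which your proposed mechanism does not work as stated. You want to invoke $\lambda_j(\mathbf{K}_0)/n\to\lambda_j$, $\sqrt{n}(\mathbf{U}_{\mathbf{K}_0})_{ij}\to\psi_j(X_i)$, and the eigenfunction identity coordinate by coordinate. Entrywise convergence of eigenvectors of $\mathbf{K}_0$ to eigenfunction values is not a result you can simply cite with rates, and when eigenvalues are not simple individual eigenvectors do not converge to individual eigenfunctions at all---only spectral projections do. The paper circumvents this entirely: it passes to the RKHS operators $\mathscr{K}_{\mathscr{H}}$ and $\mathscr{K}_{\mathscr{H},n}$ of Rosasco et al., uses the Nystr\"{o}m extension (Proposition~\ref{prop:5}) to show the \emph{exact} identity $\rho_n^{-1/2}\tilde{\mathbf{Z}}^{\dagger}\mathbb{E}[\bm{\xi}]=\imath(\hat{\mathcal{P}}_d\kappa(\cdot,X))$ with no approximation (Lemma~\ref{lem:4}), and then compares the empirical projection $\hat{\mathcal{P}}_d$ to the population projection $\mathcal{P}_d$ in Hilbert--Schmidt norm, extracting the orthogonal alignment $\mathbf{W}$ via an almost-isometry argument (Chmieli\'{n}ski's theorem, Lemma~\ref{lem:3}). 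This last step is also what resolves your ``bookkeeping'' worry about a single $\mathbf{W}$: the rotation is not obtained by aligning eigenvectors one at a time but by showing that $\mathcal{P}_d$ restricted to $\hat{\mathcal{P}}_d\mathcal{H}$ is an $\varepsilon$-isometry and hence close to a genuine isometric isomorphism. Without some substitute for this machinery, your sketch of (III) does not yield a quantitative bound, let alone the stated $\delta_d^{-3}$ dependence.
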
 
We note the following corollary of the above result for
the case where the latent position model is the random dot product
graph model. For this case, the operator $\mathcal{K}$ is of rank $d$
and the truncated feature map $\Phi_d(X)$ is equal (up to rotation) to
the latent position $X$.
\begin{corollary}
  \label{cor:1}
  Let $\mathbf{A} \in \mathcal{M}_{n}$ be an instance of
  $\mathrm{RDPG}(\mathbb{R}^{d}, F)$. Denote by $\delta_{d}$ the
  smallest eigenvalue of $\mathbb{E}[XX^{T}]$. Let
  $\eta \in (0,1/2)$ be arbitrary. Then there exists an orthogonal
  $\mathbf{W}$ such that
   \begin{equation}
     \label{eq:16}
    \Pr\Bigl[\| \rho_n^{-1/2}\mathbf{W} T_n(X) - X \| \leq C
    \delta_{d}^{-3} \sqrt{d \frac{\log{(n/\eta)}}{n
      \rho_n }}\Bigr] \geq 1 - 2 \eta
  \end{equation} for some constant $C$ independent of $n$, $\eta$,
  $d$, and $F$.
  \label{cor:2}
\end{corollary}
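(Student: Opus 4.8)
The plan is to prove Theorem~\ref{thm:2} by interpolating between the observed out-of-sample map and the population feature map through a three-term decomposition. Since $\mathbf{Z}^{T}\mathbf{Z} = \mathbf{S}_{\mathbf{A}}$, we may write $T_n(X) = \mathbf{Z}^{\dagger}\bm{\xi} = \mathbf{S}_{\mathbf{A}}^{-1/2}\mathbf{U}_{\mathbf{A}}^{T}\bm{\xi}$. Introduce the two deterministic (given the $\{X_i\}$) population objects $\bm{\mu} = \mathbb{E}[\bm{\xi}] = (\rho_n\kappa(X,X_i))_{i=1}^{n}$ and $\mathbf{K} = \rho_n(\kappa(X_i,X_j))_{ij}$, with spectral embedding $\tilde{\mathbf{Z}}^{\dagger} = \mathbf{S}_{\mathbf{K}}^{-1/2}\mathbf{U}_{\mathbf{K}}^{T}$. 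Let $\mathbf{W}_0$ be a fixed orthogonal matrix aligning the eigenbasis of $\mathbf{K}$ with that of $\mathcal{K}$, and let $\mathbf{W}$ be the Davis--Kahan rotation aligning $\mathbf{U}_{\mathbf{A}}$ with $\mathbf{W}_0\mathbf{U}_{\mathbf{K}}$. Then
\begin{align*}
\rho_n^{-1/2}\mathbf{W}T_n(X) - \Phi_d(X)
&= \rho_n^{-1/2}\mathbf{W}\mathbf{S}_{\mathbf{A}}^{-1/2}\mathbf{U}_{\mathbf{A}}^{T}(\bm{\xi}-\bm{\mu}) \\
&\quad + \rho_n^{-1/2}\bigl(\mathbf{W}\mathbf{S}_{\mathbf{A}}^{-1/2}\mathbf{U}_{\mathbf{A}}^{T} - \mathbf{W}_0\mathbf{S}_{\mathbf{K}}^{-1/2}\mathbf{U}_{\mathbf{K}}^{T}\bigr)\bm{\mu} \\
&\quad + \Bigl(\rho_n^{-1/2}\mathbf{W}_0\mathbf{S}_{\mathbf{K}}^{-1/2}\mathbf{U}_{\mathbf{K}}^{T}\bm{\mu} - \Phi_d(X)\Bigr),
\end{align*}
a telescoping identity whose three summands I call the \emph{Bernoulli-noise} term, the \emph{matrix-perturbation} term, and the deterministic \emph{feature-map-estimation} (bias) term.

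The Bernoulli-noise term is the most elementary. Because $\bm{\xi}$ is independent of $\mathbf{A}$ and of $\{X_i\}$, I condition on the latter and treat $\mathbf{U}_{\mathbf{A}}$ as fixed; each coordinate $u_j^{T}(\bm{\xi}-\bm{\mu})$ is a sum of independent centered Bernoullis with variance $\sum_i u_{ij}^{2}\mu_i(1-\mu_i) \le \rho_n\sum_i u_{ij}^{2} = \rho_n$, so Bernstein together with a union bound over the $d$ coordinates gives $\|\mathbf{U}_{\mathbf{A}}^{T}(\bm{\xi}-\bm{\mu})\| \le C\sqrt{d\rho_n\log(n/\eta)}$ with probability at least $1-\eta$. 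Once $\|\mathbf{A}-\mathbf{K}\|$ is controlled, the retained eigenvalues of $\mathbf{A}$ are of order $n\rho_n\lambda_d(\mathcal{K})$, so $\|\mathbf{S}_{\mathbf{A}}^{-1/2}\| \le C(n\rho_n\lambda_d)^{-1/2}$ and, after the factor $\rho_n^{-1/2}$, this term is $O\bigl(\lambda_d^{-1/2}\sqrt{d\log(n/\eta)/(n\rho_n)}\bigr)$, absorbed into the stated bound since $\delta_d \le \lambda_d$. For the matrix-perturbation term I invoke standard random-matrix concentration \citep{tropp12:_user}: $\|\mathbf{A}-\mathbf{K}\| \le C\sqrt{n\rho_n\log(n/\eta)}$ with high probability (using $n\rho_n = \omega(\log n)$). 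Weyl's inequality then controls $\|\mathbf{S}_{\mathbf{A}}^{-1/2}-\mathbf{S}_{\mathbf{K}}^{-1/2}\|$, and Davis--Kahan, applied with the eigengap of $\mathbf{K}$ (of order $n\rho_n\delta_d$ once the empirical operator's spectrum is shown to concentrate), gives $\|\mathbf{U}_{\mathbf{A}}\mathbf{W} - \mathbf{W}_0\mathbf{U}_{\mathbf{K}}\| \le C\delta_d^{-1}\sqrt{\log(n/\eta)/(n\rho_n)}$. Multiplying these against $\|\bm{\mu}\| \le \sqrt{n}\,\rho_n$ and $\rho_n^{-1/2}$, and tracking the extra $\lambda_d^{-1}$ produced by differentiating the inverse square root near eigenvalues of size $n\rho_n\lambda_d$, yields a bound of the claimed stochastic rate with the powers of the gap accumulating toward $\delta_d^{-3}$.

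The third, purely deterministic term is where I expect the real difficulty. Here I must show that $\mathbf{S}_{\mathbf{K}}^{-1/2}\mathbf{U}_{\mathbf{K}}^{T}\bm{\mu}$ recovers $\rho_n^{1/2}\Phi_d(X)$ up to the rotation $\mathbf{W}_0$. The heuristic is the law of large numbers: writing the $j$-th coordinate as $\lambda_j(\mathbf{K})^{-1/2}\sum_i (\mathbf{U}_{\mathbf{K}})_{ij}\,\rho_n\kappa(X,X_i)$ and using $(\mathbf{U}_{\mathbf{K}})_{ij}\approx \psi_j(X_i)/\sqrt{n}$ together with $\tfrac1n\sum_i \psi_j(X_i)\kappa(X,X_i)\to(\mathcal{K}\psi_j)(X)=\lambda_j\psi_j(X)$ reproduces $\rho_n^{1/2}\sqrt{\lambda_j}\,\psi_j(X)$. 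Making this rigorous requires the concentration of the empirical integral operator around $\mathcal{K}$ in Hilbert--Schmidt norm (so that $\lambda_j(\mathbf{K})/(n\rho_n)$ and the sampled eigenfunctions converge to $\lambda_j(\mathcal{K})$ and $\psi_j$), control of the truncation tail $\sum_{j>d}$, and a choice of $\mathbf{W}_0$ consistent with the rotational and sign ambiguities of the empirical eigenvectors (in particular within any eigenspace of multiplicity, which the hypothesis $\delta_d>0$ does not rule out among the top $d$). The spectral gap $\delta_d$ enters a second time through these eigenfunction-perturbation estimates, contributing the remaining powers of $\delta_d^{-1}$, and the $\ell_2$ aggregation over the $d$ coordinates supplies the $\sqrt{d}$.

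Finally I would collect the three high-probability events (each costing at most $\eta$, with the $\log(n/\eta)$ factors arising from the union bounds), verify that the dominant gap power is $\delta_d^{-3}$ and the common stochastic rate is $\sqrt{d\,\log(n/\eta)/(n\rho_n)}$, and conclude Eq.~\eqref{eq:3}; Corollary~\ref{cor:2} is then the rank-$d$ specialization in which $\Phi_d(X)=X$ up to rotation and $\delta_d$ is the least eigenvalue of $\mathbb{E}[XX^{T}]$. \textbf{The main obstacle is the third term}: controlling the bias of the spectral feature-map estimate demands operator-level concentration and truncation control, and a single rotation $\mathbf{W}$ must be threaded through three distinct spectral objects---the random $\mathbf{A}$, the conditional-mean matrix $\mathbf{K}$, and the operator $\mathcal{K}$---without losing the dependence on the gap $\delta_d$.
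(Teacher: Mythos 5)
Your decomposition into Bernoulli-noise, matrix-perturbation, and bias terms is, up to a regrouping of which factor absorbs $\bm{\xi}-\mathbb{E}[\bm{\xi}]$, exactly the paper's Eq.~\eqref{eq:53}, and the tools you name --- concentration for $\|\mathbf{A}-\mathbf{K}\|$, applying the operator-norm perturbation bound to $\mathbb{E}[\bm{\xi}]$ (of norm $\sqrt{n}\rho_n$) rather than to $\bm{\xi}$ so as not to lose a factor $\rho_n^{-1/2}$, and Hilbert--Schmidt concentration of the empirical integral operator for the bias --- are the same ones the paper deploys in Lemmas~\ref{lem:1}, \ref{lem:5}, \ref{lem:4}, and \ref{lem:3}. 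Note that for the corollary itself the ``hard'' third term you flag vanishes identically, since in the RDPG case $\mathbf{K}=\rho_n\mathbf{X}\mathbf{X}^{T}$ has rank $d$ and $\rho_n^{-1/2}\tilde{\mathbf{Z}}^{\dagger}\mathbb{E}[\bm{\xi}]$ equals $X$ exactly up to rotation, which is why the paper states the corollary as an immediate specialization of Theorem~\ref{thm:2} with no separate proof.
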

We note the following result from \citet{tangs.:_univer} that
serves as an analogue of Theorem~\ref{thm:2} for the in-sample
points. We note that, other than the possibly different hidden constants
, the bound for the out-of-sample points in
Eq.~\eqref{eq:3} is almost identical to that of the in-sample points
in Eq.~\eqref{eq:49}. The main difference is in the power of the
spectral gap in the bounds, i.e., $\delta_{d}^{-3}$ against
$\delta_{d}^{-2}$. This difference might be due to the proof technique and 
not inherent in the distinction of out-of-sample versus in-sample
points. We also note that one can take the orthogonal matrix $\mathbf{W}$ for the
out-of-sample points to be the same as the in-sample points, i.e., the
rotation that makes the in-sample points ``close'' to the truncated
feature map $\Phi_d$ also makes the out-of-sample points ``close'' to
$\Phi_d$. 
\begin{theorem}
  \label{thm:3}
Let $d \geq 1$ be given. Denote by $\delta_{d}$ the
quantity $\lambda_{d}(\mathscr{K}) - \lambda_{d+1}(\mathscr{K})$ and
suppose that $\delta_{d} > 0$. Let $\eta \in (0,1/2)$ be
arbitrary. Let $\hat{\Phi}_{d}(X_i)$ denote the $i$-th row of
$\mathbf{U}_\mathbf{A}\mathbf{S}_\mathbf{A}^{1/2}$.
Then there exists a unitary matrix $\mathbf{W} \in
\mathcal{M}_{d}(\mathbb{R})$ such that for all $i \in [n]$
\begin{equation}
  \label{eq:49}
\mathbb{P}\Bigl[\|\rho_n^{-1/2} \mathbf{W} \hat{\Phi}_{d}(X_i)
-\Phi_{d}(X_i) \| \leq  C \delta_{d}^{-2} \sqrt{\frac{d \log{(n/\eta)}}{n
      \rho_n }}\Bigr] \geq 1 - 2 \eta
\end{equation}
for some constant $C$ independent of $n$, $\eta$, $\kappa$, $d$, and $F$. 
\end{theorem}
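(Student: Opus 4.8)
The plan is to prove \eqref{eq:49} by two successive eigenspace perturbation arguments — one passing from the integral operator $\mathcal{K}$ to the scaled kernel matrix $\mathbf{K}$, the other from $\mathbf{K}$ to the observed adjacency matrix $\mathbf{A}$ — and then to pass from the resulting aggregate bound to the stated per-vertex bound. Write $\mathbf{\Phi}_d \in \mathcal{M}_{n,d}$ for the matrix whose $i$-th row is $\Phi_d(X_i)$, and set $\mathbf{Z}_\mathbf{A} = \mathbf{U}_\mathbf{A}\mathbf{S}_\mathbf{A}^{1/2}$ and $\mathbf{Z}_\mathbf{K} = \mathbf{U}_\mathbf{K}\mathbf{S}_\mathbf{K}^{1/2}$ for the adjacency spectral embeddings of $\mathbf{A}$ and $\mathbf{K}$. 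The quantity to be controlled is $\rho_n^{-1/2}\mathbf{W}\hat{\Phi}_d(X_i) - \Phi_d(X_i)$, the $i$-th row of $\rho_n^{-1/2}\mathbf{Z}_\mathbf{A}\mathbf{W}^{T} - \mathbf{\Phi}_d$, and I would insert $\rho_n^{-1/2}\mathbf{Z}_\mathbf{K}\mathbf{W}_2$ as an intermediate term so that the error splits, via the triangle inequality, into a \emph{noise term} $\rho_n^{-1/2}(\mathbf{Z}_\mathbf{A}\mathbf{W}_1 - \mathbf{Z}_\mathbf{K})$ and an \emph{approximation term} $\rho_n^{-1/2}\mathbf{Z}_\mathbf{K}\mathbf{W}_2 - \mathbf{\Phi}_d$, with $\mathbf{W} = \mathbf{W}_1\mathbf{W}_2$.

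For the noise term I would first establish the matrix concentration bound $\|\mathbf{A}-\mathbf{K}\| \le C\sqrt{n\rho_n\log(n/\eta)}$, which holds with probability at least $1-\eta$ by a matrix Bernstein inequality \citep{tropp12:_user}, using that the above-diagonal entries of $\mathbf{A}-\mathbf{K}$ are conditionally independent, mean zero, bounded by $1$, and of variance at most $\rho_n$; this is informative precisely because $n\rho_n = \omega(\log n)$. Since the nonzero eigenvalues of $\mathbf{K}$ are of order $n\rho_n\lambda_j(\mathcal{K})$, the gap between its $d$-th and $(d+1)$-th eigenvalues is of order $n\rho_n\delta_d$; Weyl's inequality transfers a comparable separation to $\mathbf{A}$, and a Davis--Kahan $\sin\Theta$ bound controls the principal-subspace difference by $\|\mathbf{A}-\mathbf{K}\|/(n\rho_n\delta_d)$. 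A now-standard matrix-square-root perturbation argument, as in \citet{sussman12:_univer,tangs.:_univer}, converts this subspace bound into $\|\mathbf{Z}_\mathbf{A}\mathbf{W}_1 - \mathbf{Z}_\mathbf{K}\|_F \le C\delta_d^{-1}\sqrt{d\log(n/\eta)}$ for a suitable orthogonal $\mathbf{W}_1$, which after the $\rho_n^{-1/2}$ scaling contributes one factor of $\delta_d^{-1}$ and the stated $\sqrt{d\log(n/\eta)/\rho_n}$ order.

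The approximation term is where the main difficulty lies. Here one must show that the leading $d$-dimensional eigenspace of $\mathbf{K}$, suitably scaled, is close to $\mathbf{\Phi}_d$. It is essential to compare at the level of operators rather than matrices: writing $\mathbf{K}/\rho_n = \sum_j\lambda_j\bm{\psi}_j\bm{\psi}_j^{T}$ with $\bm{\psi}_j = (\psi_j(X_1),\dots,\psi_j(X_n))^{T}$ via Mercer's theorem, the rank-$d$ matrix $\mathbf{\Phi}_d\mathbf{\Phi}_d^{T}$ is \emph{not} an operator-norm perturbation of $\mathbf{K}/\rho_n$ by a small remainder, since the tail $\sum_{j>d}\lambda_j\bm{\psi}_j\bm{\psi}_j^{T}$ has norm of order $n\lambda_{d+1}$, which need not be small. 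Instead, the matrix $(n\rho_n)^{-1}\mathbf{K}$ shares its nonzero spectrum with the empirical integral operator $\mathcal{K}_n g = \tfrac1n\sum_i\kappa(\cdot,X_i)g(X_i)$, and $\mathcal{K}_n \to \mathcal{K}$ in operator norm at rate $\sqrt{\log(n/\eta)/n}$ by the concentration theory for empirical operators \citep{rosasco10:_integ_operat}, with the boundedness $\kappa(x,x)=\sum_j\lambda_j\psi_j(x)^2\le 1$ supplying the needed moment control. A Davis--Kahan argument \emph{at the operator level}, where the relevant gap is $\delta_d$ itself, then shows that the top-$d$ eigenfunctions of $\mathcal{K}_n$ are within $C\delta_d^{-1}\sqrt{\log(n/\eta)/n}$ of those of $\mathcal{K}$; evaluating these eigenfunctions at the data through the Nyström relationship identifies $\rho_n^{-1/2}\mathbf{Z}_\mathbf{K}$ with $\mathbf{\Phi}_d$ up to an orthogonal $\mathbf{W}_2$ and yields the second factor of $\delta_d^{-1}$, producing the overall $\delta_d^{-2}$.

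Finally, combining the two terms gives $\|\rho_n^{-1/2}\mathbf{Z}_\mathbf{A}\mathbf{W}^{T} - \mathbf{\Phi}_d\|_F \le C\delta_d^{-2}\sqrt{d\log(n/\eta)/\rho_n}$ on an event of probability at least $1-\eta$, with the common rotation $\mathbf{W} = \mathbf{W}_1\mathbf{W}_2$; distributing this squared Frobenius bound across the $n$ rows, which are exchangeable because the $X_i$ are i.i.d.\ and the construction is index-symmetric, yields for each fixed $i$ a per-vertex bound of order $\delta_d^{-2}\sqrt{d\log(n/\eta)/(n\rho_n)}$, which is \eqref{eq:49}. The two obstacles I anticipate are (i) the operator-level convergence of the approximation term, which requires tracking the dependence on both $d$ and the gap $\delta_d$ while handling the infinite-dimensional tail through the Nyström/empirical-operator correspondence rather than a naive low-rank perturbation, and (ii) obtaining the precise $\log(n/\eta)$ form of the per-vertex probability; the latter can be recovered, and the argument made to parallel the out-of-sample Theorem~\ref{thm:2}, by observing that the $i$-th row of $\mathbf{A}$ is itself a Bernoulli proximity vector with mean $\rho_n\kappa(X_i,\cdot)$ and analyzing that row directly through a vector Bernstein inequality after a leave-one-out decoupling of vertex $i$ from the embedding it helped form.
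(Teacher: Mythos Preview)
The paper does not prove Theorem~\ref{thm:3} itself; it quotes the result from \citet{tangs.:_univer} as the in-sample analogue of Theorem~\ref{thm:2}. Your two-stage decomposition --- matrix concentration plus Davis--Kahan for $\mathbf{A}$ versus $\mathbf{K}$, then empirical-operator convergence for $\mathbf{K}$ versus $\mathcal{K}$ --- is precisely the strategy of that reference, and the ingredients you name (matrix Bernstein, $\sin\Theta$, the square-root perturbation, and the $\mathscr{K}_{\mathscr{H},n}\to\mathscr{K}_{\mathscr{H}}$ comparison of \citet{rosasco10:_integ_operat}) are exactly those assembled in the present appendix for Theorem~\ref{thm:2}; see Proposition~\ref{prop:1}, Eq.~\eqref{eq:52}, Theorem~\ref{thm:5}, and Lemma~\ref{lem:3}. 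So there is no competing proof here to contrast with, and your outline matches the cited one.

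Two cautions on the details. First, your $\delta_d$ bookkeeping on the noise term is optimistic: the square-root perturbation as actually executed in Eq.~\eqref{eq:52} already produces $\|\mathbf{Z}_{\mathbf{A}}\mathbf{W}^{T}-\mathbf{Z}_{\mathbf{K}}\|\le C\delta_d^{-2}\sqrt{\log(n/\eta)}$, since one factor of $\delta_d^{-1}$ comes from $\|\mathcal{P}_{\mathbf{A}}\mathbf{A}-\mathcal{P}_{\mathbf{K}}\mathbf{K}\|$ and another from dividing by $\lambda_d(\mathbf{K})\asymp n\rho_n\delta_d$; this is harmless for the final statement because the target exponent in \eqref{eq:49} is $-2$ anyway. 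Second, and more substantively, the passage from an aggregate Frobenius bound to a per-vertex bound cannot be done by exchangeability plus Markov alone --- that route loses a factor $\eta^{-1/2}$ relative to \eqref{eq:49}. You correctly identify this as obstacle~(ii), and the fix is indeed the one you sketch at the end: the row-wise bound in \citet{tangs.:_univer} is obtained directly, essentially by treating the $i$-th row as a Bernoulli proximity vector and controlling it via a vector Bernstein / leave-one-out argument, rather than by distributing an aggregate bound.
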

Theorem~\ref{thm:2} and its corollary states that in
the latent position model, the out-of-sample embedded points can be
rotated to be very close to the true feature map with high
probability. This suggest that successful statistical inference on the
out-of-sample points is possible. As an example, we investigate the
problem of vertex classification for latent position graphs whose link functions
$\kappa$ belong to the class of universal kernels. Specifically, we
consider an approach that proceeds by embedding the
vertices into some $\mathbb{R}^{d}$ followed by finding a linear
discriminant in that space. It was shown in \cite{tangs.:_univer} that
such an approach can be made to yield a universally consistent vertex
classifier if the vertex to be classified is embedded in-sample as 
the number of in-sample vertices increases to $\infty$. In the
following discussion we present a variation of this result in the case
where the vertex to be classified is embedded out-of-sample and the
number of in-sample vertices is fixed and finite. We 
show that under this out-of-sample setting, the misclassification
rate can be made arbitrarily small provided that the number of
in-sample vertices is sufficiently large (see Theorem~\ref{thm:1}).  

\begin{definition}
  \label{def:2}
  A continuous kernel on a metric space $\mathcal{X}$ is said to be a
  universal kernel, if for some choice of feature map $\Phi \colon
  \mathcal{X} \mapsto H$ of $\kappa$ to some Hilbert space $H$, the
  class of functions of the form 
  \begin{equation}
    \label{eq:17}
    \mathscr{F}_{\Phi} = \{ \langle w, \Phi \rangle_{H} \colon w \in H \}
  \end{equation}
  is dense in $\mathscr{C}(\mathcal{X})$, i.e., for any continuous $g
  \colon \mathcal{X} \mapsto \mathbb{R}$ and any $\epsilon > 0$, there
  exists $f \in \mathscr{F}_{\Phi}$ such that $\|f - g\|_{\infty} <
  \epsilon$.
\end{definition} We note that if $\kappa$ is such that
$\mathscr{F}_{\Phi}$ is dense in $\mathcal{C}(\mathcal{X})$ for some
feature map $\Phi$ of $\kappa$, then $\mathscr{F}_{\Phi'}$ is dense in
$\mathcal{C}(\mathcal{X})$ for any feature map $\Phi'$ of $\kappa$,
i.e., the universality of $\kappa$ is independent of its choice of
feature map. In addition, any feature map $\Phi$ of a universal kernel
$\kappa$ is injective. The following result lists several well-known
universal kernels. For more on universal kernels, the reader is
referred to
\cite{steinwart01:_suppor_vector_machin,micchelli06:_univer}.
\begin{proposition}
  \label{prop:2}
  Let $S$ be a compact subset of $\mathbb{R}^{d}$. Then the following
  kernels are universal on $S$.
  \begin{itemize}
  \item exponential kernel $\kappa(x,y) = \exp(\langle x, y \rangle)$.
  \item Gaussian kernel $\kappa(x,y) = \exp( - \| x - y 
    \|^{2}/\sigma^{2})$ for $\sigma > 0$.
  \item The binomial kernel $\kappa(x,y) = (1 - \langle x, y
    \rangle)^{-\alpha}$ for $\alpha > 0$.
  \item inverse multiquadrics $\kappa(x,y) = (c^{2} + \| x
    - y \|^{2})^{-\beta}$ for $\beta > 0$.
  \end{itemize}
\end{proposition}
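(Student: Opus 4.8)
The plan is to reduce the universality of each of the four kernels to a single mechanism: to exhibit a feature map whose associated function class $\mathscr{F}_{\Phi}$ contains all polynomials, so that density in $\mathcal{C}(S)$ follows at once from the Stone--Weierstrass theorem. The workhorse is the following observation, which I would isolate as a lemma. Suppose $\kappa$ is a dot-product kernel $\kappa(x,y) = \sum_{n=0}^{\infty} a_n \langle x,y\rangle^{n}$ whose Taylor coefficients satisfy $a_n > 0$ for every $n \geq 0$ and whose series converges on $S \times S$. Expanding $\langle x,y\rangle^{n}$ by the multinomial theorem, one reads off a feature map $\Phi(x) = (\sqrt{a_{|\alpha|}\binom{|\alpha|}{\alpha}}\, x^{\alpha})_{\alpha}$ indexed by multi-indices $\alpha$. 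Because each monomial $x^{\alpha}$ occurs with a strictly positive coefficient, choosing $w$ supported on a single index recovers that monomial; hence every monomial, and therefore every polynomial, lies in $\mathscr{F}_{\Phi}$. Since $\Phi$ is continuous we have $\mathscr{F}_{\Phi} \subseteq \mathcal{C}(S)$, and since polynomials separate points and contain the constants, Stone--Weierstrass gives their density in $\mathcal{C}(S)$; a fortiori $\mathscr{F}_{\Phi}$ is dense, and by the feature-map-independence noted after Definition~\ref{def:2} the kernel is universal.

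The exponential and binomial kernels fall to this lemma directly. For $\kappa(x,y) = \exp(\langle x,y\rangle)$ the coefficients are $a_n = 1/n! > 0$. For the binomial kernel the generalized binomial series $(1-t)^{-\alpha} = \sum_{n\geq 0}\binom{\alpha+n-1}{n} t^{n}$, valid for $|t| < 1$, gives $a_n = \alpha(\alpha+1)\cdots(\alpha+n-1)/n! > 0$ for $\alpha > 0$; here one uses that $S$ lies in the open unit ball so that $|\langle x,y\rangle| \leq \sup_{x \in S}\|x\|^{2} < 1$ and the series converges uniformly on $S \times S$.

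The two radial kernels each need one further ingredient. For the Gaussian I would use the factorization $\exp(-\|x-y\|^{2}/\sigma^{2}) = g(x)\,\exp(2\langle x,y\rangle/\sigma^{2})\,g(y)$ with $g(x) = \exp(-\|x\|^{2}/\sigma^{2})$. The middle factor is a dot-product kernel with positive coefficients $(2/\sigma^{2})^{n}/n!$, hence universal by the lemma, and I would combine this with a second, elementary lemma: if $\kappa$ is universal and $g \colon S \to (0,\infty)$ is continuous, then $g(x)\kappa(x,y)g(y)$ is universal, because its feature map is $g(x)\Phi(x)$ and multiplication by $g$ (bounded and bounded away from $0$ on the compact set $S$) is a homeomorphism of $\mathcal{C}(S)$, so preserves density. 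The inverse multiquadric is the genuinely harder case, since the cross terms in $c^{2}+\|x-y\|^{2}$ prevent any such factorization. Here I would invoke the Bernstein-type integral representation
\begin{equation*}
  (c^{2}+\|x-y\|^{2})^{-\beta} = \frac{1}{\Gamma(\beta)}\int_{0}^{\infty} s^{\beta-1} e^{-c^{2}s}\, e^{-s\|x-y\|^{2}}\, ds,
\end{equation*}
which exhibits the kernel as a mixture of Gaussians against the strictly positive weight $s^{\beta-1}e^{-c^{2}s}$. To pass from the universality of each Gaussian to that of the mixture I would route through the measure-theoretic characterization of universality (see \cite{micchelli06:_univer,steinwart01:_suppor_vector_machin}): $\mathscr{F}_{\Phi}$ fails to be dense in $\mathcal{C}(S)$ exactly when some nonzero finite signed Borel measure $\mu$ annihilates it, equivalently when $\int\!\int \kappa(x,y)\,d\mu(x)\,d\mu(y) = 0$. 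Applying this to the displayed representation and interchanging the two integrations by Fubini, the double integral of the inverse multiquadric against $\mu$ equals the integral over $s$ of the Gaussian double integrals times the positive weight; since each Gaussian is universal, its double integral is strictly positive for $\mu \neq 0$, forcing the whole expression to be strictly positive, a contradiction. Hence $\mu = 0$ and the inverse multiquadric is universal.

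The main obstacle is precisely the inverse multiquadric: unlike the other three it admits no reduction to a dot-product kernel, so the Stone--Weierstrass route is unavailable and one must instead channel universality through the integral representation together with the integral-sense strict positive definiteness of the Gaussian. The remaining work is bookkeeping---verifying convergence of the relevant series on the compact set $S$ (in particular the unit-ball restriction for the binomial kernel) and justifying the interchange of $\int_0^\infty ds$ with the double integral against $\mu$, which is immediate from the boundedness of $\kappa$ on $S \times S$ and the finiteness of $|\mu|$.
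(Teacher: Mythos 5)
The paper gives no proof of Proposition~\ref{prop:2}: it presents the list as a collection of well-known facts and defers entirely to \cite{steinwart01:_suppor_vector_machin} and \cite{micchelli06:_univer}. Your argument is a correct, essentially self-contained reconstruction of what those references do, so the comparison is really between a proof and a citation. Your dot-product lemma (strictly positive Taylor coefficients $\Rightarrow$ every monomial lies in $\mathscr{F}_{\Phi}$ $\Rightarrow$ Stone--Weierstrass) is precisely Steinwart's route to the exponential and binomial kernels, and the conformal factorization $g(x)\,e^{2\langle x,y\rangle/\sigma^{2}}\,g(y)$ combined with the observation that multiplication by a continuous function bounded away from zero on a compact set is a homeomorphism of $\mathcal{C}(S)$ disposes of the Gaussian in the same way. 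For the inverse multiquadric you correctly recognize that no dot-product reduction is available and switch to the dual characterization of universality (density of $\mathscr{F}_{\Phi}$ fails iff some nonzero finite signed measure $\mu$ annihilates it, iff $\int\!\!\int \kappa\,d\mu\,d\mu = 0$) together with the Gamma-integral representation of $(c^{2}+\|x-y\|^{2})^{-\beta}$ as a positive mixture of Gaussians; this is closer in spirit to \cite{micchelli06:_univer} and is the standard correct move. Two small points are worth recording. First, the binomial kernel is only defined, and your series only converges, when $S$ lies in the open unit ball --- a hypothesis the proposition as stated omits but which you rightly flag; this is a defect of the statement, not of your proof. Second, the equivalence between ``$\mu$ annihilates $\mathscr{F}_{\Phi}$'' and the vanishing of the double integral deserves its one-line justification via the mean embedding, $\bigl\| \int \kappa(\cdot,y)\,d\mu(y) \bigr\|_{\mathcal{H}}^{2} = \int\!\!\int \kappa\,d\mu\,d\mu$, which is legitimate here because $\kappa$ is bounded and continuous on the compact set $S$; with that noted, the Fubini interchange you invoke is justified exactly as you say.
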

Let $\mathcal{C}^{(d)}_{\Phi}$ be the class of linear functions on
$\mathbb{R}^{d}$ induced by the feature map $\Phi_{d}$ whose linear coefficients are normalized to have norm
at most $d$, i.e., $g \in \mathcal{C}^{(d)}_{\Phi}$ if and only if $g$ is of
the form
\begin{equation*}
  g(X) = \langle w, \Phi_{d}(X) \rangle_{\mathbb{R}^{d}}
\end{equation*}
for some $w \in \mathbb{R}^{d}, \|w\| \leq d$. We note that the
$\{\mathcal{C}_{\Phi}^{(d)}\}$ is a nested increasing sequence and
furthermore that
\begin{equation*}
 \bigcup_{d \geq 1} \mathcal{C}_{\Phi}^{(d)} = \mathcal{F}_{\Phi} =
\{ \langle w, \Phi \rangle_{\mathcal{H}} \colon w \in \mathcal{H}\}. 
\end{equation*}
Now, given $\{X_i\}_{i=1}^{n}$, let $\mathcal{C}^{(d)}_{T_n}$ be the
class of linear functions on $\mathbb{R}^{d}$ induced by the
out-of-sample extension $T_n(X;\{X_i\}_{i=1}^{n})$, i.e., $g \in
\mathcal{C}^{(d)}_{T_n}$ if and only if $g$ is of the form
\begin{equation}
  \label{eq:44}
  g(X) = \langle w, T_n(X;\{X_i\}_{i=1}^{n}) \rangle_{\mathbb{R}^{d}}.
\end{equation}
\begin{theorem}
  \label{thm:1}
  Let $\kappa$ be a universal kernel on $\mathcal{X}$. Let $\eta > 0$
  be arbitrary. Then for any
  $F_{X,Y}$ and any $\epsilon > 0$, there exists 
  $d$ and $n_0$ such that if $n \geq n_0$ 
  then
  \begin{equation}
    \label{eq:45}
    L^{*}_{T_n} \leq L^{*} + \epsilon,
  \end{equation} where $T_n \colon \mathcal{X} \mapsto \mathbb{R}^{d}$
is the out-of-sample mapping as defined in
Definition~\ref{def:1} and $L^{*}$ is the Bayes risk for the
classification problem with distribution $F_{X,Y}$.  
\end{theorem}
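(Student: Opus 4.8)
The plan is to exhibit, for a suitable fixed $d$ and all large $n$, a single linear classifier in $\mathcal{C}^{(d)}_{T_n}$ whose $0$--$1$ risk is within $\epsilon$ of the Bayes risk $L^{*}$; since $L^{*}_{T_n}$ is the infimum of the $0$--$1$ risk over classifiers in $\mathcal{C}^{(d)}_{T_n}$, this suffices. The construction has two ingredients that are then stitched together: an approximation-theoretic step that uses the universality of $\kappa$ to produce a good linear classifier on the \emph{population} feature map $\Phi_{d}$, and a perturbation step that uses Theorem~\ref{thm:2} to carry that classifier over to the \emph{empirical} out-of-sample map $T_n$.

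First, the universal approximation step. Because $\kappa$ is universal, $\mathcal{F}_{\Phi} = \bigcup_{d\ge 1}\mathcal{C}^{(d)}_{\Phi}$ is dense in $\mathcal{C}(\mathcal{X})$, and since $\mathcal{X}$ is a compact metric space $\mathcal{C}(\mathcal{X})$ is dense in $L^{1}(F)$, so $\mathcal{F}_{\Phi}$ is dense in $L^{1}(F)$. I would approximate the $\{-1,+1\}$-valued Bayes decision rule $h_0(x)=\mathrm{sign}(2\pi(x)-1)$, where $\pi(x)=\Pr[Y=1\mid X=x]$, by some $g^{\star}(x)=\langle w,\Phi_{d}(x)\rangle\in\mathcal{C}^{(d)}_{\Phi}$ with $\|w\|\le d$ and $\|g^{\star}-h_0\|_{L^{1}(F)}\le\theta$. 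Two consequences follow from a single such $g^{\star}$: its plug-in risk satisfies $L(\mathrm{sign}\,g^{\star})-L^{*}\le\|g^{\star}-h_0\|_{L^{1}(F)}\le\theta$ (on the disagreement region $|2\pi-1|\le 1$ while $|g^{\star}-h_0|\ge 1$); and, because $h_0$ takes values $\pm 1$, for any $t\in(0,1)$ the margin mass obeys $\Pr[|g^{\star}(X)|\le t]\le(1-t)^{-1}\theta$. The latter is what will let me cross the decision boundary safely. This fixes $d$ and $w$, depending only on $\epsilon$, $\kappa$, and $F$.

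Next, the transfer step. By Theorem~\ref{thm:2} there is a single orthogonal $\mathbf{W}$ (the same rotation for every out-of-sample point) with $\|\rho_n^{-1/2}\mathbf{W}T_n(X)-\Phi_{d}(X)\|\le\beta_n:=C\delta_{d}^{-3}\sqrt{d\log(n/\eta)/(n\rho_n)}$ on an event of probability at least $1-2\eta$; note $\beta_n\to 0$ as $n\to\infty$ since $n\rho_n=\omega(\log n)$ and $d$ is now fixed. Define $\tilde g(X)=\langle\rho_n^{-1/2}\mathbf{W}^{T}w,\,T_n(X)\rangle$, which lies in $\mathcal{C}^{(d)}_{T_n}$ because the rotation and scaling are absorbed into the coefficient vector (no norm constraint is imposed on $\mathcal{C}^{(d)}_{T_n}$). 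Then $|\tilde g(X)-g^{\star}(X)|=|\langle w,\rho_n^{-1/2}\mathbf{W}T_n(X)-\Phi_{d}(X)\rangle|\le d\,\beta_n$. Combining with the margin bound, for $t=d\beta_n<1$,
\[
\Pr[\mathrm{sign}\,\tilde g(X)\ne\mathrm{sign}\,g^{\star}(X)]\le\Pr[|g^{\star}(X)|\le d\beta_n]\le(1-d\beta_n)^{-1}\theta,
\]
so $L(\mathrm{sign}\,\tilde g)\le L(\mathrm{sign}\,g^{\star})+(1-d\beta_n)^{-1}\theta\le L^{*}+\theta+(1-d\beta_n)^{-1}\theta$. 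Choosing $\theta$ small (which fixes $d$) and then $n_0$ large enough that $d\beta_n\le 1/2$ for $n\ge n_0$ gives $L^{*}_{T_n}\le L(\mathrm{sign}\,\tilde g)\le L^{*}+\epsilon$; here $\eta$ enters only as the confidence level of the event supplied by Theorem~\ref{thm:2}.

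I expect the main obstacle to be this transfer across the decision boundary: the $0$--$1$ risk is discontinuous, so closeness of $\tilde g$ to $g^{\star}$ in value does not by itself control the change in misclassification probability. The device above---approximating the $\pm 1$-valued Bayes rule in $L^{1}(F)$ rather than approximating $2\pi-1$---resolves this because it simultaneously yields near-Bayes risk and an explicit bound on the mass of $X$ with small margin $|g^{\star}(X)|$; an alternative is to run the whole argument with the convex surrogate $\varphi$ of Algorithm~\ref{alg:main}, using its Lipschitz property for the transfer ($|R_{\varphi}(\tilde g)-R_{\varphi}(g^{\star})|\le L_{\varphi}\,d\,\beta_n$) and a classification-calibration inequality to pass back to the $0$--$1$ risk. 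A secondary bookkeeping point is that Theorem~\ref{thm:2} is stated for a fixed out-of-sample point, so to bound the risk---an average over the random test point---one integrates the per-point guarantee, which is where the probability $1-2\eta$ and the boundedness of $\Phi_{d}$ and $T_n$ are used.
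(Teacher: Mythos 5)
Your proposal is correct, but it takes a genuinely different route from the paper. The paper works entirely through a classification-calibrated convex surrogate $\varphi$: it approximates the $\varphi$-risk minimizer $f^{*}$ by some $\tilde f\in\mathcal{C}^{(d)}_{\Phi}$ in sup-norm, transfers the \emph{surrogate} risk from $\langle w,\Phi_d\rangle$ to $\langle w,T_n\rangle$ using the local Lipschitz property of $\varphi$ together with Theorem~\ref{thm:2} (exactly your Eq.~\eqref{eq:46}-style bound $|R_{\varphi,f}-R_{\varphi,g}|\le M\sqrt{d}\,\mathbb{E}\|\Phi_d(X)-T_n(X)\|$), and only at the end converts back to $0$--$1$ excess risk via the calibration inequality $L(f)-L^{*}\le\psi^{-1}(R_{\varphi,f}-R_{\varphi}^{*})$ of Bartlett, Jordan, and McAuliffe. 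You instead stay with the $0$--$1$ loss throughout and replace calibration by a margin argument: approximating the $\pm1$-valued Bayes rule $h_0$ in $L^{1}(F)$ simultaneously gives near-Bayes plug-in risk and the small-margin-mass bound $\Pr[|g^{\star}(X)|\le t]\le(1-t)^{-1}\theta$, which is precisely what is needed to push the uniform $O(d\beta_n)$ perturbation from Theorem~\ref{thm:2} across the discontinuity of the sign function. Your route is more elementary and self-contained (no surrogate, no $\psi$-transform), at the cost of needing $L^{1}(F)$-density of $\mathcal{F}_{\Phi}$ (sup-norm density in $\mathcal{C}(\mathcal{X})$ plus density of $\mathcal{C}(\mathcal{X})$ in $L^{1}(F)$) to approximate a discontinuous target; the paper's route ties the result to the empirical $\varphi$-risk minimization actually performed in Algorithm~\ref{alg:main} and, via $\psi^{-1}$, yields a rate in principle. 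Two bookkeeping points you flag are shared by the paper's own proof and are worth making explicit in a final write-up: the event of Theorem~\ref{thm:2} fails with probability $2\eta$, contributing an additive $2\eta$ to the $0$--$1$ risk (so $\eta$ must be folded into $\epsilon$), and the aligning rotation $\mathbf{W}$ (and hence the coefficient vector $\rho_n^{-1/2}\mathbf{W}^{T}w$ of your competitor classifier) is random, depending on $\{X_i\}$ and $\mathbf{A}$ --- the paper silently drops $\mathbf{W}$ in Eq.~\eqref{eq:46}, so you are if anything more careful here, but comparing against the unconditional Bayes risk $L^{*}_{T_n}$ strictly requires a deterministic function of $T_n(X)$, a gap neither argument fully closes.
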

We make a brief remark regarding Theorem~\ref{thm:1}. The term
$L^{*}_{T_n}$ in Eq.~\eqref{eq:45} refers to the Bayes risk for the
classification problem given by the out-of-sample mapping $T_n$. As
noted earlier,  
$T_n(X;\{X_i\}_{i=1}^{n})$ is a random mapping for any given $X$, even when conditioned
on the $\{X_i\}$ as $T_n(X)$ also depends on the latent position graph
$\mathbf{A}$ generated by the $\{X_i\}$. As such, with slight abuse of
notations, $L^{*}_{T_n}$ refers
to the Bayes-risk of the mapping $T_n$ when not conditioned on any set
of $\{X_i\}$. That is, $L^{*}_{T_n}$ is the Bayes-risk for
out-of-sample embedding in the presence of $n$ in-sample latent
positions, i.e., the latent positions of the in-sample points are
integrated out. As the information processing
lemma implies $L^{*}_{T_n} \geq L^{*}$, one can view Eq.~\eqref{eq:45}
as a sort of converse to the information processing lemma in that
the degradation due to the out-of-sample embedding transformation $T_n$ can be made
negligible if the number of in-sample points is sufficiently large.
\begin{proof} 
Let $\varphi$ be any classification-calibrated (see
\cite{bartlett06:_convex}) convex surrogate of the $0-1$ loss. For any
measurable function $f \colon \mathcal{X} \mapsto \mathbb{R}$, let
$R_{\varphi,f}$ be defined by $\mathbb{E}[\varphi(Yf(X))]$. 
Let $f^{*}$ be a measurable function such that 
$R_{\varphi,f^{*}} = R_{\varphi}^{*} = \inf R_{\varphi,f}$
where the infimum is taken over the set of all measurable functions on
$\mathcal{X}$. As $\mathcal{F}_{\Phi}$ is dense in the set of
measurable functions on $\mathcal{X}$, without loss of generality we
can take $f^{*} \in \mathcal{F}_{\Phi}$. Now let $\epsilon > 0$ be
arbitrary. As $\mathcal{F}_{\Phi} = \bigcup_{d \geq 1}
\mathcal{C}^{(d)}_{\Phi}$, and the $\{\mathcal{C}^{(d)}_{\Phi}\}$ is a
nested increasing sequence, there exists a $d \geq 1$ such that for
some $\tilde{f} \in \mathcal{C}^{(d)}_{\Phi}$, we have $\|\tilde{f} -
f^{*} \|_{\infty} < \epsilon$. Thus, for any $\epsilon > 0$, there
exists a $d \geq 1$ such that for some $\tilde{f} \in
\mathcal{C}^{(d)}_{\Phi}, R_{\varphi,\tilde{f}} - R_{\varphi}^{*} <
\epsilon$. Now let $f \in \mathcal{C}^{(d)}_{\Phi}$ be arbitrary. Then $f =
\langle w, \Phi_{d} \rangle_{\mathbb{R}^{d}}$ for some $w \in
\mathbb{R}^{d}, \|w\| \leq d$. Let $g = \langle w, T_n
\rangle_{\mathbb{R}^d}$ and consider the difference $R_{\varphi,f} -
R_{\varphi,g}$. As $\varphi$ is convex, it is locally-Lipschitz and \begin{equation}
  \label{eq:46}
  \begin{split}
  |R_{\varphi,f} - R_{\varphi,g}| &= |\mathbb{E}[\varphi(Yf(X)) -
  \varphi(Yg(X))]| 
  \leq \mathbb{E}[|M(Yf(X) - Yg(X))|]  \\
  & \leq \mathbb{E}[M |\langle w, \Phi_{d}(X) - T_n(X)
  \rangle_{\mathbb{R}^{d}}] 
  \leq \mathbb{E}[M \sqrt{d} \|\Phi_{d}(X) - T_n(X)\|] \\
  & \leq (1 - n^{-2})*C \delta_{d}^{-3} \sqrt{\frac{ d\log{n}}{n
      \rho_n}} + n^{-2},
  \end{split}
\end{equation}
for some constant $M > 0$. 
Furthermore, we can take $M$ to be
independent of $f$. Thus, there exists some $n_0$ such that for
all $n \geq n_0$, $\sup |R_{\varphi,f} - R_{\varphi,g}| \leq
\epsilon$ where the supremum is taken over all $w \in \mathbb{R}^{d},
\|w\| \leq d$.  
Now let $w^{*}$ be such that $\langle w^{*}, \Phi_{d} \rangle = \arginf_{f \in
  \mathcal{C}^{(d)}_{\Phi}} R_{\varphi,f}$. We then have
  \begin{equation}
    \label{eq:47}
      \inf_{g \in \mathcal{C}^{(d)}_{T_n}} R_{\varphi,g}  \leq
      R_{\varphi, \langle w^{*}, T_n\rangle} \leq R_{\varphi, \langle
        w^{*}, \Phi_{d} \rangle } + \epsilon \leq  \inf_{f \in
        \mathcal{C}^{(d)}_{\Phi}} R_{\varphi,f} + \epsilon 
      \leq R_{\varphi}^{*} + 2\epsilon.
  \end{equation}
If $\varphi$ is a classification-calibrated convex surrogate of
the $0-1$ loss, then there exists a non-decreasing function $\psi
\colon [0,1] \mapsto [0,\infty)$ such that $L(f) - L^{*} \leq
\psi^{-1}(R_{\varphi,f} - R_{\varphi}^{*})$ \citep{bartlett06:_convex}. Thus by Eq.~\eqref{eq:47}
we have
\begin{equation}
  \label{eq:48}
  L^{*}_{T_n} - L^{*} \leq L(\mathrm{sign}(\arginf_{g \in
    \mathcal{C}^{(d)}_{T_n}} R_{\varphi,g})) - L^{*} \leq
  \psi^{-1}(2 \epsilon).
\end{equation}
As $\epsilon$ is arbitrary, the proof is completed.
\end{proof}

\section{Experimental Results}
\label{sec:experimental-results}
\begin{figure*}[htbp] 
\centering \subfloat[scatterplot of the data]{\includegraphics[width=0.5\textwidth]{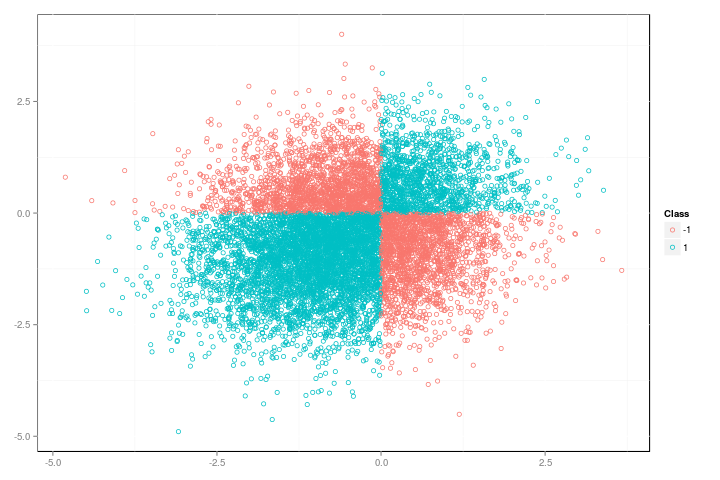}
\label{fig:simulated-a} \hfil }
\centering 
\subfloat[classification performance ]{\includegraphics[width=0.5\textwidth]{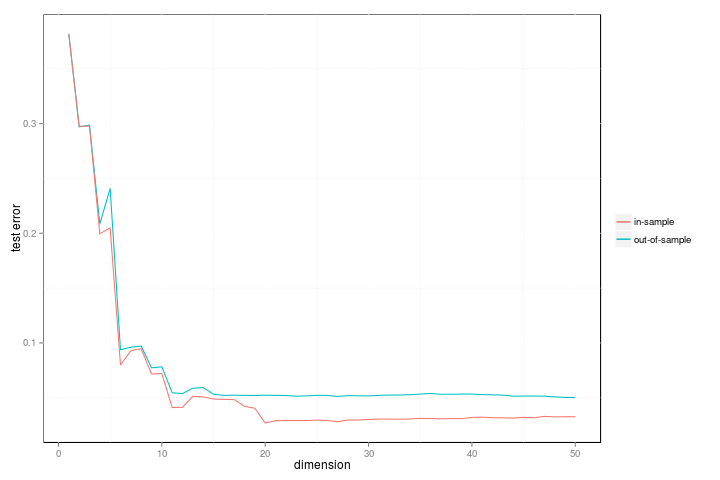}
\label{fig:simulated-b}}
\caption{Comparison of the in-sample against out-of-sample classification
  performance for a simulated data example. The performance
  degradation due to out-of-sample embedding is less than 2\%.}
\label{fig:simulated}
\end{figure*} In this section, we illustrate the out-of-sample
extension described in \S~\ref{sec:experimental-results} by studying
its impact on classification performance through two simulation
examples and a real data example. In our first example, data is
simulated using a mixture of two multivariate normals in
$\mathbb{R}^{2}$. The components of the mixture have equal prior and
the first component of the mixture has mean parameter $(1,1)$ and
identity covariance matrix while the second component has mean
$(-1,-1)$ and identity covariance matrix. We sample $10000$ data
points from this mixture and assign class labels in $\{-1,1\}$ to them
according to the quadrant in which they fall, i.e., if $X_i = (a,b)
\in \mathbb{R}^{2}$ then $Y_i = \mathrm{sign}(ab)$.
Fig~\ref{fig:simulated-a} depicts the scatter plot of the sampled data
colored according to their class labels. The Bayes risk is $0$ for
classifying the $X_i$. A latent position graph $G$ is then generated
based on the sampled data points with $\kappa$ being the Gaussian
kernel.

To measure the in-sample classification performance, we embed $G$ into
$\mathbb{R}^{d}$ for $d$ ranging from $1$ through $50$. A subset of
$2000$ vertices is then selected uniformly at random and designated
as the training data set. The remaining $8000$ vertices constitute
the testing data set. For each choice of
dimension $d$, we select a linear classifier $g_d$ by performing a least
square regression on the $2000$ training data points and measure the
classification error of $g_d$ on the $8000$ testing data points. The
results are plotted in Fig~\ref{fig:simulated-b}.

For the out-of-sample classification performance, we embed the induced
graph $G'$ formed by the $2000$ training vertices in the above
description. For each choice of dimension $d$, we out-of-sample embed
the $8000$ testing vertices into $\mathbb{R}^{d}$. For each choice of
dimension $d$, a linear classifier $g_d$ is once again selected by
linear regression using the in-sample training data points and tested
on the out-of-sample embedded testing data points. The classification
errors are also plotted in Fig~\ref{fig:simulated-b}.  A quick glance
at the plots in Fig~\ref{fig:simulated-b} suggests that the
classification performance degradation due to the out-of-sample
embedding is negligible.

Our next example uses the abalone dataset from the UCI machine
learning repository \citep{bache2013}. The
data set consists of $4177$ observations of nine different abalones
attributes. The attributes are sex, number of rings, and seven other
physical measurements of the abalones, e.g., length, diameter, and
shell weight. The number of rings in an abalone is an estimate of its
age in years. We consider the problem of classifying an abalone based
on its physical measurements. Following the description of the data
set, the class labels are as follows. An abalone
is classified as class 1 if its number of rings is eight or less. It
is classified as class 2 if its number of rings is nine or ten, and it
is classified as class 3 otherwise. The dataset is partitioned into a
training set of $3133$ observations and a test set of $1044$
observations. The lowest misclassification rate
is reported to be 35.39\% \citep{waugh95:_exten_cascad_correl}.

We form a graph $G$ on $4177$ vertices following a latent position
model with a Gaussian kernel $\exp(- 2\|X_i - X_j\|^{2})$ where $X_i
\in \mathbb{R}^{7}$ represents the physical measurements of the $i$-th
abalone observation. To measure the in-sample classification
performance, we embed the vertices of $G$ into $\mathbb{R}^{50}$ and
train a multi-class linear SVM on the embedding of the $3133$ training
vertices. We then measure the mis-classification rate of this
classifier on the embedding of the $1044$ testing vertices. For the
out-of-sample setting, we randomly chose a subset of $m$ vertices from
the training set and embed the
resulting induced subgraph $G_m$ into $\mathbb{R}^{50}$ then
out-of-sample embed the remaining $4177 - m$ vertices. We then train a
multi-class linear SVM on the $3133 - m$ out-of-sample embedded
vertices in the training set and measure the mis-classification error
on the vertices in the testing set. The results for various choices of
$m \in \{200,600,1000,\dots,2600\}$ are given in
Table~\ref{tab:abalone}.
\begin{table}[htbp]
  \centering
  \begin{tabular}[center]{|c|c|c|c|c|c|c|}
    \hline
     $m = 200$ & $m = 600$ & $m = 1000$ & $m = 1400$ & $m = 1800$ & $m
     = 
     2200$ & $m = 2600$ \\ \hline
     0.444 & 0.386 & 0.391 & 0.375 & 0.382 & 0.374 & 0.401 \\ \hline
  \end{tabular}
  \caption{Out-of-sample classification performance for the abalone
    dataset. The in-sample classification performance is $0.358$. The
    lowest reported mis-classification rate is $0.354$. The
    performance degradation due to the out-of-sample embedding is as
    low as $2\%$.}
  \label{tab:abalone}
\end{table}

Our final example is on the CharityNet dataset. The data set consists
of 2 years of anonymized donations transactions between anonymized
donors and charities. There are in total $3.3$ million transactions
representing donations from 1.8 million donors to 5700 charities. Note
that the data set does not contains any explicit information on the
charities to charities relationship, i.e., the charities relate to one
another through the donations transactions between donors and
charities. We investigate the problem of clustering the charities under the
assumption that there are additional information on the donors, but
virtually no information on the charities. 

We can view the problem as embedding an adjacency matrix $\mathbf{A} =
\bigl[\begin{smallmatrix} \mathbf{A}_{dd} & \mathbf{A}_{dc} \\
\mathbf{A}_{dc}^{T} & \mathbf{A}_{cc}
    \end{smallmatrix}\bigr]$
follows by clustering the vertices of $\mathbf{A}$. Here
$\mathbf{A}_{dd}$ represent the (unobserved) donors to donors graph,
$\mathbf{A}_{dc}$ represents the donors to charities graph and
$\mathbf{A}_{cc}$ represents the (unobserved) charities to charities graph. 
Because we only have transactions between donors
and charities, we do not observe any part of $\mathbf{A}$ except
$\mathbf{A}_{dc}$. Using the additional information on the donors,
e.g., geographical information of city and state, we can
simulate $\mathbf{A}_{dd}$ by modeling each of $\mathbf{A}_{dd}(i,j) \sim
\mathrm{Bern}(\exp(-d^2_{ij}))$, where $d_{ij}$ is a pairwise distance
between donors $i$ and $j$. We then use $\mathbf{A}_{dd}$ to obtain
an embedding of the donors. Given this embedding, we out-of-sample
embed the charities and cluster
them using the Mclust implementation of \citep{fraley99:_mclus} for
Gaussian mixture models. We note
that for this example, a biclustering of $\mathbf{A}_{dc}$
is also applicable. 

For this experiment, we randomly sub-sample $10,000$ donors and use
the associated charities and transactions, which yields $1,722$ unique
charities, $9,877$ unique donors, and $17,764$ transactions. There are
$52$ unique states for the charities, and the model-based clustering
yields $\widehat{K}=17$ clusters. We validate our clustering via
calculating the adjusted Rand Index (ARI) between the clustering
labels and the true labels of the charities. We use the state
information of the charities as the true labels, and we obtain an ARI
of $0.01$. This number appears small at first sight so we generate a
null distribution of the adjusted Rand Index by shuffling the true
labels. Figure~\ref{fig:charity} depicts the null distribution of the
ARI with $10,000$ trials. It shows that $\mu=3.5e-05$ and
$\sigma=0.003$. The shaded area indicates the number of times the null
ARIs are larger than the alternative ARI, which is the $p$-value. With
the $p$-value of $6e-04$, we claim that the ARI obtained by clustering
the out-of-sample embedded charities is significantly better than 
chance. In addition, this example also illustrates the applicability
of out-of-sample embedding to scenarios where the lack of information
regarding the relationships between a subset of the rows might prevents the use of
spectral decomposition algorithms for embedding the whole matrix. \begin{figure*}[htbp] 
\centering 
\subfloat[Density plot]{
\includegraphics[width=0.45\textwidth]{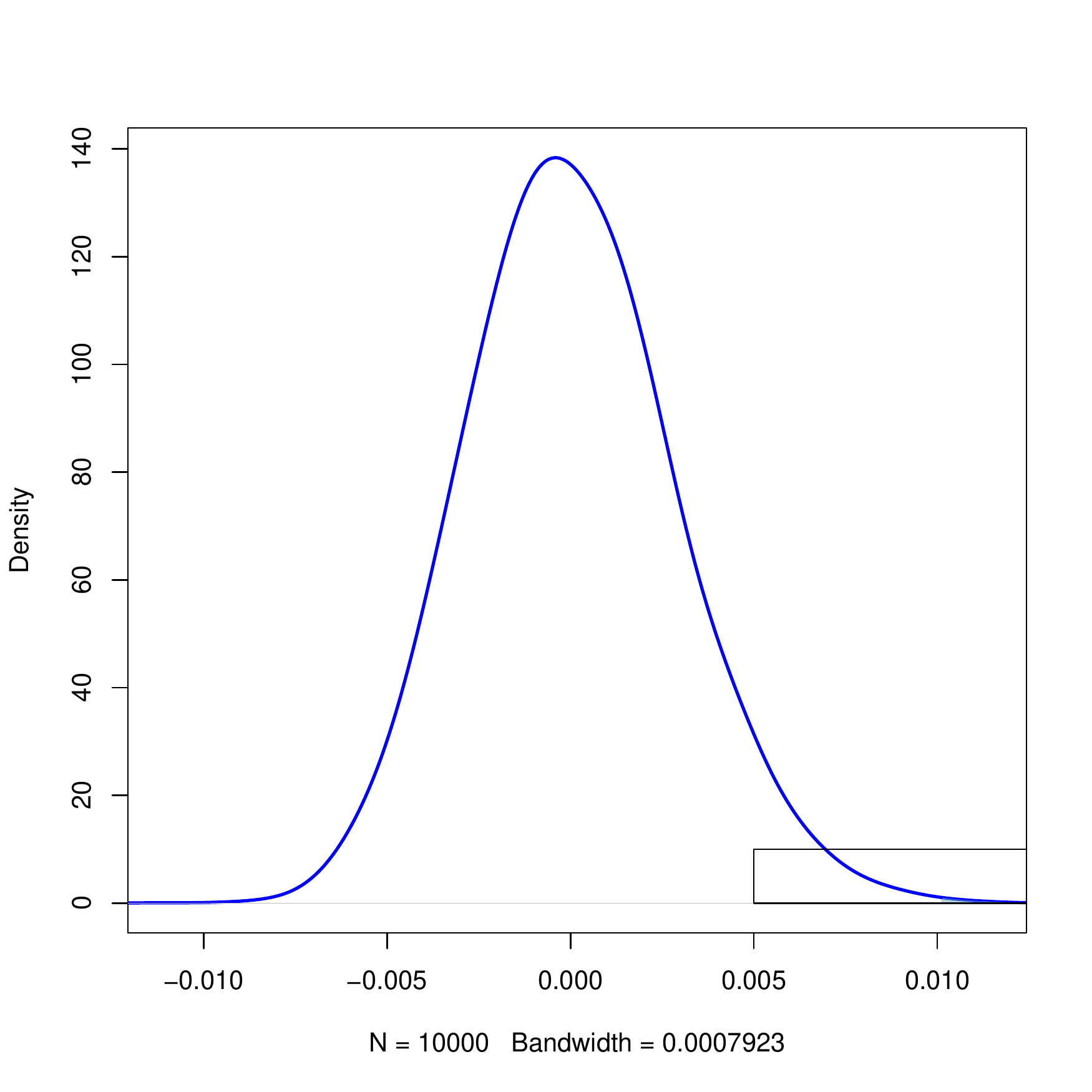}
\hfil
}
\subfloat[Zoomed-in tail of density plot]{\includegraphics[width=0.45\textwidth]{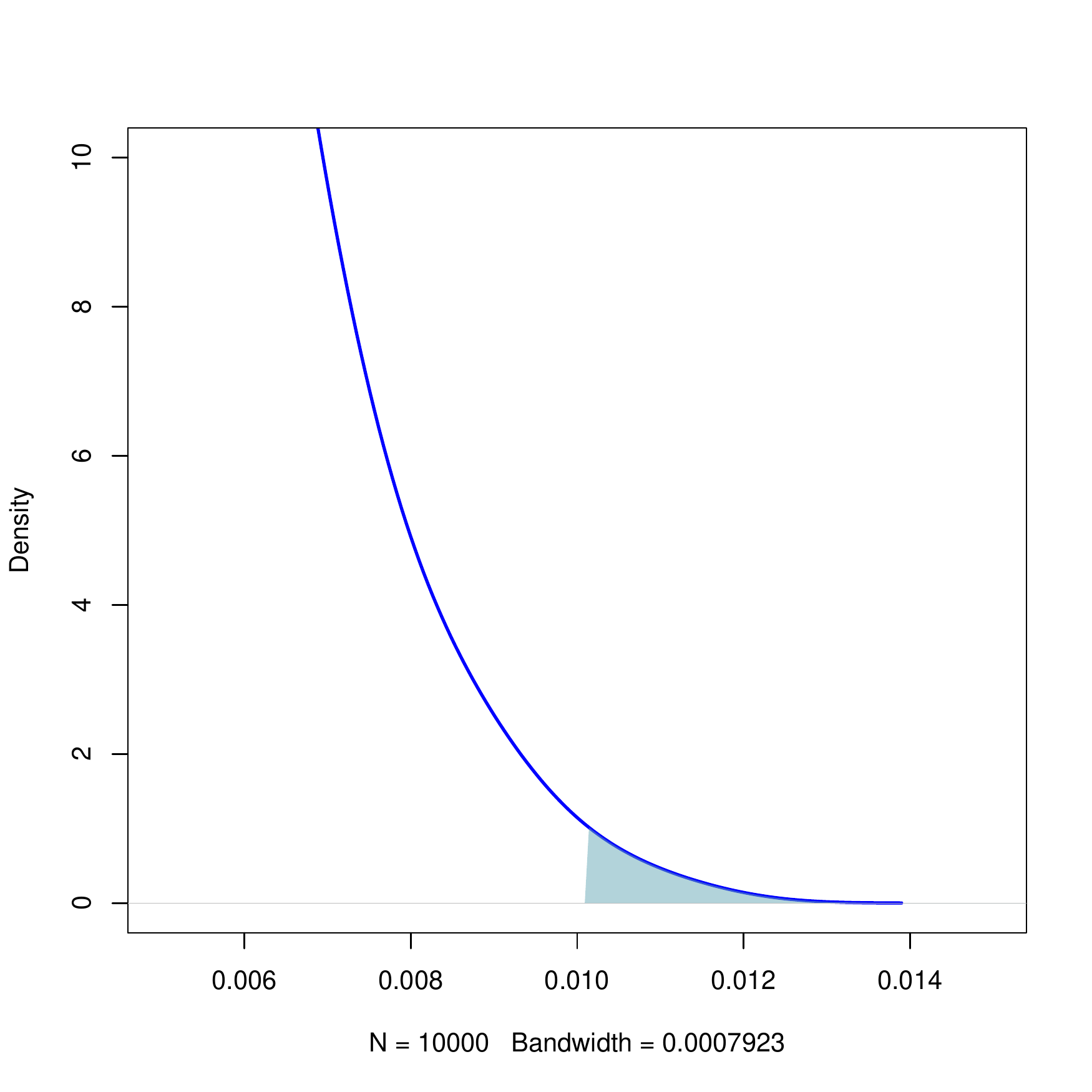}}
\caption{Density plots for the null distribution, under a
  permutation test, of the ARI values
between the clustering labels and the permuted true labels (state
information of the charities). The shaded area above the ARI value
between the clustering labels and the true labels represent the
estimated $p$-value. The plot indicates
that the ARI value of the clustering of the out-of-sample charities
is (statistically significant) better than chance.}
\label{fig:charity}
\end{figure*} 

\section{Conclusions}
\label{sec:conclusions} In this paper we investigated the
out-of-sample extension for embedding out-of-sample vertices in graphs
arising from a latent position model with positive definite kernel
$\kappa$. We showed, in Theorem~\ref{thm:2}, that if the number of
in-sample vertices is sufficiently large, then with high-probability,
the embedding into $\mathbb{R}^{d}$ given by the out-of-sample
extension is close to the true (truncated) feature map $\Phi_d$. This
implies that inference for the out-of-sample vertices using their
embeddings is appropriate, e.g.,Theorem~\ref{thm:1}. Experimental
results on simulated data suggest that under suitable conditions, the
degradation due to the out-of-sample extension is negligible. 

The out-of-sample extension described in this paper is related to the
notion of ``sketching'' and Nystr\"{o}m approximation for matrices
\citep{bengio04:_out_lle_isomap_mds_eigen,williams01:_using_nystr,gittens:_revis_nystr,drineas05:_nystr_gram,platt05:_fastm_metric_mds_nyst}. This
connection suggests inquiry on how to select the in-sample vertices
via consideration of the Nystr\"{o}m approximation so as to yield the
best inferential performance on the out-of-sample vertices, i.e.,
whether one can use results on error bounds in the Nystr\"{o}m
approximation to augment the selection of the in-sample vertices. A
possible approach might be to select the sketching matrix
$\mathbf{S}$, and hence the in-sample vertices, via a non-uniform
importance sampling based on the leverage scores of the rows of
$\mathbf{A}$. The leverage score of row $i$ of $\mathbf{A}$ is the
$l_2$ norm of the $i$-th row of $\mathbf{U}$ in the
eigen-decomposition $\mathbf{U} \bm{\Sigma} \mathbf{U}^{T}$ of
$\mathbf{A}$, and fast approximation methods to compute the leverage
scores are available, see e.g.
\citet{clarkson13:_low,drineas12:_fast}. We believe the investigation of this and
other approaches to selecting the in-sample vertices will yield 
results that are useful and relevant for application domains.

Finally, as mentioned in Section~\ref{sec:out-sample-estim}, the
out-of-sample extension as defined in this paper depends only on the
in-sample vertices. Hence, the embedding of a batch of out-of-sample
vertices does not uses the information contained in the relationship
between the out-of-sample vertices. A modification of the
out-of-sample extension presented herein that uses this
information in the batch setting is possible, see e.g.
\cite{trosset08} for such a modification in the case of classical
multidimensional scaling. However, the construction similar to that in
\cite{trosset08} will yield a convex but non-linear optimization problem
with no closed-form solution and is much more complicated to
analyze. We thus note that it is of potential interest to introduce an
out-of-sample extension in the batch setting that is simple and amenable to
analysis. 
\section*{Acknowledgements} 
This work was partially supported by
National Security Science and Engineering Faculty Fellowship (NSSEFF),
Johns Hopkins University Human Language Technology Center of
Excellence (JHU HLT COE), the XDATA program of the Defense
Advanced Research Projects Agency (DARPA) administered through Air
Force Research Laboratory contract FA8750-12-2-0303, and
the Acheson J. Duncan Fund for the Advancement of Research in
Statistics. 
\appendix
\section*{Appendix A: Proof of Theorem~\ref{thm:2}}
\label{sec:proof-theor-refthm:2} 
We now proceed to prove Theorem~\ref{thm:2}. 
First recall the definition of $T_n(X)$ in terms of the Moore-Penrose
pseudo-inverse $\mathbf{Z}^{\dagger}$ of $\mathbf{Z} =  \mathbf{U}_{\mathbf{A}}
\mathbf{S}_{\mathbf{A}}^{1/2}$ and $\bm{\xi}$ from Definition~\ref{def:1}. We
consider the expression
\begin{equation}
  \label{eq:53}
  T_n(X) = \mathbf{Z}^{\dagger} \bm{\xi} =
  (\mathbf{Z} - \mathbf{W} \tilde{\mathbf{Z}}^{\dagger}) \bm{\xi} +
  \mathbf{W} \tilde{\mathbf{Z}}^{\dagger}(\bm{\xi} - \mathbb{E}[\bm{\xi}]) +
  \mathbf{W} \tilde{\mathbf{Z}}^{\dagger} \mathbb{E}[\bm{\xi}]
\end{equation} where $\tilde{\mathbf{Z}}^{\dagger}$ is the
Moore-Penrose pseudo-inverse of $\tilde{\mathbf{Z}} =
\mathbf{U}_{\mathbf{K}} \mathbf{S}_{\mathbf{K}}^{1/2}$ and
$\mathbf{W}$ is some orthogonal matrix in $\mathcal{M}_{d}$. A rough
sketch of the argument then goes as follows. We first show that
$\mathbf{Z}^{\dagger}$ is ``close'' (up to rotation) in operator norm
to $\tilde{\mathbf{Z}}^{\dagger}$. This allows us to conclude that
$(\mathbf{Z}^{\dagger} - \tilde{\mathbf{Z}}^{\dagger}) \bm{\xi}$ is
``small''. We then show that $\tilde{\mathbf{Z}}^{\dagger} (\bm{\xi} -
\mathbb{E}[\bm{\xi}])$ is ``small'' as it is a sum of zero-mean random
vectors in $\mathbb{R}^{d}$. We then relate $\rho_{n}^{-1/2}
\tilde{\mathbf{Z}}^{\dagger} \mathbb{E}[\bm{\xi}]$ to the projection
$\hat{\mathcal{P}}_{d}$ of the feature map $\Phi$ into
$\mathbb{R}^{d}$ where $\hat{\mathcal{P}}_{d}$ is induced by the
eigenvectors of $\mathbf{K}$. Finally, we use results on the
convergence of spectra of $\mathbf{K}$ to the spectra of $\mathcal{K}$
to show that the projection $\hat{\mathcal{P}}_{d}$ of $\Phi$ is
``close'' (up to rotation) to the projection that maps $\Phi$ into
$\Phi_{d}$. We thus arrive at an expression of the form
$\rho_{n}^{-1/2} \mathbf{W} T_n(X) \approx \Phi_{d}(X)$ as in the statement of
Theorem~\ref{thm:2}.

We first collect some assorted bounds for the eigenvalues of
$\mathbf{A}$ and $\mathbf{K}$ and bounds for the projection onto the
subspaces of $\mathbf{A}$ or $\mathbf{K}$ in the following
proposition. 
\begin{proposition}
  \label{prop:1} Let $\mathcal{P}_{\mathbf{A}}$ and
$\mathcal{P}_{\mathbf{K}}$ be the projection operators onto the
subspace spanned by the eigenvectors corresponding to the $d$ largest
eigenvalues of $\mathbf{A}$ and $\mathbf{K}$, respectively. Denote by
$\delta_{d}$ the quantity $\lambda_{d}(\mathcal{K}) -
\lambda_{d+1}(\mathcal{K})$ and suppose that $\delta_{d} > 0$. Assume
also that $n$ satisfies $\delta_{d}(\mathcal{K}) > 4\sqrt{2}\sqrt{(n \rho_n)^{-1}
\log{(n/\eta)}}$. Then with probability at least $1 - 2\eta$, the
following inequalities hold simultaneously.
    \begin{gather}
      \| \mathbf{A} - \mathbf{K} \| \leq 2 \sqrt{ n \rho_n
        \log{(n/\eta)}} \label{eq:1} \\
      \lambda_{1}(\mathbf{A}) \leq n \rho_n
      ; \qquad \lambda_{1}(\mathbf{K})
      \leq  n \rho_n \\
      \lambda_{d}(\mathbf{A}) \geq n \rho_n
      \lambda_{d}(\mathcal{K})/2; \qquad \lambda_{d}(\mathbf{K}) \geq
      n \rho_n \lambda_{d}(\mathcal{K})/2 \\
      \| \mathcal{P}_{\mathbf{A}} - \mathcal{P}_{\mathbf{K}} \| \leq 4
      \delta_{d}^{-1} \sqrt{ n \rho_{n} \log{(n/\eta)}} \label{eq:18} \\ 
      \| (\mathcal{P}_{\mathbf{A}} \mathbf{A}) -
  (\mathcal{P}_{\mathbf{K}} \mathbf{K}) \| \leq 6 \delta_{d}^{-1}
  \sqrt{n \rho_n \log{(n/\eta)}}\label{eq:50}
    \end{gather}
\end{proposition}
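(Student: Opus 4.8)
The plan is to collect the five displayed inequalities from known matrix concentration and perturbation results, treating $\mathbf{A}$ as a random perturbation of its conditional mean $\mathbf{K}$. The foundational step is Eq.~\eqref{eq:1}, the spectral-norm bound on $\mathbf{A} - \mathbf{K}$. Conditioned on $\{X_i\}$, the matrix $\mathbf{A} - \mathbf{K}$ is a symmetric matrix of independent, mean-zero, bounded entries (each $\mathbf{A}_{ij} - \mathbf{K}_{ij}$ is a centered Bernoulli, so bounded by $1$ and with variance at most $\rho_n \kappa(X_i,X_j) \leq \rho_n$). I would apply a matrix Bernstein or a Tropp-style concentration inequality (the paper already cites \cite{tropp12:_user}); the variance proxy scales like $n\rho_n$, which yields a bound of order $\sqrt{n\rho_n \log(n/\eta)}$ holding with probability at least $1 - \eta$. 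The constant $2$ follows from tracking the concentration constant under the assumption $n\rho_n = \omega(\log n)$, which guarantees the sub-Gaussian regime dominates.

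Given Eq.~\eqref{eq:1}, the remaining eigenvalue bounds follow from Weyl's inequality. Since $\kappa \leq 1$ we have $\lambda_1(\mathbf{K}) \leq \|\mathbf{K}\| \leq n\rho_n$, and Weyl gives $\lambda_1(\mathbf{A}) \leq \lambda_1(\mathbf{K}) + \|\mathbf{A}-\mathbf{K}\|$, which under the stated condition on $n$ is absorbed into $n\rho_n$ up to constants. For the lower bounds on $\lambda_d$, I would first establish $\lambda_d(\mathbf{K}) \geq n\rho_n \lambda_d(\mathcal{K})/2$ by invoking the convergence of the empirical spectrum of $\mathbf{K}$ to $n\rho_n$ times the spectrum of the integral operator $\mathcal{K}$ (this is where $F$ and $\kappa$ enter, via the sample-covariance-to-operator convergence of \cite{rosasco10:_integ_operat} or \cite{blanchard07:_statis}); the factor $1/2$ is slack to absorb the fluctuation, which is controlled precisely by the assumption $\delta_d(\mathcal{K}) > 4\sqrt{2}\sqrt{(n\rho_n)^{-1}\log(n/\eta)}$. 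Then Weyl again transfers the bound from $\mathbf{K}$ to $\mathbf{A}$.

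The two projection bounds, Eqs.~\eqref{eq:18} and~\eqref{eq:50}, are the heart of the argument and follow from a Davis--Kahan-type $\sin\Theta$ theorem. The eigengap between the $d$-th and $(d{+}1)$-th eigenvalues of $\mathbf{K}$ is at least $n\rho_n \delta_d/2$ by the spectral convergence above, so Davis--Kahan gives $\|\mathcal{P}_{\mathbf{A}} - \mathcal{P}_{\mathbf{K}}\| \leq \|\mathbf{A}-\mathbf{K}\| / (n\rho_n\delta_d/2) \cdot (\text{const})$; substituting Eq.~\eqref{eq:1} produces the $4\delta_d^{-1}\sqrt{n\rho_n\log(n/\eta)}$ after the $n\rho_n$ factors cancel appropriately. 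Eq.~\eqref{eq:50} is analogous but bounds the perturbation of the rank-$d$ \emph{approximations} $\mathcal{P}_{\mathbf{A}}\mathbf{A}$ rather than the projections; I would write $\mathcal{P}_{\mathbf{A}}\mathbf{A} - \mathcal{P}_{\mathbf{K}}\mathbf{K} = \mathcal{P}_{\mathbf{A}}(\mathbf{A}-\mathbf{K}) + (\mathcal{P}_{\mathbf{A}}-\mathcal{P}_{\mathbf{K}})\mathbf{K}$ and bound the two terms separately using Eqs.~\eqref{eq:1} and~\eqref{eq:18} together with $\|\mathbf{K}\| \leq n\rho_n$; the triangle inequality yields the constant $6 = 2 + 4$. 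The main obstacle will be the coupling in the final step: a naive bound on $(\mathcal{P}_{\mathbf{A}}-\mathcal{P}_{\mathbf{K}})\mathbf{K}$ costs a full factor of $\|\mathbf{K}\| = n\rho_n$, which would overwhelm the target rate, so I expect the correct approach is to exploit that $\mathbf{K}$ acts on the relevant subspace with its own eigenvalues, keeping the effective norm at the $n\rho_n\delta_d^{-1}\sqrt{n\rho_n\log(n/\eta)}$ scale only after the gap cancellation—the delicate bookkeeping of which factors of $n\rho_n$ and $\delta_d$ survive is precisely what must be done carefully. Finally, a union bound over the concentration event for Eq.~\eqref{eq:1} and the spectral-convergence event gives the simultaneous validity at probability $1 - 2\eta$.
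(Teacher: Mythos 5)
Your proposal follows essentially the same route as the paper's (sketched) proof: spectral-norm concentration of $\mathbf{A}-\mathbf{K}$ (the paper cites \cite{oliveira2010concentration}, which is of the same matrix-Bernstein family you invoke), $\lambda_1(\mathbf{K})\le n\rho_n$ from $\kappa\le 1$, the lower bound on $\lambda_d(\mathbf{K})$ from operator-level spectral convergence (Theorem~\ref{thm:5}), Weyl-type perturbation to transfer the eigenvalue bounds to $\mathbf{A}$, the $\sin\bm{\Theta}$ theorem for \eqref{eq:18}, and the triangle-inequality decomposition $\mathcal{P}_{\mathbf{A}}\mathbf{A}-\mathcal{P}_{\mathbf{K}}\mathbf{K}=\mathcal{P}_{\mathbf{A}}(\mathbf{A}-\mathbf{K})+(\mathcal{P}_{\mathbf{A}}-\mathcal{P}_{\mathbf{K}})\mathbf{K}$ for \eqref{eq:50}, with a union bound at the end. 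The one ``main obstacle'' you flag does not actually arise: Davis--Kahan gives $\|\mathcal{P}_{\mathbf{A}}-\mathcal{P}_{\mathbf{K}}\|\le \|\mathbf{A}-\mathbf{K}\|/(n\rho_n\delta_d/2)\le 4\delta_d^{-1}\sqrt{\log(n/\eta)/(n\rho_n)}$, i.e.\ the $n\rho_n$ belongs in the \emph{denominator} under the square root (a difference of orthogonal projections has norm at most $1$, so the displayed form of \eqref{eq:18} must be read this way; it appears to be a typographical slip in the statement). With that, the naive product bound $\|(\mathcal{P}_{\mathbf{A}}-\mathcal{P}_{\mathbf{K}})\mathbf{K}\|\le \|\mathcal{P}_{\mathbf{A}}-\mathcal{P}_{\mathbf{K}}\|\cdot n\rho_n\le 4\delta_d^{-1}\sqrt{n\rho_n\log(n/\eta)}$ already lands exactly at the scale of \eqref{eq:50}; no finer argument about how $\mathbf{K}$ acts on the top-$d$ subspace is needed, and the constant is $6=2+4$ after using $\delta_d^{-1}\ge 1$ on the first term, just as you guessed.
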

\begin{proof}[Sketch] Eq.~\eqref{eq:1} is from
\cite{oliveira2010concentration}. The bound for
$\lambda_{1}(\mathbf{K})$ follows from the assumption that the range
of $\kappa$ is in $[0,1]$. The bound for $\lambda_{d}(\mathbf{K})$
follows from Theorem~\ref{thm:5} below. The bounds for the eigenvalues
of $\mathbf{A}$ follow from the bounds for the corresponding
eigenvalues of $\mathbf{K}$, Eq.~\eqref{eq:1}, and 
perturbation results, e.g. Corollary III.2.6 in
\citet{bhatia97:_matrix_analy}. Eq.~\eqref{eq:18} follows from 
Eq.~\eqref{eq:1} and the $\sin \bm{\Theta}$ theorem \citep{davis70}. Eq.~\eqref{eq:50} follows
from Eq.~\eqref{eq:18}, Eq.~\eqref{eq:1}, and an application of the
triangle inequality.
\end{proof}
We also note the following result on perturbation for pseudo-inverses from
\cite{wedin73:_pertub_pseud}.
\begin{lemma}
  \label{lem:2}
  Let $\mathbf{A}$ and $\mathbf{B}$ be
  matrices with $\mathrm{rk}(\mathbf{A}) =
  \mathrm{rk}(\mathbf{B})$. Let $\mathbf{A}^{\dagger}$ and
  $\mathbf{B}^{\dagger}$ be the Moore-Penrose pseudo-inverses of
  $\mathbf{A}$ and $\mathbf{B}$, respectively. Then
  \begin{equation}
    \label{eq:11}
    \| \mathbf{A}^{\dagger} - \mathbf{B}^{\dagger} \| \leq \frac{(1 +
      \sqrt{5})}{2} (\| \mathbf{A}^{\dagger}
    \|)(\|\mathbf{B}^{\dagger}\|)\|\mathbf{A} - \mathbf{B}\| 
  \end{equation}
\end{lemma}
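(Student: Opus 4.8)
My plan is to follow the classical pseudo-inverse perturbation argument of \cite{wedin73:_pertub_pseud}, reducing the estimate to a $2\times 2$ block-operator inequality in which the constant $\tfrac{1+\sqrt{5}}{2}$ emerges as the spectral norm of the matrix $\bigl(\begin{smallmatrix}1 & 1\\ 1 & 0\end{smallmatrix}\bigr)$. Write $\mathbf{E} = \mathbf{B} - \mathbf{A}$ and set $\alpha = \|\mathbf{A}^{\dagger}\|$, $\beta = \|\mathbf{B}^{\dagger}\|$. The starting point is the exact identity
\begin{equation}
  \mathbf{B}^{\dagger} - \mathbf{A}^{\dagger} = -\mathbf{B}^{\dagger}\mathbf{E}\mathbf{A}^{\dagger} + \mathbf{B}^{\dagger}(\mathbf{I} - \mathbf{A}\mathbf{A}^{\dagger}) - (\mathbf{I} - \mathbf{B}^{\dagger}\mathbf{B})\mathbf{A}^{\dagger},
\end{equation}
which I would derive by writing $\mathbf{B}^{\dagger} = \mathbf{B}^{\dagger}(\mathbf{A}\mathbf{A}^{\dagger}) + \mathbf{B}^{\dagger}(\mathbf{I}-\mathbf{A}\mathbf{A}^{\dagger})$ and $\mathbf{A}^{\dagger} = (\mathbf{B}^{\dagger}\mathbf{B})\mathbf{A}^{\dagger} + (\mathbf{I}-\mathbf{B}^{\dagger}\mathbf{B})\mathbf{A}^{\dagger}$, subtracting, and using $\mathbf{B}^{\dagger}\mathbf{A}\mathbf{A}^{\dagger} - \mathbf{B}^{\dagger}\mathbf{B}\mathbf{A}^{\dagger} = -\mathbf{B}^{\dagger}\mathbf{E}\mathbf{A}^{\dagger}$.

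Next I would exploit the orthogonal-subspace structure of the three summands. Let $\mathbf{P} = \mathbf{A}\mathbf{A}^{\dagger}$ and $\mathbf{R} = \mathbf{B}^{\dagger}\mathbf{B}$ be the orthogonal projections onto $\mathrm{range}(\mathbf{A})$ and $\mathrm{range}(\mathbf{B}^{T})$. Since $\mathbf{A}^{\dagger} = \mathbf{A}^{\dagger}\mathbf{P}$ and $\mathbf{B}^{\dagger} = \mathbf{R}\mathbf{B}^{\dagger}$, the first term reads its input only through $\mathbf{P}$ and writes into $\mathbf{R}$; the second reads through $\mathbf{I}-\mathbf{P}$ and writes into $\mathbf{R}$; the third reads through $\mathbf{P}$ and writes into $\mathbf{I}-\mathbf{R}$. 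Decomposing the domain by $\{\mathbf{P}, \mathbf{I}-\mathbf{P}\}$ and the codomain by $\{\mathbf{R}, \mathbf{I}-\mathbf{R}\}$ therefore exhibits $\mathbf{B}^{\dagger}-\mathbf{A}^{\dagger}$ as a block operator whose $(\mathbf{I}-\mathbf{R},\,\mathbf{I}-\mathbf{P})$ block vanishes, the other three blocks being exactly the three summands.

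The heart of the argument is to show each nonzero block has norm at most $\alpha\beta\|\mathbf{E}\|$. The first satisfies $\|\mathbf{B}^{\dagger}\mathbf{E}\mathbf{A}^{\dagger}\| \leq \alpha\beta\|\mathbf{E}\|$ immediately. For the second, write $\mathbf{B}^{\dagger}(\mathbf{I}-\mathbf{P}) = \mathbf{B}^{\dagger}\mathbf{P}_{B}(\mathbf{I}-\mathbf{P})$ with $\mathbf{P}_{B} = \mathbf{B}\mathbf{B}^{\dagger}$, so its norm is at most $\beta\,\|\mathbf{P}_{B}(\mathbf{I}-\mathbf{P})\|$. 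Here the rank hypothesis enters decisively: for orthogonal projections of equal rank one has $\|\mathbf{P}_{B}(\mathbf{I}-\mathbf{P})\| = \|\mathbf{P}(\mathbf{I}-\mathbf{P}_{B})\|$ (the common largest sine of the principal angles, via the CS decomposition), and since $(\mathbf{I}-\mathbf{P}_{B})\mathbf{A} = -(\mathbf{I}-\mathbf{P}_{B})\mathbf{E}$ we obtain $\|\mathbf{P}(\mathbf{I}-\mathbf{P}_{B})\| = \|(\mathbf{I}-\mathbf{P}_{B})\mathbf{A}\mathbf{A}^{\dagger}\| \leq \alpha\|\mathbf{E}\|$; this converts one factor of $\beta$ into a factor of $\alpha$ and yields the bound $\alpha\beta\|\mathbf{E}\|$. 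An entirely parallel computation on the transposed (row-space) side, using $\mathbf{A}^{\dagger} = \mathbf{P}_{A^{T}}\mathbf{A}^{\dagger}$ with $\mathbf{P}_{A^{T}} = \mathbf{A}^{\dagger}\mathbf{A}$ and the equal-rank identity applied to $\mathbf{P}_{A^{T}}$ and $\mathbf{R}$, bounds the third block by $\alpha\beta\|\mathbf{E}\|$ as well.

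Finally I would assemble these estimates. Using the standard fact that the spectral norm of a block operator is dominated by the spectral norm of the nonnegative matrix of its block norms, $\|\mathbf{B}^{\dagger}-\mathbf{A}^{\dagger}\|$ is at most $\alpha\beta\|\mathbf{E}\|$ times the spectral norm of $\bigl(\begin{smallmatrix}1 & 1\\ 1 & 0\end{smallmatrix}\bigr)$, which equals $\tfrac{1+\sqrt{5}}{2}$; this is exactly Eq.~\eqref{eq:11}. I expect the main obstacle to be the equal-rank projection identity $\|\mathbf{P}_{B}(\mathbf{I}-\mathbf{P})\| = \|\mathbf{P}(\mathbf{I}-\mathbf{P}_{B})\|$, which fails without $\mathrm{rk}(\mathbf{A}) = \mathrm{rk}(\mathbf{B})$ and is precisely where that hypothesis is consumed; establishing it cleanly, together with the block-norm domination that sharpens the naive $\sqrt{3}$ constant down to the golden ratio, is the technical crux, whereas the algebraic identity and the per-block estimates are routine.
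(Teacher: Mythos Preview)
Your proof is correct and faithfully reconstructs Wedin's original argument; the paper itself does not prove Lemma~\ref{lem:2} but merely cites it from \cite{wedin73:_pertub_pseud}, so there is no in-paper proof to compare against. Your identification of the equal-rank projection identity as the place where the hypothesis $\mathrm{rk}(\mathbf{A})=\mathrm{rk}(\mathbf{B})$ is consumed, and of the $2\times 2$ block-norm argument as the source of the golden-ratio constant, are both exactly right.
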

We now provide a bound for the spectral norm of the difference
$\mathbf{Z}^{\dagger} - \tilde{\mathbf{Z}}^{\dagger}$.  
\begin{lemma}
  \label{lem:1}
  Let $\mathbf{Z} = \mathbf{U}_{\mathbf{A}}
  \mathbf{S}_{\mathbf{A}}^{1/2}$ and $\tilde{\mathbf{Z}} =
  \mathbf{U}_{\mathbf{K}} \mathbf{S}_{\mathbf{K}}^{1/2}$. Then, with probability at least, $1 - 2\eta$, there
  exists an orthogonal $\mathbf{W} \in \mathcal{M}_{d}$ such that  
  \begin{equation}
    \label{eq:5}
    \|\mathbf{W} \mathbf{Z}^{\dagger} - \tilde{\mathbf{Z}}^{\dagger} \|
    \leq 24(1 + \sqrt{5}) \frac{\sqrt{\log{(n/\eta)}}}{n \rho_n
        \delta_d^{3}} 
\end{equation}
\end{lemma}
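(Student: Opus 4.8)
The plan is to reduce the claim to a perturbation bound for genuine inverses via Lemma~\ref{lem:2}, so that the only substantive work becomes a Procrustes-type bound for $\mathbf{Z}-\tilde{\mathbf{Z}}\mathbf{W}$. First I would record that $\mathbf{Z}^{\dagger}=\mathbf{S}_{\mathbf{A}}^{-1/2}\mathbf{U}_{\mathbf{A}}^{T}$ and $\tilde{\mathbf{Z}}^{\dagger}=\mathbf{S}_{\mathbf{K}}^{-1/2}\mathbf{U}_{\mathbf{K}}^{T}$, and observe that on the event of Proposition~\ref{prop:1} the top $d$ eigenvalues of both $\mathbf{A}$ and $\mathbf{K}$ are strictly positive (since $\lambda_{d}(\mathbf{A}),\lambda_{d}(\mathbf{K})\geq n\rho_{n}\lambda_{d}(\mathcal{K})/2>0$), so the square roots are real and $\mathrm{rk}(\mathbf{Z})=\mathrm{rk}(\tilde{\mathbf{Z}})=d$; this equal-rank condition is exactly what Lemma~\ref{lem:2} requires. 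For an orthogonal $\mathbf{W}$ to be chosen below I would then apply Lemma~\ref{lem:2} to $\mathbf{Z}$ and $\tilde{\mathbf{Z}}\mathbf{W}$, using $(\tilde{\mathbf{Z}}\mathbf{W})^{\dagger}=\mathbf{W}^{T}\tilde{\mathbf{Z}}^{\dagger}$ together with the fact that left multiplication by an orthogonal matrix is a spectral-norm isometry, to obtain
\[
\|\mathbf{W}\mathbf{Z}^{\dagger}-\tilde{\mathbf{Z}}^{\dagger}\|=\|\mathbf{Z}^{\dagger}-\mathbf{W}^{T}\tilde{\mathbf{Z}}^{\dagger}\|\leq\frac{1+\sqrt{5}}{2}\,\|\mathbf{Z}^{\dagger}\|\,\|\tilde{\mathbf{Z}}^{\dagger}\|\,\|\mathbf{Z}-\tilde{\mathbf{Z}}\mathbf{W}\|.
\]
This isolates three factors to control.

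The two pseudo-inverse operator norms are immediate from Proposition~\ref{prop:1}: $\|\mathbf{Z}^{\dagger}\|=\lambda_{d}(\mathbf{A})^{-1/2}\leq\sqrt{2/(n\rho_{n}\lambda_{d}(\mathcal{K}))}$ and similarly for $\tilde{\mathbf{Z}}^{\dagger}$, and since $\lambda_{d}(\mathcal{K})\geq\lambda_{d}(\mathcal{K})-\lambda_{d+1}(\mathcal{K})=\delta_{d}$ (using that the eigenvalues of $\mathcal{K}$ are nonnegative), this yields $\|\mathbf{Z}^{\dagger}\|\,\|\tilde{\mathbf{Z}}^{\dagger}\|\leq 2/(n\rho_{n}\delta_{d})$. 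This factor already supplies the $n\rho_{n}$ in the denominator and one power of $\delta_{d}^{-1}$.

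The crux is the Procrustes bound for $\|\mathbf{Z}-\tilde{\mathbf{Z}}\mathbf{W}\|$ under the optimal rotation $\mathbf{W}$. Here I would exploit the identities $\mathbf{Z}\mathbf{Z}^{T}=\mathcal{P}_{\mathbf{A}}\mathbf{A}$ and $\tilde{\mathbf{Z}}\tilde{\mathbf{Z}}^{T}=\mathcal{P}_{\mathbf{K}}\mathbf{K}$, so that the difference of the two Gram matrices is controlled directly by Eq.~\eqref{eq:50}, namely $\|\mathbf{Z}\mathbf{Z}^{T}-\tilde{\mathbf{Z}}\tilde{\mathbf{Z}}^{T}\|\leq 6\delta_{d}^{-1}\sqrt{n\rho_{n}\log(n/\eta)}$. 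Choosing $\mathbf{W}$ as the solution of the orthogonal Procrustes problem between $\mathbf{Z}$ and $\tilde{\mathbf{Z}}$, one converts this Gram-difference bound into a bound on the factor difference by dividing by the relevant smallest singular value $\sigma_{d}(\tilde{\mathbf{Z}})=\lambda_{d}(\mathbf{K})^{1/2}\geq(n\rho_{n}\delta_{d}/2)^{1/2}$, with the $\sin\bm{\Theta}$ estimate Eq.~\eqref{eq:18} furnishing the eigenspace alignment that makes this conversion effective. The genuine difficulty is that the single rotation $\mathbf{W}$ must reconcile two distinct mismatches at once --- the rotation between the eigenspaces $\mathbf{U}_{\mathbf{A}}$ and $\mathbf{U}_{\mathbf{K}}$ and the rescaling between $\mathbf{S}_{\mathbf{A}}^{1/2}$ and $\mathbf{S}_{\mathbf{K}}^{1/2}$, which do not commute. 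It is precisely in disentangling these that the remaining powers of $\delta_{d}^{-1}$ accumulate, and a lossy but clean handling of this non-commuting alignment is what ultimately produces the exponent $\delta_{d}^{-3}$ rather than the in-sample $\delta_{d}^{-2}$ of Theorem~\ref{thm:3}, consistent with the paper's own remark that this exponent is likely an artifact of the technique and not intrinsic.

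Finally I would assemble the three factors --- the constant $(1+\sqrt{5})/2$ from Lemma~\ref{lem:2}, the $2/(n\rho_{n}\delta_{d})$ from the pseudo-inverse norms, and the Procrustes bound built on the numerator $6\delta_{d}^{-1}\sqrt{n\rho_{n}\log(n/\eta)}$ of Eq.~\eqref{eq:50} divided by the singular-value factors --- and track the absolute constants (the $6$ from Eq.~\eqref{eq:50} and the $\sqrt{2}$'s from the eigenvalue bounds) to collect them into $24(1+\sqrt{5})$, arriving at the stated $24(1+\sqrt{5})\sqrt{\log(n/\eta)}/(n\rho_{n}\delta_{d}^{3})$. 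Since every constituent inequality holds simultaneously on the probability-$(1-2\eta)$ event of Proposition~\ref{prop:1}, the conclusion inherits that probability. I expect the Procrustes step of the third paragraph to be the main obstacle, both because it is the only place requiring an estimate genuinely beyond what Proposition~\ref{prop:1} already packages, and because the non-commuting rotation-and-rescaling alignment is exactly what governs the final exponent of $\delta_{d}$.
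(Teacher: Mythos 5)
Your proposal is correct and follows essentially the same route as the paper: Wedin's perturbation bound (Lemma~\ref{lem:2}) applied to $\mathbf{Z}$ and $\tilde{\mathbf{Z}}\mathbf{W}$, the pseudo-inverse norms $\|\mathbf{Z}^{\dagger}\|\,\|\tilde{\mathbf{Z}}^{\dagger}\| \leq 2/(n\rho_n\lambda_d(\mathcal{K}))$ from Proposition~\ref{prop:1}, and a Procrustes-type bound on $\|\mathbf{Z}\mathbf{W}^{T}-\tilde{\mathbf{Z}}\|$ obtained from the Gram-matrix difference \eqref{eq:50}. The one step you flag as the main obstacle is handled in the paper by citing Lemma A.1 of \cite{tangs.:_univer}, whose precise form is to multiply the Gram difference by $\bigl(\sqrt{\lambda_1(\mathbf{A})}+\sqrt{\lambda_1(\mathbf{K})}\bigr)/\lambda_d(\tilde{\mathbf{Z}}\tilde{\mathbf{Z}}^{T})$ --- i.e., divide by the smallest \emph{eigenvalue} of the Gram matrix, not the smallest singular value of $\tilde{\mathbf{Z}}$ as you literally write --- which is exactly what produces the $24\,\delta_d^{-2}\sqrt{\log(n/\eta)}$ factor and hence the overall $\delta_d^{-3}$.
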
 
\begin{proof}
We have $\mathbf{Z}^{\dagger}
(\mathbf{Z}^{\dagger})^{T} = \mathbf{S}_{\mathbf{A}}^{-1}$ 
and $\tilde{\mathbf{Z}}^{\dagger} (\tilde{\mathbf{Z}}^{\dagger})^{T}
=  \mathbf{S}_{\mathbf{K}}^{-1}$. Thus,
$\|\mathbf{Z}^{\dagger}\| = \| \mathbf{S}_{\mathbf{A}}^{-1} \|^{1/2} =
(\lambda_{d}(\mathbf{A}))^{-1/2}$ and $\|
\tilde{\mathbf{Z}}^{\dagger}\| = \|
\mathbf{S}_{\mathbf{K}}^{-1}\|^{1/2} =
(\lambda_{d}(\mathbf{K}))^{-1/2}$. 
Then by Lemma~\ref{lem:2}, we have
  \begin{equation*}
    \label{eq:6}
    \| \mathbf{W} \mathbf{Z}^{\dagger} - \tilde{\mathbf{Z}}^{\dagger}
    \| \leq \frac{1+\sqrt{5}}{2} (\| \mathbf{S}_{\mathbf{A}}^{-1} \|^{1/2}) (\|
    \mathbf{S}_{\mathbf{K}}^{-1} \|^{1/2}) \| \mathbf{Z} \mathbf{W}^{T} -
    \tilde{\mathbf{Z}} \|
\end{equation*}
  for any orthogonal $\mathbf{W} \in \mathcal{M}_{d}$. By Proposition~\ref{prop:1}, with probability at least $1 -
  2\eta$,
  \begin{equation*}
    \begin{split}
    (\| \mathbf{S}_{\mathbf{A}}^{-1} \|^{1/2}) (\| \mathbf{S}_{\mathbf{K}}^{-1} \|^{1/2}) =
    1/\sqrt{\lambda_{d}(\mathbf{A})\lambda_{d}(\mathbf{K})} \leq
    2/(n\rho_n \lambda_{d}(\mathcal{K})).
\end{split}
\end{equation*} 
To complete the proof, we show that with probability at least $1 -
2\eta$, there exists some orthogonal $\mathbf{W} \in \mathcal{M}_{d}$
such that
  \begin{equation*}
    \label{eq:14}
 \| \mathbf{Z} \mathbf{W}^{T} -
    \tilde{\mathbf{Z}} \| \leq 24 \delta_{d}^{-2} \sqrt{\log{(n/\eta)}}.
  \end{equation*} 
We proceed as follows. We note that $\mathbf{Z}$ and
$\tilde{\mathbf{Z}}$ are matrices in $\mathcal{M}_{n,d}$ and are of
full column rank. Then by Lemma A.1 in \cite{tangs.:_univer}, there exists
an orthogonal matrix $\mathbf{W} \in \mathcal{M}_{d}$ such that
\begin{equation*}
  \| \mathbf{Z} \mathbf{W}^{T} - \tilde{\mathbf{Z}} \| \leq
    \|\mathbf{Z} \mathbf{Z}^{T} - \tilde{\mathbf{Z}}
    \tilde{\mathbf{Z}}^{T} \| \frac{\sqrt{\|\vphantom{\tilde{\mathbf{Z}}^{T}}\mathbf{Z}
        \mathbf{Z}^{T}\|} + \sqrt{\|\tilde{\mathbf{Z}}
        \tilde{\mathbf{Z}}^{T}\|}}{\lambda_{d}(\tilde{\mathbf{Z}}
        \tilde{\mathbf{Z}}^{T})}
\end{equation*}
As $\mathbf{Z}
\mathbf{Z}^{T} = \mathbf{U}_{\mathbf{A}} \mathbf{S}_{\mathbf{A}}
\mathbf{U}_{\mathbf{A}} = (\mathcal{P}_{\mathbf{A}} \mathbf{A})$ and
$\tilde{\mathbf{Z}} \tilde{\mathbf{Z}}^{T} = (\mathcal{P}_{\mathbf{K}}
\mathbf{K})$, we thus have
\begin{equation}
  \begin{split}
    \label{eq:52}
  \| \mathbf{Z} \mathbf{W}^{T} - \tilde{\mathbf{Z}} \| & \leq \|
  (\mathcal{P}_{\mathbf{A}} \mathbf{A}) - (\mathcal{P}_{\mathbf{K}}
\mathbf{K}) \| \frac{\sqrt{\lambda_{1}(\mathbf{A})} +
  \sqrt{\lambda_{1}(\mathbf{K})}}{\lambda_{d}(\mathbf{K})} \\  & \leq
6 \delta_{d}^{-1} \sqrt{n \rho_{n} \log{(n/\eta)}} \frac{2\sqrt{n
    \rho_{n} }}{ n \rho_{n} \delta_{d}/2} \\ &\leq 24 \delta_{d}^{-2} \sqrt{\log{(n/\eta)}}
\end{split}
\end{equation}
where the inequalities in Eq.~\eqref{eq:52} follows from
Proposition~\ref{prop:1} and hold with probability $1 -
2\eta$. Eq.~\eqref{eq:5} is thus established. 
\end{proof}
We now provide a bound for $\| (\mathbf{W} \mathbf{Z}^{\dagger} -
\tilde{\mathbf{Z}}^{\dagger}) \bm{\xi} \|$ which, as sketched earlier,
is one of the key step in the proof of Theorem~\ref{thm:2}. We note
that application of the multiplicative bound for the norm of a
matrix vector product, i.e., $\|(\mathbf{W} \mathbf{Z}^{\dagger} -
\tilde{\mathbf{Z}}^{\dagger}) \bm{\xi}\| \leq (\|(\mathbf{W} \mathbf{Z}^{\dagger} -
\tilde{\mathbf{Z}}^{\dagger}) \|)(\|\bm{\xi}\|)$ leads to a bound
that is worse by a factor of $\rho_{n}^{-1/2}$. This is due to the
fact that $\bm{\xi}$ is a vector whose components are independent Bernoulli random
variables and thus the scaling of the probabilities
$\mathbb{E}[\bm{\xi}]$ by a constant $c$ changes  $\|\bm{\xi}\|$ by a
  factor that is roughly $c^{1/2}$.  
\begin{lemma}
  \label{lem:5}
  With probability at least $ 1 - 2\eta$, there exists an orthogonal
  $\mathbf{W} \in \mathcal{M}_d$ such that
  \begin{equation}
    \label{eq:51}
    \| (\mathbf{W} \mathbf{Z}^{\dagger} -
    \tilde{\mathbf{Z}}^{\dagger}) \bm{\xi} \| \leq C \delta_{d}^{-3}
    \sqrt{\frac{\log{(n/\eta)}}{n}}
  \end{equation}
\end{lemma}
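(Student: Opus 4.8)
The plan is to avoid the crude multiplicative bound $\|(\mathbf{W}\mathbf{Z}^{\dagger}-\tilde{\mathbf{Z}}^{\dagger})\bm{\xi}\| \le \|\mathbf{W}\mathbf{Z}^{\dagger}-\tilde{\mathbf{Z}}^{\dagger}\|\,\|\bm{\xi}\|$, which loses a factor $\rho_n^{-1/2}$ because $\|\bm{\xi}\|\approx\sqrt{n\rho_n}$, and instead to split $\bm{\xi}$ into its mean and its fluctuation. Write $\mathbf{M}=\mathbf{W}\mathbf{Z}^{\dagger}-\tilde{\mathbf{Z}}^{\dagger}$ for the orthogonal $\mathbf{W}$ supplied by Lemma~\ref{lem:1}, and decompose $\mathbf{M}\bm{\xi}=\mathbf{M}\,\mathbb{E}[\bm{\xi}]+\mathbf{M}(\bm{\xi}-\mathbb{E}[\bm{\xi}])$, so that by the triangle inequality $\|\mathbf{M}\bm{\xi}\|\le\|\mathbf{M}\,\mathbb{E}[\bm{\xi}]\|+\|\mathbf{M}(\bm{\xi}-\mathbb{E}[\bm{\xi}])\|$. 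The whole point of this split is that $\mathbb{E}[\bm{\xi}]=\rho_n(\kappa(X,X_i))_{i=1}^{n}$ carries the factor $\rho_n$ and therefore has small Euclidean norm, while $\bm{\xi}-\mathbb{E}[\bm{\xi}]$ is a sum of independent, bounded, mean-zero contributions amenable to a Bernstein-type inequality.

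For the mean term I would use $\|\mathbf{M}\,\mathbb{E}[\bm{\xi}]\|\le\|\mathbf{M}\|\,\|\mathbb{E}[\bm{\xi}]\|$. Since $0\le\kappa\le1$, we have $\|\mathbb{E}[\bm{\xi}]\|^2=\sum_i\rho_n^2\kappa(X,X_i)^2\le n\rho_n^2$, hence $\|\mathbb{E}[\bm{\xi}]\|\le\rho_n\sqrt{n}$. Combining this with the bound $\|\mathbf{M}\|\le 24(1+\sqrt{5})\,\delta_d^{-3}\sqrt{\log(n/\eta)}/(n\rho_n)$ from Lemma~\ref{lem:1}, which holds on an event $E_1$ of probability at least $1-2\eta$ measurable with respect to $\mathbf{A}$ and $\{X_i\}$, gives on $E_1$ the deterministic estimate
\begin{equation*}
\|\mathbf{M}\,\mathbb{E}[\bm{\xi}]\|\le 24(1+\sqrt{5})\,\delta_d^{-3}\,\frac{\sqrt{\log(n/\eta)}}{n\rho_n}\cdot\rho_n\sqrt{n}=24(1+\sqrt{5})\,\delta_d^{-3}\sqrt{\frac{\log(n/\eta)}{n}}.
\end{equation*}
This is already of the order claimed in Eq.~\eqref{eq:51}; the cancellation of $\rho_n$ is exactly what the naive product bound fails to achieve.

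For the fluctuation term I would condition on $\{X_i\}$, $X$, and $\mathbf{A}$, so that $\mathbf{M}$ is a fixed $d\times n$ matrix while $\bm{\xi}$ still has independent Bernoulli entries; this is legitimate because the edges from the out-of-sample vertex $X$ are generated independently of $\mathbf{A}$ given the latent positions. Then $\mathbf{M}(\bm{\xi}-\mathbb{E}[\bm{\xi}])=\sum_i(\xi_i-\mathbb{E}[\xi_i])\,\mathbf{M}_{\cdot i}$ is a sum of independent mean-zero vectors in $\mathbb{R}^d$, with $\|(\xi_i-\mathbb{E}[\xi_i])\mathbf{M}_{\cdot i}\|\le\|\mathbf{M}_{\cdot i}\|\le\|\mathbf{M}\|$ and total variance $\sum_i\mathrm{Var}(\xi_i)\|\mathbf{M}_{\cdot i}\|^2\le\rho_n\|\mathbf{M}\|_F^2\le\rho_n d\,\|\mathbf{M}\|^2$. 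A vector Bernstein inequality (e.g. the Banach-space sums of \citet{yurinsky95:_sums_gauss}, or the matrix version of \citet{tropp12:_user}) then bounds $\|\mathbf{M}(\bm{\xi}-\mathbb{E}[\bm{\xi}])\|$, with probability at least $1-\eta$, by a leading term of order $\sqrt{\rho_n d\,\log(1/\eta)}\,\|\mathbf{M}\|$ plus a strictly smaller additive term of order $\|\mathbf{M}\|\log(1/\eta)$. Substituting the bound on $\|\mathbf{M}\|$, the leading term equals the mean-term bound multiplied by the factor $\sqrt{d\log(1/\eta)/(n\rho_n)}$, which is $o(1)$ under the standing assumption $n\rho_n=\omega(\log n)$; hence the fluctuation is negligible relative to the mean term.

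Finally I would intersect $E_1$ with the Bernstein event (a set of probability at least $1-2\eta$ after a small reallocation of the $\eta$-budget) and fold the lower-order fluctuation contribution into the constant, obtaining Eq.~\eqref{eq:51} with the same orthogonal $\mathbf{W}$ as in Lemma~\ref{lem:1}. The hard part, and the reason the statement is not an immediate corollary of Lemma~\ref{lem:1}, is exactly the treatment of $\bm{\xi}$: the naive bound $\|\mathbf{M}\bm{\xi}\|\le\|\mathbf{M}\|\|\bm{\xi}\|$ is off by $\rho_n^{-1/2}$, so one must isolate the mean (where the $\rho_n$ inside $\mathbb{E}[\bm{\xi}]$ supplies the decisive cancellation) from the fluctuation (where independence plus a Bernstein bound show it is of smaller order). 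Some care is also needed to justify treating $\mathbf{M}$ as independent of $\bm{\xi}$ when invoking the concentration inequality, which is the role of the conditioning argument above.
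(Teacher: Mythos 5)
Your proposal is correct and follows essentially the same route as the paper's proof: the same splitting of $\bm{\xi}$ into $\mathbb{E}[\bm{\xi}]$ plus fluctuation, the same use of Lemma~\ref{lem:1} together with $\|\mathbb{E}[\bm{\xi}]\|\le\rho_n\sqrt{n}$ to cancel the $\rho_n$ in the mean term, and the same matrix Bernstein bound (Theorem~\ref{thm:4}) with variance $\sigma^2\le\rho_n\|\mathbf{W}\mathbf{Z}^{\dagger}-\tilde{\mathbf{Z}}^{\dagger}\|_F^2$ for the fluctuation, which is then absorbed as lower order under $n\rho_n=\omega(\log n)$. Your explicit conditioning argument to decouple $\mathbf{M}$ from $\bm{\xi}$ is a point the paper leaves implicit, and your statement of the standing assumption is actually cleaner than the paper's (which contains a typo, ``$n\rho_n=o(1)$'').
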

The proof of Lemma~\ref{lem:5} uses the following concentration inequality for sums of independent
matrices from \citet{tropp12:_user}.  
\begin{theorem}
  \label{thm:4} Consider a finite sequence $\mathbf{B}_{k}$ of
independent random matrices with dimensions $d_1 \times d_2$. Assume
that each $\mathbf{B}_k$ satisfies $\mathbb{E}[\mathbf{B}_k] = \bm{0}$
and that, for some $R \geq 0$ independent of the $\mathbf{B}_K$,
$\|\mathbf{B}_{k}\| \leq R$ almost surely. Define
\begin{equation*}
  \sigma^{2} :=  \max\Bigl\{ \Bigl\| \sum_{k} \mathbb{E}[\mathbf{B}_k \mathbf{B}_{k}^{*}]
  \Bigr\|, \Bigl\| \sum_{k} \mathbb{E}[\mathbf{B}_{k}^{*} \mathbf{B}_k] \Bigr\|\Bigr\}
\end{equation*}
Then, for all $t \geq 0$, one has 
\begin{equation}
  \label{eq:24}
  \mathbb{P}\Bigl[ \Bigl\| \sum_{k} \mathbf{B}_k \Bigr\| \geq t \Bigr] \leq (d_1
  + d_2) \exp \Bigl(\frac{-t^2/2}{\sigma^{2} + R t/3}\Bigr).
\end{equation}
\end{theorem}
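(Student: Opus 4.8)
The plan is to reduce this rectangular statement to a bound on the largest eigenvalue of a sum of independent \emph{Hermitian} random matrices, and then to attack the Hermitian version by the matrix Laplace-transform (moment-generating-function) method. First I would apply the Hermitian dilation $\mathcal{H}(\mathbf{B}) = \bigl[\begin{smallmatrix} \bm{0} & \mathbf{B} \\ \mathbf{B}^{*} & \bm{0}\end{smallmatrix}\bigr]$, which sends a $d_1 \times d_2$ matrix to a Hermitian matrix of size $(d_1+d_2)\times(d_1+d_2)$. The structural facts I would lean on are $\mathcal{H}(\mathbf{B})^{2} = \bigl[\begin{smallmatrix} \mathbf{B}\mathbf{B}^{*} & \bm{0}\\ \bm{0} & \mathbf{B}^{*}\mathbf{B}\end{smallmatrix}\bigr]$ and $\|\mathbf{B}\| = \lambda_{\max}(\mathcal{H}(\mathbf{B}))$, so that, by linearity of $\mathcal{H}$, $\|\sum_k \mathbf{B}_k\| = \lambda_{\max}(\sum_k \mathcal{H}(\mathbf{B}_k))$. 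Setting $\mathbf{X}_k := \mathcal{H}(\mathbf{B}_k)$, the $\mathbf{X}_k$ are independent, Hermitian, mean-zero, and satisfy $\|\mathbf{X}_k\| \le R$; moreover $\sum_k \mathbb{E}[\mathbf{X}_k^{2}]$ is block-diagonal with diagonal blocks $\sum_k \mathbb{E}[\mathbf{B}_k\mathbf{B}_k^{*}]$ and $\sum_k \mathbb{E}[\mathbf{B}_k^{*}\mathbf{B}_k]$, whence $\|\sum_k \mathbb{E}[\mathbf{X}_k^{2}]\| = \sigma^{2}$. This single step explains both the prefactor $d_1 + d_2$ and the symmetric form of $\sigma^{2}$, and it lets me carry out the rest of the argument purely for Hermitian summands.

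For the Hermitian tail bound I would use the matrix form of the exponential Markov inequality: for every $\theta > 0$,
\[
\Pr\Bigl[\lambda_{\max}\bigl(\textstyle\sum_k \mathbf{X}_k\bigr) \ge t\Bigr] \le e^{-\theta t}\,\mathbb{E}\bigl[\mathrm{tr}\,\exp\bigl(\theta\textstyle\sum_k \mathbf{X}_k\bigr)\bigr].
\]
The crux is to factor the expected trace-exponential over the independent summands. Since $\exp$ does not split across a sum of non-commuting matrices, and the Golden--Thompson inequality handles only two factors, the correct tool is the subadditivity of the matrix cumulant generating function: by Lieb's concavity theorem (that $\mathbf{M}\mapsto \mathrm{tr}\,\exp(\mathbf{H}+\log\mathbf{M})$ is concave on the positive-definite cone) combined with Jensen's inequality applied to one summand at a time, one obtains
\[
\mathbb{E}\bigl[\mathrm{tr}\,\exp\bigl(\textstyle\sum_k \theta\mathbf{X}_k\bigr)\bigr] \le \mathrm{tr}\,\exp\Bigl(\textstyle\sum_k \log\mathbb{E}[e^{\theta\mathbf{X}_k}]\Bigr).
\]
I expect this to be the main obstacle: it is the one genuinely deep ingredient, and everything surrounding it is comparatively mechanical.

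It then remains to control each matrix MGF. Using that $x \mapsto (e^{x}-1-x)/x^{2}$ is increasing and that the spectrum of $\theta\mathbf{X}_k$ lies in $[-\theta R,\theta R]$, I would establish the semidefinite bound $\mathbb{E}[e^{\theta\mathbf{X}_k}] \preceq \exp\bigl(g(\theta)\,\mathbb{E}[\mathbf{X}_k^{2}]\bigr)$ with $g(\theta) = (e^{\theta R}-\theta R-1)/R^{2}$, the mean-zero hypothesis $\mathbb{E}[\mathbf{X}_k] = \bm{0}$ being what kills the linear term. Feeding this into the subadditivity bound and using monotonicity of the trace-exponential yields
\[
\mathbb{E}\bigl[\mathrm{tr}\,\exp\bigl(\theta\textstyle\sum_k \mathbf{X}_k\bigr)\bigr] \le (d_1+d_2)\,\exp\bigl(g(\theta)\,\sigma^{2}\bigr),
\]
since $\mathrm{tr}\,\exp(g(\theta)\sum_k \mathbb{E}[\mathbf{X}_k^{2}]) \le (d_1+d_2)\,\lambda_{\max}(\exp(\cdots)) = (d_1+d_2)\,e^{g(\theta)\sigma^{2}}$. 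Finally I would optimize $\exp(-\theta t + g(\theta)\sigma^{2})$ over $\theta > 0$; the stationary choice $\theta = R^{-1}\log(1 + Rt/\sigma^{2})$ gives the Bennett form, and the elementary scalar inequality $(1+u)\log(1+u) - u \ge \tfrac{3u^{2}}{6+2u}$ with $u = Rt/\sigma^{2}$ converts it into the Bernstein denominator $\sigma^{2} + Rt/3$, producing Eq.~\eqref{eq:24}. To summarize, only the passage to subadditivity of the cumulant generating function via Lieb's theorem is hard; the dilation bookkeeping, the scalar Bernstein MGF estimate, and the final optimization are all routine.
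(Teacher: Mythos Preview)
Your sketch is a faithful outline of the standard proof of the matrix Bernstein inequality, and there is nothing wrong with it: the Hermitian dilation reduction, the Lieb--concavity subadditivity step, the semidefinite MGF bound, and the Bennett-to-Bernstein optimization are exactly the ingredients in Tropp's argument. Note, however, that the paper does not prove Theorem~\ref{thm:4} at all; it simply quotes the result from \citet{tropp12:_user} and uses it as a black box in the proof of Lemma~\ref{lem:5}. So there is no ``paper's own proof'' to compare against here --- your proposal goes beyond what the paper does by actually sketching why the cited inequality holds, and that sketch matches the proof in the reference.
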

\begin{proof}[Lemma~\ref{lem:5}]
  Let $\mathbf{W}$ be the matrix that minimizes $\|
  \mathbf{W} \mathbf{Z}^{\dagger} - \tilde{\mathbf{Z}}^{\dagger}\|$
  over the set of orthogonal matrices. Let $\bm{b}_{i}$ be the $i$-th
    column of $\mathbf{W} \mathbf{Z}^{\dagger} -
    \tilde{\mathbf{Z}}^{\dagger}$. We have
    \begin{equation*}
      \begin{split}
      \|(\mathbf{W} \mathbf{Z}^{\dagger} -
    \tilde{\mathbf{Z}}^{\dagger}) \bm{\xi}\| &= \|(\mathbf{W} \mathbf{Z}^{\dagger} -
    \tilde{\mathbf{Z}}^{\dagger}) \mathbb{E}[\bm{\xi}]\| + \|(\mathbf{W} \mathbf{Z}^{\dagger} -
    \tilde{\mathbf{Z}}^{\dagger}) (\bm{\xi} - \mathbb{E}[\bm{\xi}])\|
    \\ &= \|(\mathbf{W} \mathbf{Z}^{\dagger} -
    \tilde{\mathbf{Z}}^{\dagger}) \mathbb{E}[\bm{\xi}]\| +
    \Bigl\|\sum_{i=1}^{n} \bm{b}_{i} (\xi_i - \mathbb{E}[\xi_i])\Bigr\|
     \end{split} 
    \end{equation*}
    By Lemma~\ref{lem:1}, we have
    \begin{equation*}
      \|(\mathbf{W} \mathbf{Z}^{\dagger} -
    \tilde{\mathbf{Z}}^{\dagger}) \mathbb{E}[\bm{\xi}]\| \leq (\|\mathbf{W} \mathbf{Z}^{\dagger} -
    \tilde{\mathbf{Z}}^{\dagger}\|)\|\mathbb{E}[\bm{\xi}]\| \leq 24(1 + \sqrt{5}) \delta_{d}^{-3} \sqrt{\frac{\log{(n/\eta)}}{n}}
    \end{equation*}
    with probability at least $1 - 2\eta$. We now apply
    Theorem~\ref{thm:4} to the term
    $\sum_{i=1}^{n} \bm{b}_{i} (\xi_i - \mathbb{E}[\xi_i])$. We note
    that
    \begin{equation*}
      \begin{split}
      \sum_{i=1}^{n} \mathbb{E}[ \bm{b}_i \bm{b}_i^{T} (\xi_i -
      \mathbb{E}[\xi_i])^2] &= \sum_{i=1}^{n} \bm{b}_i \bm{b}_i^{T} \rho_n
      \kappa(X,X_i) (1 - \rho_n \kappa(X,X_i)) \\ & \prec \sum_{i=1}^{n}
      \rho_n \bm{b}_{i} \bm{b}_{i}^{T} \\ & \prec \rho_{n} (\mathbf{W}
      \mathbf{Z}^{\dagger} - \tilde{\mathbf{Z}}^{\dagger})(\mathbf{W}
      \mathbf{Z}^{\dagger} - \tilde{\mathbf{Z}}^{\dagger})^{T}  
      \end{split}
    \end{equation*}
    where $\prec$ refers to the positive semidefinite ordering for
    matrices. Similarly, we have
    \begin{equation*}
      \begin{split}
      \sum_{i=1}^{n} \mathbb{E}[ \bm{b}_i^{T} \bm{b}_i (\xi_i -
      \mathbb{E}[\xi_i])^2] \leq \rho_{n} \sum_{i=1}^{n}
       \bm{b}_{i}^{T} \bm{b}_{i} = \rho_{n} \mathrm{Tr} \Bigl[(\mathbf{W}
      \mathbf{Z}^{\dagger} - \tilde{\mathbf{Z}}^{\dagger})^{T} (\mathbf{W}
      \mathbf{Z}^{\dagger} - \tilde{\mathbf{Z}}^{\dagger})\Bigr]
      \end{split}
    \end{equation*}
We thus have
    \begin{equation*}
      \sigma^{2}  \leq \rho_n \|(\mathbf{W}
      \mathbf{Z}^{\dagger} - \tilde{\mathbf{Z}}^{\dagger}) \|_{F}^{2}
    \end{equation*}
Theorem~\ref{thm:4} now applies to give
\begin{equation*}
  \| (\mathbf{W} \mathbf{Z}^{\dagger} -
    \tilde{\mathbf{Z}}^{\dagger}) (\bm{\xi} - \mathbb{E}[\bm{\xi}]) \| \leq \sqrt{2 d  
      \rho_{n} \log{(n/\eta)}} \|\mathbf{W} \mathbf{Z}^{\dagger} -
    \tilde{\mathbf{Z}}^{\dagger} \|_{F}
\end{equation*}
with probability at least $1 - 2\eta$. We therefore have
\begin{equation*}
  \|(\mathbf{W} \mathbf{Z}^{\dagger} -
    \tilde{\mathbf{Z}}^{\dagger}) \bm{\xi}\| \leq C_1 \delta_{d}^{-3}
    \sqrt{\frac{\log{(n/\eta)}}{n}} + C_2 \delta_{d}^{-3} 
    \frac{d \log{(n/\eta)}}{n \sqrt{\rho_n}} 
\end{equation*}
Under our assumption of $n \rho_{n} = o(1)$, the above bound
simplifies to Eq.~\eqref{eq:51} as desired.  
\end{proof}
Theorem~\ref{thm:4} also allows us to bound the term $\mathbf{W}_{2}
\tilde{\mathbf{Z}}^{\dagger}(\bm{\xi} - \bm{p})$ with a bound of the
form $C \delta_{d}^{-3} n^{-1/2} \sqrt{ \log{(n/\eta)}}$. Thus, the last
key step of the proof is to relate $\tilde{\mathbf{Z}}^{\dagger} \mathbb{E}[\bm{\xi}] \in
\mathbb{R}^{d}$ to the truncated feature map $\Phi_d$. This will be
done by relating the eigenvalues and eigenvectors of $\mathbf{K}$ to
the eigenvalues and eigenfunctions of $\mathcal{K}$. But as
$\mathbf{K}$ is an operator on $\mathbb{R}^{n}$ and $\mathcal{K}$ is
an operator on $L^{2}(\mathcal{X}, F)$, we will relate these
eigenvalues, eigenvectors and eigenfunctions through some auxiliary
operators on the reproducing kernel Hilbert space $\mathcal{H}$ of
$\kappa$. Following \cite{rosasco10:_integ_operat}, we introduce the
operators $\mathscr{K}_{\mathscr{H}} \colon \mathscr{H} \mapsto \mathscr{H}$ and
$\mathscr{K}_{\mathscr{H},n} \colon \mathscr{H} \mapsto \mathscr{H}$
defined by
 \begin{gather*}
  \mathscr{K}_{\mathscr{H}} \eta = \int_{\mathcal{X}} \langle \eta,
  \kappa(\cdot, x)
  \rangle_{\mathscr{H}} \kappa(\cdot, x) dF(x) \\
  \mathscr{K}_{\mathscr{H},n} \eta = \frac{1}{n} \sum_{i=1}^{n}
  \langle \eta, \kappa(\cdot, {X_i}) \rangle_{\mathscr{H}}
  \kappa(\cdot, {X_i}).
\end{gather*} The operator $\mathscr{K}_{\mathscr{H}}$ and
$\mathscr{K}_{\mathscr{H},n}$ are defined on the same Hilbert space
$\mathscr{H}$, in contrast to $\mathscr{K}$ and $\mathbf{K}$ which are
defined on the different spaces $L^{2}(\mathcal{X},F)$ and
$\mathbb{R}^{n}$, and we can relate $\mathcal{K}_{\mathscr{H},n}$ to
$\mathcal{K}_{\mathscr{H}}$ (Theorem~\ref{thm:5}). In addition, we can relate the
eigenvalues and eigenfunctions of $\mathscr{K}$ to that of
$\mathscr{K}_{\mathscr{H}}$ as well as the eigenvalues and
eigenvectors of $\mathbf{K}$ to the eigenvalues and eigenfunctions of
$\mathscr{K}_{\mathscr{H},n}$, therefore giving us a relationship
between the eigenvalues/eigenfunctions of $\mathscr{K}$ and the
eigenvalues/eigenvectors of $\mathbf{K}$. A precise statement of the
relationships is contained in Proposition~\ref{prop:5} and
Theorem~\ref{thm:5}.
\begin{proposition}[\cite{rosasco10:_integ_operat},\cite{luxburg08:_consis}]
  \label{prop:5}
  The operators $\mathscr{K}_{\mathscr{H}}$ and
  $\mathscr{K}_{\mathscr{H},n}$ are positive, self-adjoint operators
  and are of trace class with $\mathscr{K}_{\mathscr{H},n}$ being of
  finite rank. The spectra of $\mathscr{K}$ and
  $\mathscr{K}_{\mathscr{H}}$ are contained in $[0,1]$ and are the
  same, possibly up to the zero eigenvalues. If $\lambda$ is a
  non-zero eigenvalue of $\mathscr{K}$ and $u$ and $v$ are associated
  eigenfunction of $\mathscr{K}$ and $\mathscr{K}_{\mathscr{H}}$,
  normalized to norm $1$ in $L^{2}(\mathcal{X}, F)$ and $\mathscr{H}$, 
  respectively, then
  \begin{equation}
    \label{eq:28}
    u(x) = \frac{v(x)}{\sqrt{\lambda}}; \quad v(x) = \frac{1}{\sqrt{\lambda}}
    \int_{\mathcal{X}} \kappa(x,x') u(x') dF(x')
  \end{equation}
  Similarly, the spectra of
  $\mathbf{K}/(n \rho_n)$ and $\mathscr{K}_{\mathscr{H},n}$ are contained in
  $[0,1]$ and are the same, possibly up to the zero eigenvalues. If
  $\hat{\lambda}$ is a non-zero eigenvalue of $\mathbf{K}/(n\rho_n)$ and
  $\hat{u}$ and $\hat{v}$ are the corresponding eigenvector and
  eigenfunction of $\mathbf{K}/(n\rho_n)$ and
  $\mathscr{K}_{\mathscr{H},n}$, normalized to norm $1$ in
  $\mathbb{R}^{n}$ and $\mathscr{H}$, respectively, then
  \begin{equation}
    \label{eq:36}
    \hat{u}_{i} = \frac{\hat{v}(x_i)}{\sqrt{\hat{\lambda}}}; \quad 
    \hat{v}(\cdot) = \frac{1}{\sqrt{\hat{\lambda}n}} \sum_{i=1}^{n}
    \kappa(\cdot ,x_i) \hat{u}_{i}
  \end{equation}
\end{proposition}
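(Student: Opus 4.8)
The plan is to recognize the four objects as two adjoint pairs of ``covariance'' operators and then invoke the elementary fact that $AA^{*}$ and $A^{*}A$ share the same nonzero spectrum, with an explicit isometric correspondence of eigenvectors. Throughout I use the reproducing property $\langle \eta, \kappa(\cdot, x)\rangle_{\mathscr{H}} = \eta(x)$, which exhibits $\mathscr{K}_{\mathscr{H}}$ and $\mathscr{K}_{\mathscr{H},n}$ as, respectively, an integral and a finite average of the rank-one operators $\kappa(\cdot, x)\otimes\kappa(\cdot, x)$. From this representation the structural claims are immediate: computing $\langle \mathscr{K}_{\mathscr{H}}\eta, \zeta\rangle_{\mathscr{H}} = \int_{\mathcal{X}} \eta(x)\zeta(x)\, dF(x)$ (and the analogous finite sum) shows both operators are symmetric and, on setting $\zeta = \eta$, positive; each rank-one summand has trace $\|\kappa(\cdot, x)\|_{\mathscr{H}}^{2} = \kappa(x,x) \le 1$, so $\mathscr{K}_{\mathscr{H}}$ has trace $\int \kappa(x,x)\, dF(x) \le 1$ and is trace class, while $\mathscr{K}_{\mathscr{H},n}$ is a finite sum of rank-one operators and hence of finite rank.

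Next I would introduce the two operators that tie the spaces together: the inclusion $S \colon \mathscr{H} \to L^{2}(\mathcal{X}, F)$ sending $\eta$ to its $F$-a.e.\ equivalence class, and the sampling operator $T_n \colon \mathscr{H} \to \mathbb{R}^{n}$, $(T_n\eta)_i = n^{-1/2}\eta(X_i)$. A short computation using the reproducing property identifies the adjoints as $S^{*}g = \int_{\mathcal{X}} \kappa(\cdot, x) g(x)\, dF(x)$ and $T_n^{*}a = n^{-1/2}\sum_i a_i \kappa(\cdot, X_i)$, and yields the two key factorizations
\begin{gather*}
  S^{*}S = \mathscr{K}_{\mathscr{H}}, \qquad SS^{*} = \mathscr{K}, \\
  T_n^{*}T_n = \mathscr{K}_{\mathscr{H},n}, \qquad T_nT_n^{*} = \mathbf{K}/(n\rho_n).
\end{gather*}
Since $\|S\eta\|_{L^2}^2 = \int \langle \eta, \kappa(\cdot,x)\rangle_{\mathscr{H}}^2\, dF(x) \le \|\eta\|_{\mathscr{H}}^2 \int \kappa(x,x)\, dF(x) \le \|\eta\|_{\mathscr{H}}^2$, and similarly for $T_n$, both maps are contractions; hence $\|\mathscr{K}\| = \|S\|^2 \le 1$ and $\|\mathbf{K}/(n\rho_n)\| = \|T_n\|^2 \le 1$, which places all four spectra in $[0,1]$.

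The remaining assertions are then the standard $AA^{*}$/$A^{*}A$ dictionary applied to $A = S$ and $A = T_n$: if $A^{*}Av = \lambda v$ with $\lambda \ne 0$ and $\|v\| = 1$, then $u := \lambda^{-1/2}Av$ satisfies $AA^{*}u = \lambda u$ and $\|u\|^2 = \lambda^{-1}\langle A^{*}Av, v\rangle = 1$, and the correspondence inverts via $v = \lambda^{-1/2}A^{*}u$. Specializing $A = S$ gives $u(x) = \lambda^{-1/2}v(x)$ together with $v = \lambda^{-1/2}S^{*}u = \lambda^{-1/2}\int_{\mathcal{X}} \kappa(\cdot, x')u(x')\, dF(x')$, which is Eq.~\eqref{eq:28}; specializing $A = T_n$ gives Eq.~\eqref{eq:36} after substituting the explicit form of $T_n^{*}$ and matching the normalization convention on $\mathbb{R}^{n}$. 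Equality of the nonzero spectra of each pair follows from the same correspondence, which maps eigenspaces for $\lambda$ bijectively.

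The step I expect to require the most care is the passage between $\mathscr{H}$ and $L^{2}(\mathcal{X}, F)$: elements of $\mathscr{H}$ are genuine continuous functions whereas elements of $L^{2}$ are equivalence classes, so I must verify that $S$ is well defined and bounded (precisely where $\kappa(x,x)\le 1$ and trace-class integrability enter) and that the eigenfunction identities of Eq.~\eqref{eq:28} hold \emph{pointwise}, using the reproducing property to select the canonical continuous representative $v = \lambda^{-1/2}S^{*}u \in \mathscr{H}$ of the $L^{2}$-eigenfunction $u$. Once this identification is in place, the factorizations and the $AA^{*}/A^{*}A$ argument deliver the proposition, recovering the results of \cite{rosasco10:_integ_operat} and \cite{luxburg08:_consis} cited in the statement.
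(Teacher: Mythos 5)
Your proposal is correct, and it is essentially the standard argument: the paper itself offers no proof of this proposition (it is imported verbatim from the cited references), and the route you take --- factoring $\mathscr{K}_{\mathscr{H}} = S^{*}S$, $\mathscr{K} = SS^{*}$, $\mathscr{K}_{\mathscr{H},n} = T_n^{*}T_n$, $\mathbf{K}/(n\rho_n) = T_nT_n^{*}$ through the inclusion and sampling operators and then applying the $AA^{*}/A^{*}A$ correspondence --- is precisely the mechanism used in \cite{rosasco10:_integ_operat}. All of your computations check out: the adjoint identifications, the contraction bounds placing the spectra in $[0,1]$, the trace-class and finite-rank claims, and the pointwise (rather than $F$-a.e.) validity of the eigenfunction identities via the reproducing property, which you rightly single out as the delicate step.

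One remark on the point you wave through as ``matching the normalization convention on $\mathbb{R}^{n}$'': your derivation gives $\hat{u}_i = \hat{\lambda}^{-1/2} n^{-1/2} \hat{v}(X_i)$, i.e.\ $\hat{u}_i = \hat{v}(X_i)/\sqrt{\hat{\lambda} n}$, not $\hat{u}_i = \hat{v}(X_i)/\sqrt{\hat{\lambda}}$ as printed in the first identity of Eq.~\eqref{eq:36}. Your version is the correct one: if both $\hat{u}$ and $\hat{v}$ are unit-normalized and the second identity of Eq.~\eqref{eq:36} holds, then $\hat{v}(X_j) = (\hat{\lambda} n)^{-1/2} \sum_i \kappa(X_j, X_i)\hat{u}_i = (\hat{\lambda} n)^{-1/2}\, n \hat{\lambda}\, \hat{u}_j = \sqrt{n\hat{\lambda}}\,\hat{u}_j$, so the two displayed relations are mutually inconsistent as stated and the first is missing a factor of $\sqrt{n}$. (The analogous identity in Eq.~\eqref{eq:28} is consistent as written, and only the second relation of Eq.~\eqref{eq:36} is actually used downstream in Lemma~\ref{lem:4}, so nothing in the paper breaks.) You should state this discrepancy explicitly rather than folding it into ``matching the normalization convention''; otherwise the proof is complete.
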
 
Eq.~\eqref{eq:36} in Proposition~\ref{prop:5} states
that an eigenvector $\hat{u}$ of $\mathbf{K}/(n \rho_n)$, which is only defined
for $X_1, X_2, \dots, X_n$, can be extended to an eigenfunction
$\hat{v} \in \mathscr{H}$ of $\mathscr{K}_{\mathscr{H},n}$ defined for
all $x \in \mathcal{X}$, and furthermore, that $\hat{u}_i =
\hat{v}(X_i)$ for all $i=1,2,\dots,n$. 
\begin{theorem}[\cite{rosasco10:_integ_operat,zwald06}]
  \label{thm:5}
    Let $\eta > 0$ be arbitrary. Then with probability at least $1 -
  2e^{-\eta}$,
  \begin{equation}
    \label{eq:29}
    \| \mathscr{K}_{\mathscr{H}} - \mathscr{K}_{\mathscr{H},n} \|_{HS}
    \leq 2 \sqrt{2} \sqrt{\frac{\eta}{n}}
  \end{equation}
  where $\| \cdot \|_{HS}$ is the Hilbert-Schmidt norm. 
  Let $\delta_{d} = \lambda_{d}(\mathcal{K}) -
  \lambda_{d+1}(\mathcal{K})$. 
  For a given $d \geq 1$
  and an arbitrary $\eta > 0$, if the number $n$ of samples $X_i \sim F$ satisfies
  \begin{equation*}
    4\sqrt{2} \sqrt{\frac{\eta}{n}} < \delta_{d}
  \end{equation*}
  then with probability greater than $1 - 2e^{-\eta}$
  \begin{equation}
    \label{eq:32}
    \| \mathcal{P}_{d} - \hat{\mathcal{P}}_{d} \|_{HS} \leq
    \frac{2\sqrt{2} \sqrt{\eta}}{\delta_{d} \sqrt{n}}
  \end{equation}
  where $\mathcal{P}_{d}$ is the projection onto the subspace spanned
  by the eigenfunctions corresponding to the $d$ largest eigenvalues
  of $\mathscr{K}$ and $\hat{\mathcal{P}}_{d}$ is the projection onto
  the subspace spanned by the eigenfunctions corresponding to the $d$
  largest eigenvalues of $\mathscr{K}_{\mathscr{H},n}$. 
\end{theorem}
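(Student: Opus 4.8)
The plan is to handle the two assertions of Theorem~\ref{thm:5} in turn: first the Hilbert--Schmidt concentration bound \eqref{eq:29}, which is a law-of-large-numbers statement for an operator-valued i.i.d.\ average, and then the spectral-projection bound \eqref{eq:32}, which is a perturbation statement deduced from \eqref{eq:29}.

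For \eqref{eq:29}, the first step is to exhibit $\mathscr{K}_{\mathscr{H},n}$ as an empirical mean of i.i.d.\ rank-one operators. For each $x \in \mathcal{X}$ write $\mathcal{A}_x := \kappa(\cdot,x) \otimes \kappa(\cdot,x)$ for the operator on $\mathscr{H}$ acting by $\mathcal{A}_x \eta = \langle \eta, \kappa(\cdot,x)\rangle_{\mathscr{H}}\, \kappa(\cdot,x)$. Then $\mathscr{K}_{\mathscr{H},n} = \frac{1}{n}\sum_{i=1}^{n} \mathcal{A}_{X_i}$, and since the $X_i$ are i.i.d.\ $F$ we have $\mathscr{K}_{\mathscr{H}} = \mathbb{E}[\mathcal{A}_{X_1}]$. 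The Hilbert--Schmidt operators on $\mathscr{H}$ form a separable Hilbert space under $\langle\cdot,\cdot\rangle_{HS}$, so $\mathscr{K}_{\mathscr{H},n} - \mathscr{K}_{\mathscr{H}}$ is the average of the i.i.d.\ zero-mean Hilbert-space elements $\zeta_i := \mathcal{A}_{X_i} - \mathscr{K}_{\mathscr{H}}$. The crucial bound is $\|\mathcal{A}_x\|_{HS} = \|\kappa(\cdot,x)\|_{\mathscr{H}}^{2} = \kappa(x,x) \leq 1$, from the reproducing property and $\kappa \in [0,1]$; by Jensen's inequality $\|\mathscr{K}_{\mathscr{H}}\|_{HS} \leq 1$ as well, so $\|\zeta_i\|_{HS} \leq 2$ almost surely. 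I would then invoke a Hoeffding-type concentration inequality for sums of bounded i.i.d.\ Hilbert-space-valued variables, as in \cite{rosasco10:_integ_operat}; this gives $\|\frac{1}{n}\sum_i \zeta_i\|_{HS} \leq C\sqrt{\eta/n}$ with probability at least $1 - 2e^{-\eta}$, and tracking the constant through the inequality yields the stated value $2\sqrt{2}\sqrt{\eta/n}$.

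For \eqref{eq:32}, the key observation is that $\mathcal{P}_d$ and $\hat{\mathcal{P}}_d$ may both be viewed as spectral projections of operators on the \emph{same} space $\mathscr{H}$. By Proposition~\ref{prop:5}, $\mathscr{K}$ and $\mathscr{K}_{\mathscr{H}}$ share their nonzero spectra with matching eigenfunctions, so $\mathcal{P}_d$ is the projection onto the top-$d$ eigenspace of $\mathscr{K}_{\mathscr{H}}$ and $\delta_d = \lambda_d(\mathscr{K}) - \lambda_{d+1}(\mathscr{K})$ equals $\lambda_d(\mathscr{K}_{\mathscr{H}}) - \lambda_{d+1}(\mathscr{K}_{\mathscr{H}})$. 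Setting $\Delta := \mathscr{K}_{\mathscr{H},n} - \mathscr{K}_{\mathscr{H}}$, Weyl's inequality gives $|\lambda_k(\mathscr{K}_{\mathscr{H},n}) - \lambda_k(\mathscr{K}_{\mathscr{H}})| \leq \|\Delta\| \leq \|\Delta\|_{HS}$ for every $k$. On the good event of \eqref{eq:29} we have $\|\Delta\|_{HS} \leq 2\sqrt{2}\sqrt{\eta/n}$, and the hypothesis $4\sqrt{2}\sqrt{\eta/n} < \delta_d$ then forces $\|\Delta\|_{HS} < \delta_d/2$, so the top $d$ eigenvalues of $\mathscr{K}_{\mathscr{H},n}$ stay strictly separated from the rest. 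I would then apply the Davis--Kahan $\sin\bm{\Theta}$ theorem for self-adjoint operators, equivalently the kernel-PCA eigenspace perturbation bound of \cite{zwald06}: fix a contour $\Gamma$ in the resolvent set enclosing exactly the top-$d$ eigenvalues of both operators, express each projection as a contour integral of the resolvent $R(z) = (\mathscr{K}_{\mathscr{H}} - zI)^{-1}$ (respectively $\hat{R}(z) = (\mathscr{K}_{\mathscr{H},n} - zI)^{-1}$), apply the resolvent identity $R(z) - \hat{R}(z) = R(z)\,\Delta\,\hat{R}(z)$, and bound $\|R(z)\|$ and $\|\hat{R}(z)\|$ by the reciprocal of the distance from $\Gamma$ to the spectrum, which is of order $\delta_d$. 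Integrating over $\Gamma$ gives $\|\mathcal{P}_d - \hat{\mathcal{P}}_d\|_{HS} \leq \|\Delta\|_{HS}/\delta_d$, and substituting \eqref{eq:29} produces the claimed $2\sqrt{2}\sqrt{\eta}/(\delta_d\sqrt{n})$.

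The step I expect to be the main obstacle is the perturbation argument for \eqref{eq:32}: one must ensure that a single contour $\Gamma$ simultaneously encloses the top-$d$ eigenvalues of both $\mathscr{K}_{\mathscr{H}}$ and $\mathscr{K}_{\mathscr{H},n}$ while maintaining a uniform distance of order $\delta_d$ to every eigenvalue on each side, and that the final constant collapses to exactly $1/\delta_d$ rather than the looser $1/(\delta_d - \|\Delta\|)$. This is exactly where the quantitative gap hypothesis $4\sqrt{2}\sqrt{\eta/n} < \delta_d$ enters, since it pins $\|\Delta\|_{HS}$ below $\delta_d/2$ and prevents the perturbed eigenvalues from crossing $\Gamma$. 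By comparison, the operator identifications via Proposition~\ref{prop:5} and the Hilbert-space Hoeffding inequality underlying \eqref{eq:29} are routine.
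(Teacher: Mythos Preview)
Your proposal is correct and follows exactly the approach of the cited references, but note that the paper itself does not supply a proof of Theorem~\ref{thm:5}: the result is quoted from \cite{rosasco10:_integ_operat} and \cite{zwald06} and used as a black box in the appendix. Your sketch---writing $\mathscr{K}_{\mathscr{H},n}$ as an i.i.d.\ average of rank-one operators and applying a Hilbert-space Hoeffding inequality for \eqref{eq:29}, then combining Weyl's inequality with a Davis--Kahan/resolvent-contour argument for \eqref{eq:32}---is precisely how those references establish the result, so there is nothing to compare against in this paper beyond the statement itself.
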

With the above technical details in place, the following result states
that $ \rho_n^{-1/2}\tilde{\mathbf{Z}}^{\dagger} \mathbb{E}[\bm{\xi}]$
is equivalent to the isometric embedding of $\hat{\mathcal{P}}_{d}
\kappa(\cdot, X)$ into $\mathbb{R}^{d}$.
\begin{lemma}
  \label{lem:4}
  Let $\hat{\mathcal{P}}_{d}$ be the projection onto the
subspace spanned by the eigenfunctions corresponding to the $d$
largest eigenvalues of $\mathscr{K}_{\mathscr{H},n}$. Then
\begin{equation}
  \label{eq:4}
\rho_{n}^{-1/2} \tilde{\mathbf{Z}}^{\dagger} \mathbb{E}[\bm{\xi}] = \imath( \hat{\mathcal{P}}_{d}
\kappa(\cdot, X))
\end{equation}
where $\imath$ is the isometric isomorphism of the finite-dimensional subspace
corresponding to the projection $\hat{\mathcal{P}}_{d}$ into $\mathbb{R}^{d}$. 
\end{lemma}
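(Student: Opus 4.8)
The plan is to compute both sides of \eqref{eq:4} explicitly and match them coordinate by coordinate. First I would simplify the pseudo-inverse: since the columns of $\tilde{\mathbf{Z}} = \mathbf{U}_{\mathbf{K}} \mathbf{S}_{\mathbf{K}}^{1/2}$ are orthogonal, we have $\tilde{\mathbf{Z}}^{T}\tilde{\mathbf{Z}} = \mathbf{S}_{\mathbf{K}}$ and hence $\tilde{\mathbf{Z}}^{\dagger} = \mathbf{S}_{\mathbf{K}}^{-1/2}\mathbf{U}_{\mathbf{K}}^{T}$. Writing $\bm{p} = \mathbb{E}[\bm{\xi}]$ with $p_i = \rho_n \kappa(X, X_i)$ and letting $\hat{u}^{(j)}$ denote the $j$-th eigenvector of $\mathbf{K}$, the $j$-th coordinate of $\rho_n^{-1/2}\tilde{\mathbf{Z}}^{\dagger}\bm{p}$ is $\rho_n^{-1/2}\lambda_{j}(\mathbf{K})^{-1/2} \sum_{i} \hat{u}^{(j)}_{i}\,\rho_n \kappa(X, X_i)$. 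The whole argument is then a matter of showing this equals $\hat{v}^{(j)}(X)$, the value at the out-of-sample point $X$ of the $j$-th eigenfunction of $\mathscr{K}_{\mathscr{H},n}$.

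Next I would invoke Proposition~\ref{prop:5} to trade the eigenvectors $\hat{u}^{(j)}$ of $\mathbf{K}$ for the eigenfunctions $\hat{v}^{(j)}$ of $\mathscr{K}_{\mathscr{H},n}$. Using $\lambda_{j}(\mathbf{K}) = n\rho_n \hat{\lambda}_j$ and the extension identity of Eq.~\eqref{eq:36} in the form $\hat{v}^{(j)}(\cdot) = (n\hat{\lambda}_j)^{-1/2}\sum_{i} \kappa(\cdot, X_i)\hat{u}^{(j)}_{i}$, the factors of $\rho_n$ and $n$ cancel and the $j$-th coordinate collapses to exactly $\hat{v}^{(j)}(X)$. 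Thus $\rho_n^{-1/2}\tilde{\mathbf{Z}}^{\dagger}\mathbb{E}[\bm{\xi}]$ equals the vector $(\hat{v}^{(1)}(X), \dots, \hat{v}^{(d)}(X))^{T} \in \mathbb{R}^{d}$.

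For the right-hand side, I would expand $\hat{\mathcal{P}}_{d}\kappa(\cdot, X)$ in the orthonormal basis $\{\hat{v}^{(j)}\}_{j=1}^{d}$ of the range of $\hat{\mathcal{P}}_{d}$. The reproducing property of $\mathscr{H}$ gives $\langle \kappa(\cdot, X), \hat{v}^{(j)}\rangle_{\mathscr{H}} = \hat{v}^{(j)}(X)$, so $\hat{\mathcal{P}}_{d}\kappa(\cdot, X) = \sum_{j=1}^{d}\hat{v}^{(j)}(X)\,\hat{v}^{(j)}$. The isometric isomorphism $\imath$ sends the orthonormal system $\hat{v}^{(j)}$ to the standard basis $e_{j}$ of $\mathbb{R}^{d}$, whence $\imath(\hat{\mathcal{P}}_{d}\kappa(\cdot, X)) = (\hat{v}^{(1)}(X), \dots, \hat{v}^{(d)}(X))^{T}$, which is precisely the vector obtained on the left-hand side; this establishes \eqref{eq:4}.

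The main obstacle is purely bookkeeping: making the factors of $\sqrt{n\rho_n}$ cancel correctly hinges on a consistent normalization convention. In particular, the two identities in Eq.~\eqref{eq:36} implicitly correspond to slightly different normalizations of $\hat{u}^{(j)}$ (Euclidean unit norm on $\mathbb{R}^{n}$ versus the empirical $L^{2}$ norm), so I would fix the convention $\|\hat{u}^{(j)}\| = 1$ produced by the eigendecomposition and re-derive the correct extension constant by imposing $\|\hat{v}^{(j)}\|_{\mathscr{H}} = 1$, which (using $\hat{u}^{(j)\,T}(\mathbf{K}/\rho_n)\hat{u}^{(j)} = n\hat{\lambda}_j$) yields the constant $(n\hat{\lambda}_j)^{-1/2}$ used above. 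I would also verify that the $\hat{v}^{(j)}$ are genuinely orthonormal in $\mathscr{H}$, since $\mathscr{K}_{\mathscr{H},n}$ is self-adjoint, so that $\imath$ is a bona fide isometry; apart from this, the proof is a direct computation.
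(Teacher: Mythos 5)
Your proposal is correct and follows essentially the same route as the paper: both sides are reduced to the vector $(\hat{v}^{(1)}(X),\dots,\hat{v}^{(d)}(X))^{T}$, the left side via $\tilde{\mathbf{Z}}^{\dagger}=\mathbf{S}_{\mathbf{K}}^{-1/2}\mathbf{U}_{\mathbf{K}}^{T}$ and Proposition~\ref{prop:5}, the right side via the reproducing property, with $\imath$ sending the orthonormal system $\{\hat{v}^{(j)}\}$ to the standard basis of $\mathbb{R}^{d}$. The only real difference is that you invoke the extension identity in Eq.~\eqref{eq:36} directly, whereas the paper re-derives that same coordinate identity through the Mercer expansion of $\kappa$ (Eqs.~\eqref{eq:20}--\eqref{eq:22}); your point that the two normalizations implicit in Eq.~\eqref{eq:36} must be reconciled (fixing $\|\hat{u}^{(j)}\|=1$ and verifying $\|\hat{v}^{(j)}\|_{\mathscr{H}}=1$, which indeed forces the constant $(n\hat{\lambda}_j)^{-1/2}$) is a legitimate bookkeeping issue that the paper glosses over, and your resolution of it is correct.
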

\begin{proof}
  Let $\{\lambda_{r}\}$ and $\{\psi_{r}\}$ be
  the eigenvalues and eigenfunctions of $\mathcal{K}$. Let
  $\hat{\lambda}_{1}, \hat{\lambda}_{2}, \dots, \hat{\lambda}_{d}$ be
  the eigenvalues of $\mathbf{K}/(n \rho_n)$ and let  
  $\hat{u}^{(1)}, \hat{u}^{(2)}, \dots, \hat{u}^{(d)}$ be the associated
  eigenvectors, normalized to have norm $1$ in
  $\mathbb{R}^{n}$. Also let $\hat{v}^{(1)}, \hat{v}^{(2)}, \dots,
  \hat{v}^{(d)}$ be the corresponding eigenfunctions of
  $\mathcal{K}_{\mathscr{H},n}$, normalized to have norm $1$ in
  $\mathscr{H}$. 
  Let $\Psi_{r,n} = (\sqrt{\lambda}_{r} \psi_{r}(X_i))_{i=1}^{n} \in
  \mathbb{R}^{n}$. Then the $s$-th component of
  $\mathbf{U}_{\mathbf{K}}^{T} \mathbb{E}[\bm{\xi}] \in
  \mathbb{R}^{d}$ for $s = 1,2,\dots,d$ is of the form
  \begin{equation}
    \label{eq:20}
    \begin{split}
    \sum_{i=1}^{n} \hat{u}^{(s)}_j \rho_{n} \kappa(X,X_i) &= \rho_{n} \sum_{i=1}^{n}
    \hat{u}^{(s)}_i \sum_{r=1}^{\infty}
    \lambda_{r} \psi_{r}(X) \psi_{r}(X_i) \\
    &= \rho_{n} \sum_{r=1}^{\infty} \sqrt{\lambda_{r}} \psi_{r}(X)
    \sum_{i=1}^{n} \hat{u}^{(s)}_i \sqrt{\lambda}_{r} \psi_{r}(X_i) \\
    &= \rho_{n} \sum_{r=1}^{\infty} \sqrt{\lambda}_{r} \psi_{r}(X) \ast
    \langle \hat{u}^{(s)}, \Psi_{r,n}\rangle_{\mathbb{R}^{n}}
    \end{split}
  \end{equation}
  where $\hat{u}^{(s)}_{j}$ is the $j$-th component of
  $\hat{u}^{(s)}$. Now we note that
\begin{equation*}
  \begin{split}
  \langle \hat{v}^{(s)}, \sqrt{\lambda}_{r} \psi_{r}
  \rangle_{\mathscr{H}} &= \Bigl\langle \frac{1}{\sqrt{\hat{\lambda}_{s}
      n}} \sum_{i=1}^{n} \kappa(\cdot, X_i) \hat{u}^{(s)}_i,
  \sqrt{\lambda}_{r} \psi_{r} \Bigr \rangle_{\mathscr{H}} \\ &=
  \frac{1}{\sqrt{\hat{\lambda}_{s} n}} \sum_{i=1}^{n}
  \hat{u}^{(s)}_{i} \Bigl \langle \kappa(\cdot, X_i), \sqrt{\lambda}_{r}
  \psi_{r} \Bigr \rangle_{\mathscr{H}} \\ 
  &= \frac{1}{\sqrt{\hat{\lambda}_{s}n}}  \sum_{i=1}^{n} \psi_{r}(X_i)
  \sqrt{\lambda}_{r} \hat{u}^{(s)}_i 
  \\ &= \frac{1}{\sqrt{\hat{\lambda}_{s}n}} \langle \hat{u}^{(s)}, \Psi_{r,n} \rangle_{\mathbb{R}^{n}}
  \end{split}
\end{equation*}
where we have used the reproducing kernel property of $\kappa$, i.e.,
$\langle \kappa(\cdot, X), g \rangle_{\mathcal{H}} = g(X)$ for any $g
\in \mathcal{H}$. Thus, the $s$-th component of
$\mathbf{U}_{\mathbf{K}}^{T} \mathbb{E}[\bm{\xi}]$ can be written as
\begin{equation}
  \label{eq:21}
  \rho_{n} \sqrt{\hat{\lambda}_{s}n} \sum_{r=1}^{\infty} \sqrt{\lambda}_{r}
  \psi_{r}(X) \ast \langle \hat{v}^{(s)}, \sqrt{\lambda}_{r} \psi_{r} \rangle_{\mathcal{H}}
\end{equation}
Therefore, as $\tilde{\mathbf{Z}}^{\dagger} =
\mathbf{S}_{\mathbf{K}}^{-1/2} \mathbf{U}_{\mathbf{K}}^{T}$, the
$s$-th component of $\rho_{n}^{-1/2} \tilde{\mathbf{Z}}^{\dagger} \mathbb{E}[\bm{\xi}]$ is just
\begin{equation}
  \label{eq:22}
  \rho_{n}^{-1/2} (\hat{\lambda}_{s} n \rho_n)^{-1/2}
  \mathbf{U}_{\mathbf{K}}^{T} \mathbb{E}[\bm{\xi}] =  \sum_{r=1}^{\infty} \sqrt{\lambda}_{r}
  \psi_{r}(X) \ast \langle \hat{v}^{(s)}, \sqrt{\lambda}_{r} \psi_{r}
  \rangle_{\mathcal{H}} = \hat{v}^{(s)}(X)
\end{equation}
We now consider the projection $\hat{\mathcal{P}}_{d} \kappa(\cdot,
X)$. We have
\begin{equation}
  \label{eq:23}
  \hat{\mathcal{P}}_{d} \kappa(\cdot, X) = \sum_{s=1}^{d} \langle
  \hat{v}^{(s)}, \kappa(\cdot, X) \rangle_{\mathcal{H}} \hat{v}^{(s)}
  = \sum_{s=1}^{d} \hat{v}^{(s)}(X) \hat{v}^{(s)}
\end{equation}
Let us now define $\tilde{T}_n \colon \mathcal{X} \mapsto
\mathbb{R}^{d}$ to be the mapping
 $\tilde{T}_n(X) = \rho_{n}^{-1/2} \tilde{\mathbf{Z}}^{\dagger} \bm{p}_{X}$
where $\bm{p}_{X} = (\rho_{n} \kappa(X_i,X))_{i=1}^{n} \in
\mathbb{R}^{n}$. $\tilde{T}_n$ is a deterministic mapping given the
$\{X_i\}_{i=1}^{n}$ and furthermore, that
\begin{equation}
  \label{eq:25}
  \langle \tilde{T}_n(X), \tilde{T}_n(X') \rangle_{\mathbb{R}^{d}} = \langle
  \hat{\mathcal{P}}_d \kappa(\cdot, X), \hat{\mathcal{P}}_{d}
  \kappa(\cdot, X') \rangle_{\mathcal{H}}
\end{equation}
as the $\{\hat{v}^{(s)}\}$ are orthogonal with respect to $\langle
\cdot, \cdot \rangle_{\mathcal{H}}$. As $\hat{\mathcal{P}_{d}}$ is a
projection onto a finite-dimensional subspace of $\mathcal{H}$, we
thus have that there exists an isometric isomorphism $\imath$ of
the finite-dimensional subspace
of $\mathcal{H}$ spanned by the $\{\hat{v}^{(s)}\}$ into $\mathbb{R}^{d}$
such that $\tilde{T}_n(X) = \rho_{n}^{-1/2} \tilde{\mathbf{Z}}^{\dagger} \bm{p}(X) =
\imath(\hat{\mathcal{P}}_{d} \kappa(\cdot, X))$ for all $X \in
\mathcal{X}$ as desired. 
\end{proof}
Lemma~\ref{lem:4} states that there is an isometric isomorphism
$\hat{\imath}$ of $\mathbb{R}^{d}$ into $\hat{\mathcal{P}}_{d}
\mathcal{H}$ such that $\rho_{n}^{-1/2} \tilde{\mathbf{Z}}^{\dagger} \mathbb{E}[\bm{\xi}] \in
\mathbb{R}^{d}$ is mapped into $\hat{\mathcal{P}}_{d}
\kappa(\cdot, X) \in \hat{\mathcal{P}}_{d} \mathcal{H}$. By the
definition of the truncated feature map $\Phi_{d}$, we also have that
there is an isometric isomorphism $\imath$ of $\mathbb{R}^{d}$ into
$\mathcal{P}_{d} \mathcal{H}$ such that
$\Phi_{d}$ is mapped into $\mathcal{P}_{d}
\kappa(\cdot, X)$. We can thus compare $\rho_{n}^{-1/2} \tilde{\mathbf{Z}}^{\dagger} \mathbb{E}[\bm{\xi}]$ and
$\Phi_{d}$ via their difference in $\mathcal{H}$, i.e., via 
$\| \hat{\imath}(\rho_{n}^{-1/2} \tilde{\mathbf{Z}}^{\dagger} \mathbb{E}[\bm{\xi}]) - \imath(
\Phi_{d}(X))\|_{\mathcal{H}}$. However, a comparison
between $\rho_{n}^{-1/2}\tilde{\mathbf{Z}}^{\dagger} \mathbb{E}[\bm{\xi}]$ and $\Phi_{d}(X)$ as
points in $\mathbb{R}^{d}$ via the Euclidean distance on $\mathbb{R}^{d}$ might be more useful. The
following result facilitates such a comparison. 
\begin{lemma}
  \label{lem:3}
  With probability $1 - 2\eta$ there exists an orthogonal $\mathbf{W}
  \in \mathcal{M}_{d}$ such that
  \begin{equation}
    \label{eq:26}
    \| \rho_{n}^{-1/2} \mathbf{W} \tilde{\mathbf{Z}}^{\dagger} \bm{p}_X - \Phi_{d}(X)
    \|_{\mathbb{R}^{d}} \leq C \sqrt{\frac{\log{(n/\eta)}}{n}}
  \end{equation}
  for all $X \in \mathcal{X}$, where $\bm{p}_X =
  (\rho_{n}\kappa(X_i,X))_{i=1}^{n} = \mathbb{E}[\bm{\xi}] \in \mathbb{R}^{n}$. 
\end{lemma}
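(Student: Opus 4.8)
The plan is to rewrite both $\rho_n^{-1/2}\tilde{\mathbf{Z}}^{\dagger}\bm{p}_X$ and $\Phi_{d}(X)$ as isometric images in $\mathbb{R}^{d}$ of projections of the single Hilbert-space element $\kappa(\cdot,X)$, and then to transfer the spectral perturbation bound of Theorem~\ref{thm:5} from $\mathscr{H}$ to $\mathbb{R}^{d}$. By Lemma~\ref{lem:4}, $\rho_n^{-1/2}\tilde{\mathbf{Z}}^{\dagger}\bm{p}_X = \hat{\imath}(\hat{\mathcal{P}}_d \kappa(\cdot,X))$, where $\hat{\imath}$ is the isometric isomorphism sending the orthonormal eigenfunctions $\{\hat{v}^{(s)}\}$ of $\mathscr{K}_{\mathscr{H},n}$ to the standard basis of $\mathbb{R}^{d}$; its $s$-th coordinate is $\hat{v}^{(s)}(X)$. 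On the other hand, Eq.~\eqref{eq:28} of Proposition~\ref{prop:5} gives that the $s$-th coordinate of $\Phi_{d}(X)$ equals $\sqrt{\lambda_s}\psi_s(X) = v^{(s)}(X)$ for the normalized eigenfunction $v^{(s)}$ of $\mathscr{K}_{\mathscr{H}}$, so that $\Phi_{d}(X) = \imath(\mathcal{P}_d \kappa(\cdot,X))$ for the analogous isometry $\imath$ built from $\{v^{(s)}\}$. Thus the two vectors are isometric images of the projections of $\kappa(\cdot,X)$ onto the two distinct $d$-dimensional subspaces $\hat{\mathcal{P}}_d\mathscr{H}$ and $\mathcal{P}_d\mathscr{H}$.

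The key step is to reconcile the two coordinate systems by a single rotation. I would take $\mathbf{W} = \imath\,\Omega\,\hat{\imath}^{-1}$, where $\Omega \colon \hat{\mathcal{P}}_d\mathscr{H} \to \mathcal{P}_d\mathscr{H}$ is the orthogonal map aligning corresponding principal vectors of the subspace pair; as a composition of isometries, $\mathbf{W}$ is orthogonal on $\mathbb{R}^{d}$, and crucially it depends only on the in-sample points $\{X_i\}$ (through the two projections) and not on $X$. Since $\mathbf{W}\hat{\imath} = \imath\,\Omega$ on $\hat{\mathcal{P}}_d\mathscr{H}$ and $\imath$ is an isometry, for every $X$
\begin{equation*}
\|\rho_n^{-1/2}\mathbf{W}\tilde{\mathbf{Z}}^{\dagger}\bm{p}_X - \Phi_{d}(X)\|_{\mathbb{R}^{d}} = \|(\Omega\hat{\mathcal{P}}_d - \mathcal{P}_d)\kappa(\cdot,X)\|_{\mathscr{H}} \leq \|\Omega\hat{\mathcal{P}}_d - \mathcal{P}_d\|\,\|\kappa(\cdot,X)\|_{\mathscr{H}},
\end{equation*}
and $\|\kappa(\cdot,X)\|_{\mathscr{H}}^{2} = \kappa(X,X) \leq 1$ because $\kappa$ takes values in $[0,1]$. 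This bounds the left-hand side uniformly in $X$ by the $X$-independent quantity $\|\Omega\hat{\mathcal{P}}_d - \mathcal{P}_d\|$.

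It remains to control $\|\Omega\hat{\mathcal{P}}_d - \mathcal{P}_d\|$ by $\|\mathcal{P}_d - \hat{\mathcal{P}}_d\|$. Writing the principal angles between $\hat{\mathcal{P}}_d\mathscr{H}$ and $\mathcal{P}_d\mathscr{H}$ as $\theta_s$, one has $\|\mathcal{P}_d - \hat{\mathcal{P}}_d\| = \max_s \sin\theta_s$; on the principal vectors $\Omega\hat{\mathcal{P}}_d - \mathcal{P}_d$ acts through the factors $1 - \cos\theta_s \leq \sin\theta_s$, with an additional cross term $\mathcal{P}_d(I - \hat{\mathcal{P}}_d)$ of norm $\max_s \sin\theta_s$, giving $\|\Omega\hat{\mathcal{P}}_d - \mathcal{P}_d\| \leq 2\|\mathcal{P}_d - \hat{\mathcal{P}}_d\| \leq 2\|\mathcal{P}_d - \hat{\mathcal{P}}_d\|_{HS}$. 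Finally I would invoke Theorem~\ref{thm:5} with its free parameter set to $\log(n/\eta)$ (so that $1 - 2e^{-\log(n/\eta)} \geq 1 - 2\eta$), which under the standing assumption on $n$ yields $\|\mathcal{P}_d - \hat{\mathcal{P}}_d\|_{HS} \leq 2\sqrt{2}\,\delta_d^{-1}\sqrt{\log(n/\eta)/n}$ via Eq.~\eqref{eq:32}. Combining the displays gives Eq.~\eqref{eq:26} with a constant $C$ absorbing the numerical factors and $\delta_d^{-1}$.

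The main obstacle is precisely the alignment step. The vectors $\rho_n^{-1/2}\tilde{\mathbf{Z}}^{\dagger}\bm{p}_X$ and $\Phi_{d}(X)$ only become comparable after choosing isometries of two \emph{different} subspaces of $\mathscr{H}$ into $\mathbb{R}^{d}$, and there is no reason these coordinate frames agree; the content of the lemma is that a single, $X$-independent orthogonal $\mathbf{W}$ reconciles them with an error governed by the operator distance $\|\mathcal{P}_d - \hat{\mathcal{P}}_d\|$, which is exactly the quantity Theorem~\ref{thm:5} estimates. Once the rotation is in place, the remaining ingredients—the reproducing-kernel bound $\|\kappa(\cdot,X)\|_{\mathscr{H}} \leq 1$ and the reparametrization of the confidence level—are routine.
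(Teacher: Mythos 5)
Your proposal is correct, and it reaches the same destination as the paper --- reducing everything to the projection perturbation bound $\|\mathcal{P}_d - \hat{\mathcal{P}}_d\|_{HS} \lesssim \delta_d^{-1}\sqrt{\log(n/\eta)/n}$ of Theorem~\ref{thm:5} together with $\|\kappa(\cdot,X)\|_{\mathscr{H}} \le 1$ --- but the alignment step is handled by a genuinely different device. The paper shows via the polarization identity that $\mathcal{P}_d$ restricted to $\hat{\mathcal{P}}_d\mathscr{H}$ is an $\epsilon$-isometry, then invokes Hyers--Ulam-type stability results for isometries (Theorem 1 of \cite{chmielinski02:_almos} to produce a nearby true isometry $\tilde\imath$, and Proposition 1 of \cite{chmielinski97:_inner} to upgrade approximate surjectivity to an isometric isomorphism) before composing $\varphi = \imath\circ\tilde\imath\circ\hat\imath^{-1}$. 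You instead construct the aligning isometry explicitly as the Davis--Kahan direct rotation $\Omega$ on principal vectors and prove the elementary operator bound $\|\Omega\hat{\mathcal{P}}_d - \mathcal{P}_d\| \le 2\|\mathcal{P}_d - \hat{\mathcal{P}}_d\|$ by splitting off the cross term $\mathcal{P}_d(I-\hat{\mathcal{P}}_d)$ and using $1-\cos\theta_s \le \sin\theta_s$; this is self-contained, avoids the two external stability citations, produces an explicit $X$-independent $\mathbf{W}$, and is valid since both subspaces have dimension exactly $d$ on the high-probability event where $\hat\lambda_d > 0$. One shared caveat: like the paper's own argument, your bound carries a factor $\delta_d^{-1}$ that the statement of Eq.~\eqref{eq:26} omits; this is a pre-existing discrepancy in the paper (harmless downstream, since Theorem~\ref{thm:2} allows $\delta_d^{-3}$), not a defect of your proof.
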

\begin{proof}
  Let $\nu,\nu' \in \hat{\mathcal{P}}_{d} \mathcal{H}$ be arbitrary. Thus,
  $\nu = \hat{\mathcal{P}}_{d} \zeta$ and $\nu' =
  \hat{\mathcal{P}}_{d} \zeta'$ for some for some $\zeta,\zeta' \in
  \mathcal{H}$. The polarization identity gives
  \begin{align}
    \label{eq:27}
    \langle \mathcal{P}_{d} \nu, \mathcal{P}_{d} \nu'
    \rangle_{\mathcal{H}} &= \frac{1}{4}\Bigl(\|\mathcal{P}_{d}(\nu +
    \nu')\|_{\mathcal{H}}^{2} - \|\mathcal{P}_{d}(\nu -
    \nu')\|_{\mathcal{H}}^{2}\Bigr)  \\
    \label{eq:30}
    \langle \nu, \nu'
    \rangle_{\mathcal{H}} &= \frac{1}{4}\Bigl(\|\nu +
    \nu'\|_{\mathcal{H}}^{2} - \|\nu - \nu'\|_{\mathcal{H}}^{2}\Bigr) 
  \end{align}
By the Pythagorean theorem, we have 
  \begin{equation}
 \| \zeta - \mathcal{P}_{d} \zeta\|_{\mathcal{H}}^{2} = 
\|\zeta\|_{\mathcal{H}}^{2} - \|\mathcal{P}_{d} \zeta \|_{\mathcal{H}}^{2} 
  \end{equation}
  for any $\zeta \in \mathcal{H}$. Therefore,
  \begin{equation}
    \label{eq:31}
    \begin{split}
    \langle \nu, \nu' \rangle_{\mathcal{H}} - \langle \mathcal{P}_{d}
    \nu, \mathcal{P}_{d} \nu' \rangle_{\mathcal{H}} &= \frac{1}{4} \|
    (\nu + \nu') - \mathcal{P}_{d}(\nu + \nu')\|_{\mathcal{H}}^{2} \\
    &- \frac{1}{4}
    \|(\nu - \nu') - \mathcal{P}_{d}(\nu - \nu')\|_{\mathcal{H}}^{2}
    \end{split}
  \end{equation}
  Eq.~\eqref{eq:31} then implies
  \begin{equation*}
    \label{eq:33}
    \begin{split}
    |\langle \nu, \nu' \rangle_{\mathcal{H}} - \langle \mathcal{P}_{d}
    \nu, \mathcal{P}_{d} \nu' \rangle_{\mathcal{H}}| &\leq \frac{1}{4}
    \sup_{\hat{\mathcal{P}}_{d}
      \mathcal{H}}\{\| \tilde{\nu} -
    \mathcal{P}_{d}\tilde{\nu}\|_{\mathcal{H}}^{2}\} \\ 
    & \leq \frac{1}{4} \sup_{\mathcal{H}} \| \hat{\mathcal{P}}_{d}
    \tilde{\zeta} - \mathcal{P}_{d} \hat{\mathcal{P}}_{d}
    \tilde{\zeta} \|_{\mathcal{H}}^{2} \\
    & \leq \frac{1}{4} \sup_{\mathcal{H}} \| (\hat{\mathcal{P}}_{d} - \mathcal{P}_{d})
    \hat{\mathcal{P}}_{d} \tilde{\zeta} \|_{\mathcal{H}}^{2} \\
    & \leq \frac{1}{4} \| \hat{\mathcal{P}}_{d} - \mathcal{P}_{d}
    \|_{HS(\mathcal{H})}^{2} \|\hat{\mathcal{P}}_{d} \tilde{\zeta}\|_{\mathcal{H}}^{2}
    \end{split}
   \end{equation*}
   Therefore, by Eq.~\eqref{eq:32} in Proposition~\ref{prop:5}, we have for $n$ satisfying
   $4\sqrt{2}n^{-1}\sqrt{\log{(1/\eta)}} < \delta_{d}$, that with probability at least $1 - 2\eta$. 
\begin{equation}
  \label{eq:34}
  |\langle \nu, \nu' \rangle_{\mathcal{H}} - \langle \mathcal{P}_{d}
  \nu, \mathcal{P}_{d} \nu' \rangle_{\mathcal{H}}| \leq
  \frac{2 \log(1/\eta)}{\delta_{d}^{2} n}
\end{equation} holds for all $\nu, \nu' \in \hat{\mathcal{P}}_{d}
\mathcal{H}$. Suppose that Eq.~\eqref{eq:34} holds. Thus, the
projection $\mathcal{P}_{d}$ when restricted to $\hat{\mathcal{P}}_{d}
\mathcal{H}$ is almost an isometry from $\hat{\mathcal{P}}_{d}
\mathcal{H}$ to $\mathcal{P}_{d} \mathcal{H}$. There thus exists a
unique isometry $\tilde{\imath}$ from $\hat{\mathcal{P}}_{d} \mathcal{H}$ to
$\mathcal{P}_{d} \mathcal{H}$ such that
\begin{equation}
  \label{eq:35}
  \| \tilde{\imath}(\nu) - \mathcal{P}_{d} \nu \|_{\mathcal{H}} \leq 3
  \frac{\sqrt{2 \log(1/\eta)}}{\delta_{d} \sqrt{n}}
\end{equation}
holds for all $\nu \in \hat{\mathcal{P}}_{d} \mathcal{H}$ (Theorem 1
in \cite{chmielinski02:_almos}). Now let $\xi \in
\mathcal{P}_{d} \mathcal{H}$ be arbitrary. Then $\xi = \mathcal{P}_{d}
\zeta$ for some $\zeta \in \mathcal{H}$. Let $\nu =
\hat{\mathcal{P}}_{d} \zeta$. We then have
\begin{equation}
  \label{eq:37}
  \begin{split}
  \| \tilde{\imath}(\nu) - \xi \|_{\mathcal{H}} &\leq \| \tilde{\imath}(\nu) -
  \mathcal{P}_{d} \nu \|_{\mathcal{H}} + \| \mathcal{P}_{d} \nu - \xi
  \|_{\mathcal{H}} \\ & \leq
    \| \tilde{\imath}(\nu) -
  \mathcal{P}_{d} \nu \|_{\mathcal{H}} + \| \mathcal{P}_{d}
  \hat{\mathcal{P}}_{d} \zeta - \mathcal{P}_{d} \zeta \| 
  _{\mathcal{H}}  \\
  &\leq 3 \frac{\sqrt{2 \log(1/\eta)}}{\delta_{d} \sqrt{n}} +
  \|\hat{\mathcal{P}}_{d} - \mathcal{P}_{d} \|_{HS(\mathcal{H})} \|\zeta\|_{\mathcal{H}}
  \end{split}
\end{equation}
By Proposition~\ref{prop:5}, the right hand side of
Eq.~\eqref{eq:37} can be bounded to give 
\begin{equation}
  \label{eq:38}
 \|\tilde{\imath}(\nu) - \xi \| \leq C
 \frac{\sqrt{\log{(1/\eta)}}}{\delta_{d} \sqrt{n}}
\end{equation}
for some constant $C$. Thus, for any $\xi \in \mathcal{P}_{d}
\mathcal{H}$, there exists a $\nu \in \hat{\mathcal{P}}_{d}
\mathcal{H}$ with $\| \tilde{\imath}(\nu) - \xi \|_{\mathcal{H}} < C
\delta_{d}^{-1} n^{-1/2} \sqrt{\log(1/\eta)}$, i.e., $\tilde{\imath}$ is $ C
\delta_{d}^{-1} n^{-1/2} \sqrt{\log(1/\eta)}$-surjective. Thus $\tilde{\imath}$ is an
isometric isomorphism from $\hat{\mathcal{P}}_{d} \mathcal{H}$ into
$\mathcal{P}_{d} \mathcal{H}$ (Proposition 1 in
\cite{chmielinski97:_inner}). 

To complete the proof, we note that $(\tilde{\imath} \circ
\hat{\imath}^{-1}) (\tilde{\mathbf{Z}}^{\dagger} \bm{p}_{X})$ is the
image of the isometric isomorphism taking 
$\tilde{\mathbf{Z}}^{\dagger} \bm{p}_{X}$ in  $\mathbb{R}^{d}$ to some
$\nu \in \mathcal{P}_{d} \mathcal{H}$. By our previous reasoning, we
have that $(\tilde{\imath} \circ
\hat{\imath}^{-1}) (\tilde{\mathbf{Z}}^{\dagger} \bm{p}_{X})$ is
``close'' to $\mathcal{P}_{d} \hat{\mathcal{P}}_{d} \kappa(\cdot,
X)$ which is ``close'' to $\mathcal{P}_{d} \mathcal{P}_{d}
\kappa(\cdot, X) = \mathcal{P}_{d} \kappa(\cdot, X)$. Formally, let
$\varphi = \imath \circ \tilde{\imath} \circ \hat{\imath}^{-1}$. We then
have
\begin{equation}
  \label{eq:39}
  \begin{split}
  \| \varphi (\tilde{\mathbf{Z}}^{\dagger} \bm{p}_{X}) - \Phi_{d}(X)
  \|_{\mathbb{R}^{d}} &= \| \tilde{\imath} \circ \hat{\imath}^{-1}
  (\tilde{\mathbf{Z}}^{\dagger} \bm{p}_{X}) - \imath^{-1}(\Phi_{d}(X))
  \|_{\mathcal{H}} \\ 
  &= \| \tilde{\imath}(\hat{\mathcal{P}}_{d} \kappa(\cdot, X)) - \mathcal{P}_{d}
  \kappa(\cdot, X)\|_{\mathcal{H}} \\
  &\leq \| \tilde{\imath}(\hat{\mathcal{P}}_{d} \kappa(\cdot, X)) -
  \mathcal{P}_{d} \hat{\mathcal{P}}_{d} \kappa(\cdot,X)
  \|_{\mathcal{H}} +
  \|\mathcal{P}_{d} \hat{\mathcal{P}}_{d} \kappa(\cdot,X)
   - \mathcal{P}_{d} \kappa(\cdot, X)
  \|_{\mathcal{H}} \\
  & \leq C \frac{\sqrt{\log{(1/\eta)}}}{\delta_{d} \sqrt{n}}.
  \end{split}
\end{equation}
As $\varphi$ is a composition of isometric isomorphisms between finite-dimensional Hilbert spaces, it is an isometric isomorphism from
$\mathbb{R}^{d}$ to $\mathbb{R}^{d}$, i.e., $\varphi$ correspond to an
orthogonal matrix $\mathbf{W}$, as desired.
\end{proof}
 We now proceed to complete the proof of Theorem~\ref{thm:2}.
\begin{proof}[Theorem~\ref{thm:2}]
 Let $\mathbf{W}, \mathbf{W}_{1}, \mathbf{W}_{2} \in
  \mathcal{M}_{d}$ be orthogonal matrices with $\mathbf{W}_{2}
  \mathbf{W}_{1} = \mathbf{W}$. Recall that $\mathbb{E}[\bm{\xi}] =
  (\rho_{n} \kappa(X, X_i))_{i=1}^{n}$. 
  We then have
  \begin{equation*}
    \begin{split}
    \| \rho_{n}^{-1/2} \mathbf{W} T_n(X) - \Phi_{d}(X) \| & 
     \leq \| \rho_{n}^{-1/2}(\mathbf{W}_{2} \mathbf{W}_{1} \mathbf{Z}^{\dagger}
    - \mathbf{W}_{2} \tilde{\mathbf{Z}}^{\dagger}) \bm{\xi} \| + \|
    \rho_{n}^{-1/2}\mathbf{W}_{2} \tilde{\mathbf{Z}}^{\dagger}
    (\bm{\xi} - \mathbb{E}[\bm{\xi}]) \| \\ & + \| \rho_{n}^{-1/2} \mathbf{W}_{2} \tilde{\mathbf{Z}}^{\dagger} \mathbb{E}[\bm{\xi}] -
    \Phi_{d}(X) \| 
  \end{split}
  \end{equation*}
  The first term in the right hand side of the above can be bounded by
  Lemma~\ref{lem:1}, i.e., there exists an orthogonal $\mathbf{W}_{1}$
  such that
    \begin{equation}
      \label{eq:9}
      \| \rho_{n}^{-1/2}(\mathbf{W}_{2} \mathbf{W}_{1} \mathbf{Z}^{\dagger}
    - \mathbf{W}_{2} \tilde{\mathbf{Z}}^{\dagger}) \bm{\xi} \| 
    \leq C \delta_{d}^{-3} \sqrt{\frac{
        \log{(n/\eta)}}{n \rho_n}}
    \end{equation}
    with probability at least $1 - 2\eta$. The second term
    $\|\rho_{n}^{-1/2} \mathbf{W}_{2}
    \tilde{\mathbf{Z}}^{\dagger}(\bm{\xi} - \mathbb{E}[\bm{\xi}])\|$
    can be bounded by Theorem~\ref{thm:4}, i.e.,
   \begin{equation}
      \label{eq:12}
     \|\rho_{n}^{-1/2} \mathbf{W}_{2}
    \tilde{\mathbf{Z}}^{\dagger}(\bm{\xi} - \mathbb{E}[\bm{\xi}])\|
 < C \delta_{d}^{-3} \sqrt{\frac{
        \log{(n/\eta)}}{n \rho_n}}
    \end{equation}
    Finally, the third term is
    bounded by Lemma~\ref{lem:3}. Thus, with probability at least $1 -
    \eta$, there exists some unitary $\mathbf{W}$ such that
   \begin{equation}
      \label{eq:19}
      \| \rho_{n}^{-1/2} \mathbf{W} T_n(X) -
    \Phi_{d}(X) \| \leq C \delta_{d}^{-3}
    \sqrt{\frac{\log{(n/\eta)}}{n \rho_n}} 
    \end{equation}
    as desired.
\end{proof}
\bibliography{oos}

\begin{thebibliography}{54}
\providecommand{\natexlab}[1]{#1}
\providecommand{\url}[1]{\texttt{#1}}
\expandafter\ifx\csname urlstyle\endcsname\relax
  \providecommand{\doi}[1]{doi: #1}\else
  \providecommand{\doi}{doi: \begingroup \urlstyle{rm}\Url}\fi

\bibitem[Airoldi et~al.(2008)Airoldi, Blei, Fienberg, and Xing]{Airoldi2008}
E.~M. Airoldi, D.~M. Blei, S.~E. Fienberg, and E.~P. Xing.
\newblock {Mixed membership stochastic blockmodels}.
\newblock \emph{Journal of Machine Learning Research}, 9:\penalty0 1981--2014,
  2008.

\bibitem[Anderson and Robinson(2003)]{anderson03:_gener}
M.-J. Anderson and J.~Robinson.
\newblock Generalized discriminant analysis based on distances.
\newblock \emph{Australian \& New Zealand Journal of Statistics}, 45:\penalty0
  301--318, 2003.

\bibitem[Bache and Lichman(2013)]{bache2013}
K.~Bache and M.~Lichman.
\newblock {UCI} machine learning repository, 2013.
\newblock \url{http://archive.ics.uci.edu/ml}.

\bibitem[Bartlett et~al.(2006)Bartlett, Jordan, and
  McAuliffe]{bartlett06:_convex}
P.~L. Bartlett, M.~I. Jordan, and J.~D. McAuliffe.
\newblock Convexity, classification, and risk bounds.
\newblock \emph{Journal of the American Statistical Association}, 101:\penalty0
  138--156, 2006.

\bibitem[Belkin and Niyogi(2003)]{belkin03:_laplac}
M.~Belkin and P.~Niyogi.
\newblock Laplacian eigenmaps for dimensionality reduction and data
  representation.
\newblock \emph{Neural Computation}, 15:\penalty0 1373--1396, 2003.

\bibitem[Belkin and Niyogi(2005)]{belkin05:_towar_laplac}
M.~Belkin and P.~Niyogi.
\newblock Towards a theoretical foundation for {L}aplacian-based manifold
  methods.
\newblock In \emph{Proceedings of the 18th conference on learning theory},
  pages 486--500, 2005.

\bibitem[Bengio et~al.(2004)Bengio, Paiement, Vincent, Delalleau, Roux, and
  Ouimet]{bengio04:_out_lle_isomap_mds_eigen}
Y.~Bengio, J.-F. Paiement, P.~Vincent, O.~Delalleau, N.~Le Roux, and M.~Ouimet.
\newblock Out-of-sample extensions for {LLE}, {I}somap, {MDS}, {E}igenmaps and
  spectral clustering.
\newblock \emph{Advances in Neural Information Processing Systems},
  16:\penalty0 177--184, 2004.

\bibitem[Bhatia(1997)]{bhatia97:_matrix_analy}
R.~Bhatia.
\newblock \emph{Matrix Analysis}.
\newblock Springer, 1997.

\bibitem[Blanchard et~al.(2007)Blanchard, Bousquet, and
  Zwald]{blanchard07:_statis}
G.~Blanchard, O.~Bousquet, and L.~Zwald.
\newblock Statistical properties of kernel principal component analysis.
\newblock \emph{Machine Learning}, 66:\penalty0 259--294, 2007.

\bibitem[Bollob\'{a}s et~al.(2007)Bollob\'{a}s, Janson, and
  Riordan]{bollobas07}
B.~Bollob\'{a}s, S.~Janson, and O.~Riordan.
\newblock The phase transition in inhomogeneous random graphs.
\newblock \emph{Random Structures and Algorithms}, 31:\penalty0 3--122, 2007.

\bibitem[Chaudhuri et~al.(2012)Chaudhuri, Chung, and
  Tsiatas]{chaudhuri12:_spect}
K.~Chaudhuri, F.~Chung, and A.~Tsiatas.
\newblock Spectral partitioning of graphs with general degrees and the extended
  planted partition model.
\newblock In \emph{Proceedings of the 25th conference on learning theory},
  2012.

\bibitem[Chmiel\'{i}nski(1997)]{chmielinski97:_inner}
J.~Chmiel\'{i}nski.
\newblock \emph{Inner product spaces and applications}, chapter On the
  stability of isometric operators for Hilbert spaces, pages 15--27.
\newblock Addison Wesley, 1997.

\bibitem[Chmiel\'{i}nski(2000)]{chmielinski02:_almos}
J.~Chmiel\'{i}nski.
\newblock Almost approximately inner product preserving mappings.
\newblock \emph{Aequationes Mathematicae}, 59:\penalty0 214--221, 2000.

\bibitem[Clarkson and Woodruff(2013)]{clarkson13:_low}
K.~Clarkson and D.~P. Woodruff.
\newblock Low rank approximation and regression in input sparsity time.
\newblock In \emph{Proceedings of the 45th ACM Symposium on the Theory of
  Computing}, 2013.

\bibitem[Coifman and Lafon(2006)]{coifman06:_diffus_maps}
R.~Coifman and S.~Lafon.
\newblock Diffusion maps.
\newblock \emph{Applied and Computational Harmonic Analysis}, 21:\penalty0
  5--30, 2006.

\bibitem[Cucker and Smale(2002)]{cucker12}
F.~Cucker and S.~Smale.
\newblock On the mathematical foundations of learning.
\newblock \emph{Bulletin of the American Mathematical Society}, 39:\penalty0
  1--49, 2002.

\bibitem[Davis and Kahan(1970)]{davis70}
C.~Davis and W.~Kahan.
\newblock The rotation of eigenvectors by a pertubation. {III}.
\newblock \emph{Siam Journal on Numerical Analysis}, 7:\penalty0 1--46, 1970.

\bibitem[de~Silva and Tenenbaum(2003)]{silva03:_global}
V.~de~Silva and J.~B. Tenenbaum.
\newblock Global versus local methods in nonlinear dimensionality reduction.
\newblock \emph{Advances in Neural Informaiton Processing Systems},
  15:\penalty0 721--728, 2003.

\bibitem[Devroye et~al.(1996)Devroye, Gy{\"o}rfi, and
  Lugosi]{devroye1996probabilistic}
L.~Devroye, L.~Gy{\"o}rfi, and G.~Lugosi.
\newblock \emph{A probabilistic theory of pattern recognition}.
\newblock Springer Verlag, 1996.

\bibitem[Diaconis and
  Janson(2008)]{diaconis08:_graph_limit_exchan_random_graph}
P.~Diaconis and S.~Janson.
\newblock Graph limits and exchangeable random graphs.
\newblock \emph{Rendiconti di Matematica, Serie VII}, 28:\penalty0 33--61,
  2008.

\bibitem[Drineas and Mahoney(2005)]{drineas05:_nystr_gram}
P.~Drineas and M.~Mahoney.
\newblock On the {N}ystr\"{o}m method for approximating a {G}ram matrix for
  improved kernel-based learning.
\newblock \emph{Journal of Machine Learning Research}, 6:\penalty0 2153--2175,
  2005.

\bibitem[Drineas et~al.(2012)Drineas, Magdon-Ismail, Mahoney, and
  Woodruff]{drineas12:_fast}
P.~Drineas, M.~Magdon-Ismail, M.~W. Mahoney, and D.~P. Woodruff.
\newblock Fast approximation of matrix coherence and statistical leverage.
\newblock \emph{Journal of Machine Learning Research}, 13:\penalty0 3475--3506,
  2012.

\bibitem[Faloutsos and Lin(1995)]{faloutsos95}
C.~Faloutsos and K.~Lin.
\newblock A fast algorithm for indexing, data-mining, and visualization.
\newblock In \emph{Proceedings of the ACM SIGMOD International Conference on
  Management of Data}, pages 163--174, 1995.

\bibitem[Fishkind et~al.(2013)Fishkind, Sussman, Tang, Vogelstein, and
  Priebe]{fishkind2012consistent}
D.~E. Fishkind, D.~L. Sussman, M.~Tang, J.T. Vogelstein, and C.E. Priebe.
\newblock Consistent adjacency-spectral partitioning for the stochastic block
  model when the model parameters are unknown.
\newblock \emph{Siam Journal on Matrix Analysis and Applications}, 34:\penalty0
  23--39, 2013.

\bibitem[Fraley and Raftery(1999)]{fraley99:_mclus}
C.~Fraley and A.~E. Raftery.
\newblock {MCLUST}: {S}oftware for model-based cluster analysis.
\newblock \emph{Journal of Classification}, 16:\penalty0 297--306, 1999.

\bibitem[Gittens and Mahoney(2013)]{gittens:_revis_nystr}
A.~Gittens and M.~Mahoney.
\newblock Revisiting the {N}ystr\"{o}m method for improved large-scale machine
  learning.
\newblock Arxiv preprint. \url{http://arxiv.org/abs/1303.1849}, 2013.

\bibitem[Hein et~al.(2005)Hein, Audibert, and von Luxburg]{hein5:_from_laplac}
M.~Hein, J.-Y. Audibert, and U.~von Luxburg.
\newblock From graphs to manifold - weak and strong pointwise consistency of
  graph {L}aplacians.
\newblock In \emph{Proceedings of the 18th conference on learning theory},
  pages 470--485, 2005.

\bibitem[Hein et~al.(2007)Hein, Audibert, and von
  Luxburg]{hein07:_conver_laplac}
M.~Hein, J.-Y. Audibert, and U.~von Luxburg.
\newblock Convergence of graph {L}aplacians on random neighbourhood graphs.
\newblock \emph{Journal of Machine Learning Research}, 8:\penalty0 1325--1370,
  2007.

\bibitem[Hoff et~al.(2002)Hoff, Raftery, and Handcock]{Hoff2002}
P.~D. Hoff, A.~E. Raftery, and M.~S. Handcock.
\newblock {Latent space approaches to social network analysis}.
\newblock \emph{Journal of the American Statistical Association}, 97\penalty0
  (460):\penalty0 1090--1098, 2002.

\bibitem[Holland et~al.(1983)Holland, Laskey, and Leinhardt]{Holland1983}
P.~W. Holland, K.~Laskey, and S.~Leinhardt.
\newblock {Stochastic blockmodels: First steps}.
\newblock \emph{Social Networks}, 5\penalty0 (2):\penalty0 109--137, 1983.

\bibitem[Karrer and Newman(2011)]{karrer11:_stoch}
B.~Karrer and M.~E.~J. Newman.
\newblock Stochastic blockmodels and community structure in networks.
\newblock \emph{Physical Review E}, 83, 2011.

\bibitem[Micchelli et~al.(2006)Micchelli, Xu, and Zhang]{micchelli06:_univer}
C.~A. Micchelli, Y.~Xu, and H.~Zhang.
\newblock Universal kernels.
\newblock \emph{Journal of Machine Learning Research}, 7:\penalty0 2651--2667,
  2006.

\bibitem[Oliveira(2010)]{oliveira2010concentration}
R.~I. Oliveira.
\newblock Concentration of the adjacency matrix and of the {L}aplacian in
  random graphs with independent edges.
\newblock Arxiv preprint. \url{http://arxiv.org/abs/0911.0600}, 2010.

\bibitem[Platt(2005)]{platt05:_fastm_metric_mds_nyst}
J.~C. Platt.
\newblock Fastmap, {M}etric{M}ap, and landmark {MDS} are all {N}ystr\"{o}m
  algorithms.
\newblock In \emph{Proceedings of the International Workshop on Artificial
  Intelligence and Statistics}, pages 261--268, 2005.

\bibitem[Rohe et~al.(2011)Rohe, Chatterjee, and Yu]{rohe2011spectral}
K.~Rohe, S.~Chatterjee, and B.~Yu.
\newblock Spectral clustering and the high-dimensional stochastic blockmodel.
\newblock \emph{Annals of Statistics}, 39\penalty0 (4):\penalty0 1878--1915,
  2011.

\bibitem[Rosasco et~al.(2010)Rosasco, Belkin, and
  Vito]{rosasco10:_integ_operat}
L.~Rosasco, M.~Belkin, and E.~De Vito.
\newblock On learning with integral operators.
\newblock \emph{Journal of Machine Learning Research}, 11:\penalty0 905--934,
  2010.

\bibitem[Singer(2006)]{singer06:_from_laplac}
A.~Singer.
\newblock From graph to manifold {L}aplacian: The convergence rate.
\newblock \emph{Applied and Computational Harmonic Analysis}, 21:\penalty0
  128--134, 2006.

\bibitem[Steinwart(2001)]{steinwart01:_suppor_vector_machin}
I.~Steinwart.
\newblock On the influence of the kernel on the consistency of support vector
  machines.
\newblock \emph{Journal of Machine Learning Research}, 2:\penalty0 67--93,
  2001.

\bibitem[Sussman et~al.(2012{\natexlab{a}})Sussman, Tang, Fishkind, and
  Priebe]{sussman12}
D.~L. Sussman, M.~Tang, D.~E. Fishkind, and C.~E. Priebe.
\newblock A consistent adjacency spectral embedding for stochastic blockmodel
  graphs.
\newblock \emph{Journal of the American Statistical Association}, 107:\penalty0
  1119--1128, 2012{\natexlab{a}}.

\bibitem[Sussman et~al.(2012{\natexlab{b}})Sussman, Tang, and
  Priebe]{sussman12:_univer}
D.~L. Sussman, M.~Tang, and C.~E. Priebe.
\newblock Universally consistent latent position estimation and vertex
  classification for random dot product graphs.
\newblock Arxiv preprint. \url{http://arxiv.org/abs/1207.6745},
  2012{\natexlab{b}}.

\bibitem[Tang et~al.(2013)Tang, Sussman, and Priebe]{tangs.:_univer}
M.~Tang, D.~L. Sussman, and C.~E. Priebe.
\newblock Universally consistent vertex classification for latent position
  graphs.
\newblock \emph{Annals of Statistics}, 2013.
\newblock In press.

\bibitem[Tenenbaum et~al.(2000)Tenenbaum, de~Silva, and
  Langford]{tenebaum00:_global_geomet_framew_nonlin_dimen_reduc}
J.~B. Tenenbaum, V.~de~Silva, and J.~C. Langford.
\newblock A global geometric framework for nonlinear dimensionality reduction.
\newblock \emph{Science}, 290:\penalty0 2319--2323, 2000.

\bibitem[Torgerson(1952)]{torgersen52:_multid}
W.~S. Torgerson.
\newblock Multidimensional scaling: {I}. {T}heory and method.
\newblock \emph{Psychometrika}, 17:\penalty0 401--419, 1952.

\bibitem[Tropp(2012)]{tropp12:_user}
J.~A. Tropp.
\newblock User-friendly tail bounds for sums of random matrices.
\newblock \emph{Foundations of Computational Mathematics}, 12:\penalty0
  389--434, 2012.

\bibitem[Trosset and Priebe(2008)]{trosset08}
M.~W. Trosset and C.~E. Priebe.
\newblock The out-of-sample problem for classical multidimensional scaling.
\newblock \emph{Computational Statistics and Data Analysis}, 52:\penalty0
  4635--4642, 2008.

\bibitem[von Luxburg(2007)]{luxburg07}
U.~von Luxburg.
\newblock A tutorial on spectral clustering.
\newblock \emph{Statistics and Computing}, 17:\penalty0 395--416, 2007.

\bibitem[von Luxburg et~al.(2008)von Luxburg, Belkin, and
  Bousquet]{luxburg08:_consis}
U.~von Luxburg, M.~Belkin, and O.~Bousquet.
\newblock Consistency of spectral clustering.
\newblock \emph{Annals of Statistics}, 36:\penalty0 555--586, 2008.

\bibitem[Wang et~al.(1999)Wang, Wang, Lin, Shasha, Shapiro, and
  Zhang]{wang99:_evaluat}
J.~T.-L. Wang, X.~Wang, K.-I. Lin, D.~Shasha, B.~A. Shapiro, and K.~Zhang.
\newblock Evaluating a class of distance-mapping algorithms for data mining and
  clustering.
\newblock In \emph{Proceedings of the ACM SIGKDD International Conference on
  Knowledge Discovery and Data Mining}, pages 307--311, 1999.

\bibitem[Waugh(1995)]{waugh95:_exten_cascad_correl}
S.~Waugh.
\newblock \emph{Extending and benchmarking {Cascade}-{Correlation}}.
\newblock PhD thesis, Computer Science Department, University of Tasmania,
  1995.

\bibitem[Wedin(1973)]{wedin73:_pertub_pseud}
P.-A. Wedin.
\newblock Pertubation theory for pseudo-inverses.
\newblock \emph{BIT Numerical Mathematics}, 13:\penalty0 219--232, 1973.

\bibitem[Williams and Seeger(2001)]{williams01:_using_nystr}
C.~Williams and M.~Seeger.
\newblock Using the {N}ystr\"{o}m method to speed up kernel machines.
\newblock \emph{Advances in Neural Information Processing Systems},
  13:\penalty0 682--688, 2001.

\bibitem[Young and Scheinerman(2007)]{young2007random}
S.~Young and E.~Scheinerman.
\newblock Random dot product graph models for social networks.
\newblock In \emph{Proceedings of the 5th international conference on
  algorithms and models for the web-graph}, pages 138--149, 2007.

\bibitem[Yurinsky(1995)]{yurinsky95:_sums_gauss}
V.~Yurinsky.
\newblock \emph{Sums and {G}aussian vectors}.
\newblock Springer-Verlag, 1995.

\bibitem[Zwald and Blanchard(2006)]{zwald06}
L.~Zwald and G.~Blanchard.
\newblock On the convergence of eigenspaces in kernel principal components
  analysis.
\newblock \emph{Advances in Neural Information Processing Systems},
  18:\penalty0 1649--1656, 2006.

\end{thebibliography}
\end{document}